\documentclass{article}

\PassOptionsToPackage{numbers,compress}{natbib}

\usepackage[preprint]{neurips_2026}

\usepackage[utf8]{inputenc} 
\usepackage[T1]{fontenc}    
\usepackage{hyperref}       
\usepackage{bookmark}       
\usepackage{url}            
\usepackage{booktabs}       
\usepackage{graphicx}       
\usepackage{multirow}       
\usepackage{amsmath}        
\usepackage{amssymb}        
\usepackage{amsfonts}       
\usepackage{amsthm}         
\usepackage{mathtools}      
\usepackage{bm}             
\usepackage{algorithm}      
\usepackage{algpseudocode}  
\usepackage{array}          
\usepackage{siunitx}        
\usepackage{enumitem}       
\usepackage{nicefrac}       
\usepackage{microtype}      
\usepackage{capt-of}        
\usepackage{float}          
\usepackage{wrapfig}        
\usepackage{xcolor}         
\usepackage{placeins}       

\newtheorem{proposition}{Proposition}[section]
\newtheorem{corollary}[proposition]{Corollary}
\theoremstyle{definition}
\newtheorem{definition}[proposition]{Definition}
\theoremstyle{remark}
\newtheorem{remark}[proposition]{Remark}

\newcommand{\R}{\mathbb{R}}
\newcommand{\Z}{\mathbb{Z}}
\newcommand{\N}{\mathbb{N}}
\newcommand{\Unif}{\mathcal{U}}
\DeclareMathOperator{\diag}{diag}

\title{Spectral Transformer Neural Processes}

\author{%
  Xianhe Chen\thanks{Equal contribution.} \\
  University of Cambridge \\
  \And
  Hao Chen\footnotemark[1] \\
  Tencent \\
  \And
  Yingzhen Li\thanks{Corresponding author.} \\
  Imperial College London \\
}

\begin{document}

\maketitle

\begin{abstract}
  Time series, spatial data, and images are natural applications of Neural
  Processes. However, when such data exhibit strong \textit{periodicity} and
  \textit{quasi-periodicity}, existing methods often suffer from
  underfitting and generalise poorly beyond the training distribution. In this
  work, we propose \textit{Spectral Transformer Neural Processes} (STNPs), a
  frequency-aware extension of Transformer Neural Processes (TNPs). STNPs
  introduce a \textit{Spectral Aggregator} that estimates an empirical context spectrum,
  compresses it into a spectral mixture, samples task-adaptive \textit{spectral
  features}, and concatenates them with time-domain embeddings, thereby injecting a
  spectral-mixture-kernel bias into TNPs. This design reshapes the
  similarity geometry, allowing inputs that are distant in Euclidean space to
  remain close in an induced periodic manifold while enhancing time--frequency interactions. Extensive experiments on
  synthetic regression tasks, real-world time-series datasets, and an image
  dataset demonstrate that STNPs consistently improve predictive performance
  over existing baselines, extending Neural Processes beyond translation
  equivariance towards effective modelling of periodicity and
  quasi-periodicity.
\end{abstract}

\section{Introduction}

Neural Processes (NPs)~\cite{Garnelo2018ConditionalNP,Garnelo2018NeuralP}
map a context set to a predictive stochastic process, enabling data-efficient
learning with uncertainty quantification. Yet in many applications, such as
time series, spatial data, and images, the underlying data exhibit
increasingly challenging structures: \textit{translation equivariance}, \textit{periodicity}, and
\textit{quasi-periodicity}. These structures respectively require consistent responses to input
shifts, recognition of repeating patterns across scales or frequencies, and recognition of
approximately repeating patterns with drifting period, phase, amplitude, or shape.

When data exhibit periodicity or quasi-periodicity, most existing NP models
suffer a substantial drop in performance and may even collapse to mean
predictions. Prior work has mainly improved translation equivariance: ConvCNPs
\cite{Gordon2019ConvolutionalCN} use convolutional neural networks, while
TETNPs \cite{Ashman2024TranslationET} incorporate relative positional offsets
\(\Delta x\) into TNP attention \cite{Nguyen2022TransformerNP}. Although
effective for translational structure, these methods do not explicitly capture
periodic or quasi-periodic patterns. We argue that the bottleneck 
lies in the representation geometry through which context information is aggregated. In TNPs, 
target-to-context attention compares inputs in learned time-domain embedding and hidden spaces. 
If these embeddings do not encode frequency-domain structure, distant but phase-aligned observations 
remain difficult to relate. Standard ReLU-MLP embeddings do not 
encode periodicity by construction. This motivates a context-conditioned spectral 
representation that exposes task-adaptive frequency information to the downstream neural process.

We propose STNPs, which augment TNPs with
task-adaptive \textit{spectral features} inferred from the context set.
Motivated by Bochner's theorem \cite{Cooper1957HarmonicAA},
STNPs use a \textit{Spectral Aggregator} to project context points into
the frequency domain and summarise their energy spectrum. The resulting
high-dimensional discrete spectrum is compressed and sampled into compact
spectral features, which are then concatenated with the original embeddings.
This allows attention to compare inputs in an augmented time--frequency
representation. From a Gaussian-process perspective, STNPs amortise a
task-specific spectral mixture (SM) kernel \cite{Wilson2013GaussianPK} from the context set: the Spectral
Aggregator infers frequency-domain structure, while the query--key dot product
converts the resulting spectral features into a covariance-like similarity
within attention. This implicitly equips TNPs with an adaptive spectral
  mixture kernel, combining the multi-frequency stationary and quasi-periodic inductive biases of spectral GPs with the
flexibility of TNPs. Our key contributions are summarised as follows:
\begin{itemize}[leftmargin=1.4em,itemsep=2pt,topsep=2pt]
\item We propose STNPs, which
augment TNPs with a Spectral Aggregator that extracts contextual
information in the frequency domain with a compressive feature-sampling mechanism to transform this information into spectral features.
\item We show that the proposed spectral features are conditional translation equivariant and that their induced inner product is conditional translation invariant. In
expectation, this inner product defines an SM kernel. This
implicitly integrates an SM kernel structure into TNPs without modifying the
attention mechanism.
\item We demonstrate that STNPs outperform existing baselines on synthetic
regression tasks, forecasting and imputation tasks across seven real-world
time-series datasets, and one image-completion task, showing their effectiveness
on problems with periodicity and quasi-periodicity.
\end{itemize}

\section{Background and Formal Problem Statement}

We work in the standard episodic meta-learning setting. Let
\(\mathcal{X}=\mathbb{R}^{d_x}\) and \(\mathcal{Y}=\mathbb{R}^{d_y}\) denote
the input and output spaces, and let \(\mathcal{F}\) be a family of functions
\(f:\mathcal{X}\to\mathcal{Y}\). We assume a distribution \(p(f)\) over tasks,
equivalently over functions in \(\mathcal{F}\). For a sampled task \(f\), we
observe a finite dataset $\mathcal{D}=\{(\mathbf{x}_i,\mathbf{y}_i)\}_{i=1}^{N}$, where all input--output pairs are generated by the same underlying function.
Meta-training proceeds by sampling many such episodes from different functions
drawn from the shared task distribution. The goal is to learn a predictor that
can infer the behaviour of a new task from only a small set of observed context
points and then generalise to unseen target inputs from that same task. For a given episode, the dataset \(\mathcal{D}\) is split into a context set
\(\mathcal{D}_C\) and a target set \(\mathcal{D}_T\):
\[
\mathcal{D}_C=\{(\mathbf{x}_i,\mathbf{y}_i)\}_{i=1}^{M},
\qquad
\mathcal{D}_T=\{(\mathbf{x}_i,\mathbf{y}_i)\}_{i=M+1}^{N},
\qquad
M\leq N.
\]
We use \((\mathbf{x}_C,\mathbf{y}_C)\) and \((\mathbf{x}_T,\mathbf{y}_T)\) to
denote the inputs and outputs contained in \(\mathcal{D}_C\) and
\(\mathcal{D}_T\), respectively, and write a single episode as
\((\mathcal{D}_C,\mathcal{D}_T)\). The meta-learning objective is to optimise
the model parameters \(\theta\) so that the conditional predictor assigns high
probability to the target outputs given the context set
\(
\max_{\theta}\;
\mathbb{E}_{f\sim p(f)}
\left[
\mathbb{E}_{(\mathcal{D}_C,\mathcal{D}_T)\sim p(\cdot\mid f)}
\log p_{\theta}(\mathbf{y}_T\mid \mathbf{x}_T,\mathcal{D}_C)
\right].
\)
At meta-test time, the learned model is evaluated on unseen functions drawn
from the same task distribution, using only the context observations available
within each new episode.

\subsection{Neural Processes and Transformer Neural Processes}

Neural Processes (NPs) model the conditional distribution over target outputs
\(\mathbf{y}_T\) given target inputs \(\mathbf{x}_T\) and a context set
\(\mathcal{D}_C\), i.e.,
\(p_\theta(\mathbf{y}_T \mid \mathbf{x}_T, \mathcal{D}_C)\).
In deterministic NP and CNP variants, the context is encoded into a
permutation-invariant global representation
\(
r = \mathrm{Agg}\{\phi(\mathbf{x}_i,\mathbf{y}_i)\}_{i=1}^M,
\)
where \(\phi\) is a pointwise encoder and \(\mathrm{Agg}\) is typically
mean or sum pooling. The predictive distribution factorises as
\(
p_\theta(\mathbf{y}_T \mid \mathbf{x}_T, \mathcal{D}_C)
= \prod_{i=M+1}^{N} p_\theta(\mathbf{y}_i \mid \mathbf{x}_i, r).
\)
Attention-based variants replace the shared summary \(r\) with
target-dependent representations \(r_i=r(\mathbf{x}_i,\mathcal{D}_C)\), yielding
\(
p_\theta(\mathbf{y}_T \mid \mathbf{x}_T, \mathcal{D}_C)
= \prod_{i=M+1}^{N} p_\theta(\mathbf{y}_i \mid \mathbf{x}_i, r_i).
\)
Transformer Neural Processes (TNPs) instantiate \(r_i\) using a masked
Transformer encoder~\cite{Vaswani2017AttentionIA}; throughout this paper, TNP
refers to TNP-Diagonal.
Given the token sequence
\(
((\mathbf{x}_1,\mathbf{y}_1),\ldots,(\mathbf{x}_M,\mathbf{y}_M),(\mathbf{x}_{M+1},\mathbf{0}),\ldots,(\mathbf{x}_N,\mathbf{0})),
\)
where the first \(M\) tokens are context points and the remaining tokens are
target queries, an input MLP maps each token to an embedding \(e_i^{(0)}\).
A stack of masked self-attention layers produces final representations
\(e_i^{(L)}\), where each target token attends to all context tokens but not to
other target tokens. For each target \(i>M\), a prediction head outputs the
parameters of a diagonal Gaussian,
\[
(\bm{\mu}_i,\bm{\sigma}_i)=g_{\mathrm{pred}}(e_i^{(L)}), \qquad
p_\theta(\mathbf{y}_i \mid \mathbf{x}_i,\mathcal{D}_C)=\mathcal{N}(\bm{\mu}_i,\mathrm{diag}(\bm{\sigma}_i^2)).
\]
The model is trained by maximising the predictive likelihood over target points,
equivalently minimising
\(
\mathcal{L}_{\mathrm{TNP}}(\theta)
= - \mathbb{E}_{\mathcal{D}}\!\left[\sum_{i=M+1}^{N}
\log p_\theta(\mathbf{y}_i \mid \mathbf{x}_i,\mathcal{D}_C)\right].
\)

\section{Methods}
We interpret target-to-context attention in TNPs as kernel regression~\cite{nadaraya1964estimating} in the
learned embedding and hidden spaces. If the embedding space fails to encode
structural properties, attention in the hidden space may degenerate, making it
difficult to relate distant but periodically correlated observations. Standard
TNPs use shallow ReLU-MLP embeddings, which are continuous piecewise-linear maps
and do not encode periodicity by construction. A recent alternative, Fourier
Analysis Networks (FANs)~\cite{Dong2024FANFA}, replaces part of the ReLU
activations with sinusoidal functions, effectively transforming the embedding
into
\(
[\sin(W_p x),\cos(W_p x),\phi_{\mathrm{MLP}}(x,y)].
\)
Frequencies in FANs are learned as fixed model parameters rather than inferred
from the current context set. This is inadequate in meta-learning settings,
where the period, phase, amplitude, and dominant frequencies may vary across
tasks. Similarly, standard random Fourier features~\cite{Rahimi2007RandomFF} approximate a pre-specified
stationary kernel, which is typically unknown in our setting, and do not infer
task-specific spectra. These observations highlight the difficulty of modelling periodic functions
with TNPs, and more broadly with neural networks, in meta-learning settings.
They also suggest that modifying the embedding space is a promising route for
introducing task-adaptive periodic inductive bias.

By Bochner's theorem, stationary positive-definite kernels correspond to
non-negative spectral measures. This suggests that the frequency-domain
structure of the context set can provide useful information about the underlying
task. Motivated by this connection, we propose a \textit{Spectral Aggregator}
that extracts a task-specific spectral representation from the context
observations. The Spectral Aggregator maps the context set to the parameter
collection of a \(Q\)-component spectral mixture,
\(\mathcal{A}(\mathcal{D}_C)=\theta_{\mathcal{D}_C}:=
\{(w_q,\mu_q,\sigma_q^2)\}_{q=1}^{Q}\).
Given \(\theta_{\mathcal{D}_C}\), we construct spectral features
\(\phi_S(\cdot;\theta_{\mathcal{D}_C}):\mathcal{X}\to\mathbb{R}^{2QD_0}\) using frequencies sampled from the inferred spectral
mixture. The procedure consists of four stages: empirical spectrum estimation,
latent spectral decomposition, probabilistic compression, and spectral feature
construction. In expectation, the inner product of these features induces a
spectral-mixture kernel, thereby injecting a task-adaptive SM-kernel inductive
bias into TNPs while preserving the fast amortised prediction capability of
neural processes. The overall workflow of STNPs is illustrated in
Figure~\ref{fig:method-procedure}.

\begin{figure*}[t]
\centering
\includegraphics[width=\textwidth]{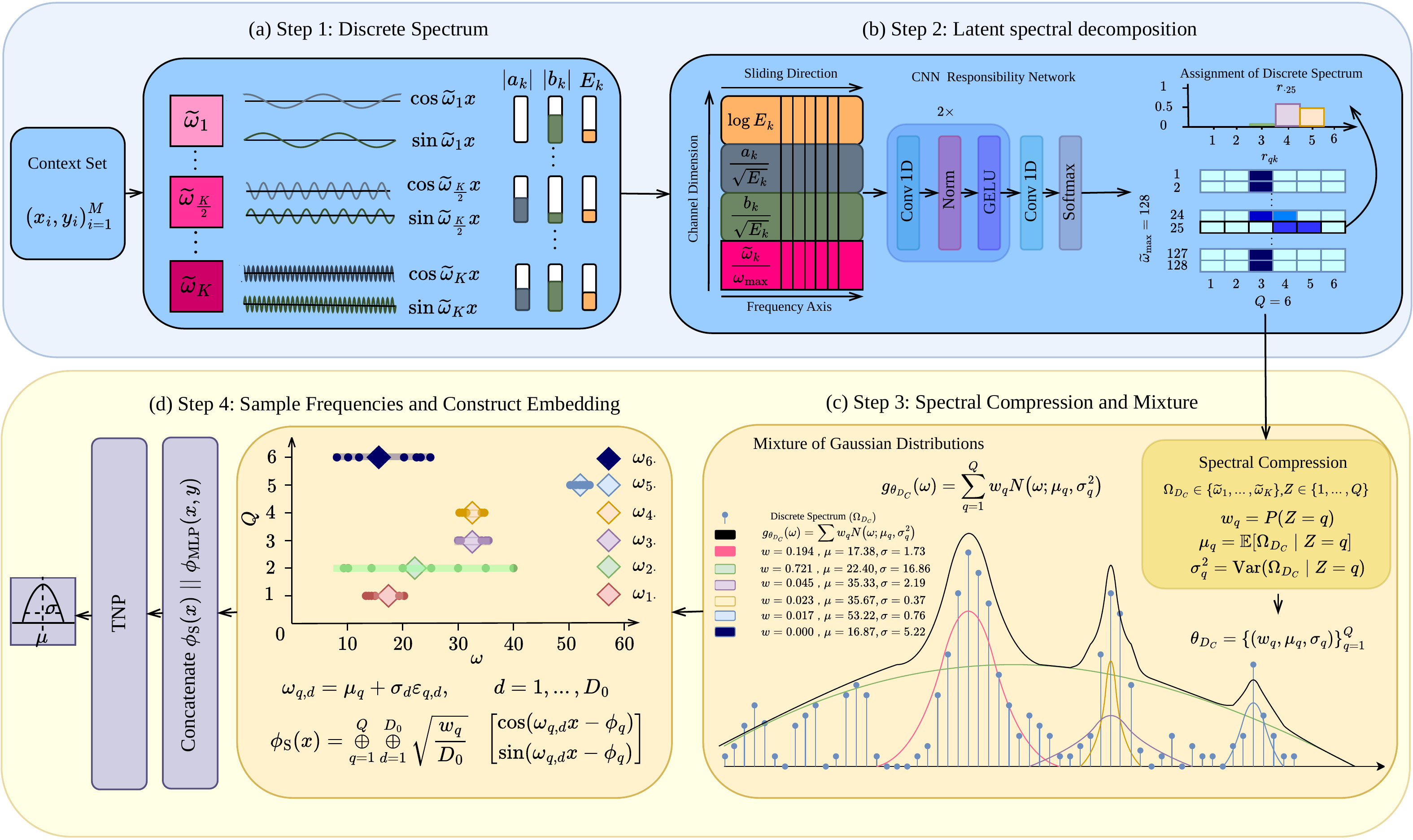}
\caption{Overview of STNPs. (a) An empirical spectrum is estimated from the
context set on a fixed frequency grid via cosine/sine projections of centred
responses. (b) A CNN responsibility network softly assigns each grid frequency
to one of \(Q\) mixture components. (c) The discrete spectrum is compressed
into a Gaussian mixture with weights, means, and variances induced by the
responsibilities. (d) Task-adaptive frequencies sampled from this mixture form
spectral embeddings, which are concatenated with the MLP token representation
and passed to the TNP predictor.}
\label{fig:method-procedure}
\vspace{-0.6em}
\end{figure*}

\subsection{Spectral Aggregator and Spectral Features}

\label{sec:smfe}

\paragraph{Step 1: Empirical discrete spectral distribution.}
Let
\(\{\tilde{\omega}_k\}_{k=1}^K\) be a fixed frequency grid and let \(Q\) be the
number of spectral mixture components. For notational simplicity, we suppress
conditioning on the context set \(\mathcal{D}_C\); for example,
\(\mathbb{P}(\cdot|\mathcal{D}_C)\) is denoted by \(\mathbb{P}(\cdot)\).
We present the scalar-input, scalar-output case here\footnote{For higher-dimensional \(x\) and \(y\), we present the method and discuss these cases in Appendix~\ref{app:multidim}.}.
Spectral energy is estimated with stabiliser \(\varepsilon>0\) on the fixed
frequency grid. The responses are centred by subtracting the context mean
\(y_i^{(s)}=y_i-\bar{y}\), and the resulting centred responses are projected
onto the cosine-sine basis associated with each grid frequency
$\tilde{\omega}_k$:
\begin{equation}
a_k = \frac{1}{M}\sum_{i=1}^{M} y_i^{(s)} \cos(\tilde{\omega}_k x_i),
\qquad
b_k = \frac{1}{M}\sum_{i=1}^{M} y_i^{(s)} \sin(\tilde{\omega}_k x_i),
\qquad
E_k = a_k^2 + b_k^2 + \varepsilon.
\end{equation}
Normalising \(p_k = E_k / \sum_{j=1}^{K} E_j\) yields a discrete
random variable
\(\Omega_{\mathcal{D}_C}\in\{\tilde{\omega}_1,\dots,\tilde{\omega}_K\}\)
with
\(\mathbb{P}(\Omega_{\mathcal{D}_C}=\tilde{\omega}_k)=p_k\), and the
conditional spectral measure
\(\hat{\nu}_{\mathcal{D}_C}=\sum_{k=1}^{K}p_k\delta_{\tilde{\omega}_k}\).

\paragraph{Step 2: Latent spectral decomposition.}
The empirical measure $\hat{\nu}_{\mathcal{D}_C}$ is high-dimensional, spiky, and tied to the fixed frequency grid.
Used directly as features, the fixed frequencies would force the downstream model to operate on a 
$K$-dimensional representation that changes discontinuously with small shifts in context, and whose resolution is capped by the grid spacing. 
To obtain a compact and continuous spectral representation, we introduce a latent mixture variable
$Z \in \{1,\dots,Q\},$
which indicates the spectral component to which a grid frequency belongs. 

We form a spectral summary sequence along the frequency axis
\begin{equation}
S_k=
\left[
\log E_k,\;
\frac{a_k}{\sqrt{E_k}},\;
\frac{b_k}{\sqrt{E_k}},\;
\frac{\tilde{\omega}_k}{\omega_{\max}}
\right]
\in \mathbb R^{C_{\mathrm{in}}},
\end{equation}
where the last three entries encode the phase direction and normalised frequency coordinate. 
We feed $
\{S_k\}_{k=1}^K \in \mathbb R^{K\times C_{\mathrm{in}}}
$ to a lightweight convolutional network
$
h_\phi:\mathbb R^{K\times C_{\mathrm{in}}}\to \mathbb R^{K\times Q},
$
which operates along the frequency axis and outputs component logits \(\ell_{qk}\) for each grid frequency. The use of a convolutional network imposes a useful inductive bias, with mixture assignments determined by local spectral geometry such as peak shape, bandwidth, and neighbourhood continuity across adjacent frequencies rather than by pointwise energy values alone. The responsibilities are then defined as a learned conditional categorical distribution over mixture components:
\(
r_{qk}
:=
\mathbb P_\phi(Z=q \mid \Omega_{\mathcal{D}_C}=\tilde{\omega}_k)
=
\frac{\exp(\ell_{qk})}{\sum_{q'=1}^{Q}\exp(\ell_{q'k})},
\sum_{q=1}^{Q} r_{qk}=1.
\)
This step can be interpreted as a smooth amortised decomposition of the empirical discrete 
spectrum into a small number of spectral components. 

\paragraph{Step 3: Probabilistic spectral compression.}
Given the discrete spectral distribution \(p_k\) and the responsibilities
\(r_{qk}\), the induced joint distribution and the mixture weight of component $q$ are
\begin{equation}
\mathbb{P}(Z=q,\Omega_{\mathcal{D}_C}=\tilde{\omega}_k)=r_{qk}p_k,\qquad w_q
=
\mathbb{P}(Z=q)
=
\sum_{k=1}^{K} r_{qk} p_k.
\end{equation}
This yields the component-wise posterior over the frequency
grid $\mathbb{P}(\Omega_{\mathcal{D}_C}=\tilde{\omega}_k \mid Z=q)
=
\frac{r_{qk}p_k}{w_q}.$
We then define the Gaussian parameters by the conditional moments of
\(\Omega_{\mathcal{D}_C}\):
\begin{equation}
\mu_q
=
\mathbb{E}[\Omega_{\mathcal{D}_C} \mid Z=q]
=
\frac{\sum_{k=1}^{K} r_{qk}p_k \tilde{\omega}_k}{w_q},\quad\sigma_q^2
=
\mathrm{Var}(\Omega_{\mathcal{D}_C} \mid Z=q)
=
\frac{\sum_{k=1}^{K} r_{qk}p_k(\tilde{\omega}_k-\mu_q)^2}{w_q}.
\end{equation}
This yields a compact continuous approximation 
\(
g_{\theta_{\mathcal{D}_C}}(\omega)
=
\sum_{q=1}^{Q} w_q \mathcal N(\omega;\mu_q,\sigma_q^2) \text{ with }
\theta_{\mathcal{D}_C}=\{(w_q,\mu_q,\sigma_q^2)\}_{q=1}^{Q}
\)
to the discrete conditional spectrum \(\hat{\nu}_{\mathcal{D}_C}\).

\paragraph{Step 4: Sampling and embedding.}
For each component, draw \(D_0\) frequencies via reparameterisation
\(\omega_{q,d}=\mu_q+\sigma_q\epsilon_{q,d}\),
\(\epsilon_{q,d}\sim\mathcal N(0,1)\). The spectral embedding is
\begin{equation}
\phi_S(x)
=
\bigoplus_{q=1}^{Q}
\bigoplus_{d=1}^{D_0}
\sqrt{\frac{w_q}{D_0}}
\begin{bmatrix}
\cos(\omega_{q,d}x-\phi_q)\\
\sin(\omega_{q,d}x-\phi_q)
\end{bmatrix}
\in \mathbb{R}^{2QD_0},
\end{equation}
where \(\phi_q=0\) if phase shifts are disabled, otherwise estimated via a
component-wise circular mean (Appendix~\ref{app:optional-phase-estimation}).
The factor \(\sqrt{w_q/D_0}\) scales each component by its inferred mass.

\paragraph{Concatenation.}
We concatenate \(\phi_S(x)\) with a non-spectral MLP branch and project to
model dimension:
\(
e(x,y)=\mathrm{Proj}\!\left([\phi_S(x)\;\|\;\phi_{\mathrm{MLP}}(x,y)]\right).
\)
The result is passed to the TNP backbone.


\subsection{Properties}
Below, we show that the spectral features of STNPs are translation equivariant, induce a translation-invariant random feature kernel estimator, and define a stationary conditional kernel in expectation. 
Furthermore, under target-wise prediction, these spectral enhancements preserve the exchangeability and marginalisability of the predictive family, so STNPs remain valid conditional stochastic processes.
Detailed proofs are provided in Appendix~\ref{app:sec3-proofs}.

\begin{proposition}[Translation equivariance of the spectral features]
\label{prop:translation-equivariance}
Fix the inferred spectral-mixture parameters
\(
\theta_{\mathcal{D}_C}=\{(w_q,\mu_q,\sigma_q^2,\phi_q)\}_{q=1}^Q,
\)
and the sampled frequencies
\(
\omega_{q,d}=\mu_q+\sigma_q\epsilon_{q,d}.
\)
Then, for any translation \(\Delta\in\mathbb R\), there exists a
block-diagonal orthogonal matrix
\[
\rho_\Delta(\theta_{\mathcal{D}_C})
=
\bigoplus_{q=1}^{Q}\bigoplus_{d=1}^{D_0}
R(\omega_{q,d}\Delta),
\qquad
R(\alpha)=
\begin{bmatrix}
\cos\alpha & -\sin\alpha\\
\sin\alpha & \cos\alpha
\end{bmatrix},
\]
such that
\(
\phi_{\mathrm{S}}(x+\Delta)
=
\rho_\Delta(\theta_{\mathcal{D}_C})\,\phi_{\mathrm{S}}(x).
\)
Hence, conditioned on \(\theta_{\mathcal{D}_C}\) and the sampled frequencies, the
spectral features are translation equivariant
(Appendix~\ref{app:translation-definitions}).
\end{proposition}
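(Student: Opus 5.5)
The argument is a blockwise computation using the angle-addition formulas. With \(\theta_{\mathcal{D}_C}\) and the sampled frequencies \(\omega_{q,d}\) held fixed, \(\phi_S\) is a direct sum of \(QD_0\) two-dimensional blocks. Each block has the form
\[
v_{q,d}(x)=\sqrt{w_q/D_0}\,\bigl[\cos(\omega_{q,d}x-\phi_q),\ \sin(\omega_{q,d}x-\phi_q)\bigr]^\top .
\]
It therefore suffices to show that, for each block, \(v_{q,d}(x+\Delta)=R(\omega_{q,d}\Delta)\,v_{q,d}(x)\). Stacking these identities then gives \(\phi_S(x+\Delta)=\rho_\Delta(\theta_{\mathcal{D}_C})\phi_S(x)\), where \(\rho_\Delta\) is the block-diagonal matrix in the statement.

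For a single block, set \(\beta=\omega_{q,d}x-\phi_q\) and \(\alpha=\omega_{q,d}\Delta\). Then \(\omega_{q,d}(x+\Delta)-\phi_q=\beta+\alpha\), and the addition formulas give
\[
\cos(\beta+\alpha)=\cos\alpha\cos\beta-\sin\alpha\sin\beta,
\qquad
\sin(\beta+\alpha)=\sin\alpha\cos\beta+\cos\alpha\sin\beta .
\]
These are exactly the two components of \(R(\alpha)[\cos\beta,\sin\beta]^\top\). The scalar \(\sqrt{w_q/D_0}\) commutes with \(R(\alpha)\), so it factors through. When phase shifts are disabled we have \(\phi_q=0\), and the same computation applies.

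Orthogonality follows because each \(R(\alpha)\) is a rotation: \(R(\alpha)^\top R(\alpha)=I\) and \(\det R(\alpha)=1\). A block-diagonal matrix whose blocks are all orthogonal is itself orthogonal. The matrix \(\rho_\Delta\) depends only on \(\Delta\) and on the fixed frequencies, not on \(x\), so it defines a representation of the translation group: \(\rho_{\Delta_1+\Delta_2}=\rho_{\Delta_1}\rho_{\Delta_2}\), since \(R(\alpha_1)R(\alpha_2)=R(\alpha_1+\alpha_2)\). This matches the notion of equivariance in Appendix~\ref{app:translation-definitions}.

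The only point needing care is the conditioning. The claim holds only with \(\theta_{\mathcal{D}_C}\), the noise draws \(\epsilon_{q,d}\), and the phases \(\phi_q\) all fixed, so that translating \(x\) does not alter the inferred parameters. I will state this explicitly rather than claim any invariance of the Spectral Aggregator under shifts of the context. The result is therefore the conditional translation equivariance described in the statement.
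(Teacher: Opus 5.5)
Your proof is correct and is essentially the paper's argument: an angle-addition computation showing that each \((q,d)\) block satisfies \(v_{q,d}(x+\Delta)=R(\omega_{q,d}\Delta)\,v_{q,d}(x)\), after which the blocks are stacked into the direct sum \(\rho_\Delta\). The paper states the same computation for vector inputs, with \(R(\boldsymbol\omega_{q,d}^{\top}\boldsymbol\Delta)\), and only invokes the orthogonality of the blocks later, in the corollary. Your explicit orthogonality check and the group-law remark are harmless additions.
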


\begin{corollary}[Translation invariance of the induced inner product]
\label{cor:translation-invariance}
Under the assumptions of
Proposition~\ref{prop:translation-equivariance}, define
\(
\hat{k}_{\theta_{\mathcal{D}_C}}(x,x')
=
\phi_{\mathrm{S}}(x)^\top \phi_{\mathrm{S}}(x').
\)
Then, for any \(\Delta\in\mathbb R\),
\(
\hat{k}_{\theta_{\mathcal{D}_C}}(x+\Delta,x'+\Delta)
=
\hat{k}_{\theta_{\mathcal{D}_C}}(x,x').
\)
Hence the induced kernel estimator is translation invariant
(Appendix~\ref{app:translation-definitions}).
\end{corollary}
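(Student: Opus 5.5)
The plan is to deduce the corollary directly from Proposition~\ref{prop:translation-equivariance}, using only the orthogonality of the representation matrix \(\rho_\Delta(\theta_{\mathcal{D}_C})\). We fix \(\theta_{\mathcal{D}_C}\) and the sampled frequencies \(\omega_{q,d}\), so that \(\phi_{\mathrm{S}}\) is a deterministic map. By the proposition, \(\phi_{\mathrm{S}}(x+\Delta)=\rho_\Delta\phi_{\mathrm{S}}(x)\) and \(\phi_{\mathrm{S}}(x'+\Delta)=\rho_\Delta\phi_{\mathrm{S}}(x')\), with the same matrix \(\rho_\Delta\) in both identities. This holds because \(\rho_\Delta\) depends only on \(\Delta\) and the frozen frequencies, not on the input point. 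Substituting into the definition of \(\hat{k}_{\theta_{\mathcal{D}_C}}\) gives
\[
\hat{k}_{\theta_{\mathcal{D}_C}}(x+\Delta,x'+\Delta)=\phi_{\mathrm{S}}(x)^\top\rho_\Delta^\top\rho_\Delta\,\phi_{\mathrm{S}}(x').
\]

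The remaining step is to check that \(\rho_\Delta^\top\rho_\Delta=I_{2QD_0}\). Since \(\rho_\Delta\) is block diagonal, its transpose is the block-diagonal matrix of the transposed blocks. It therefore suffices to verify \(R(\alpha)^\top R(\alpha)=I_2\) for each \(2\times2\) rotation block, which follows from \(\cos^2\alpha+\sin^2\alpha=1\). With this identity the expression above collapses to \(\phi_{\mathrm{S}}(x)^\top\phi_{\mathrm{S}}(x')=\hat{k}_{\theta_{\mathcal{D}_C}}(x,x')\), which is the claim.

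As a sanity check, one can also expand the inner product directly. Each block contributes
\[
\frac{w_q}{D_0}\Bigl[\cos(\omega_{q,d}x-\phi_q)\cos(\omega_{q,d}x'-\phi_q)+\sin(\omega_{q,d}x-\phi_q)\sin(\omega_{q,d}x'-\phi_q)\Bigr]=\frac{w_q}{D_0}\cos\bigl(\omega_{q,d}(x-x')\bigr).
\]
The phase \(\phi_q\) cancels, so \(\hat{k}_{\theta_{\mathcal{D}_C}}\) depends on \(x,x'\) only through \(x-x'\), and invariance is then immediate.

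There is no real obstacle in this argument. The only point needing care is that \(\rho_\Delta\) must be the same matrix for both arguments, which requires \(\theta_{\mathcal{D}_C}\) and the sampled frequencies to be held fixed, i.e.\ not re-sampled or re-inferred after translating the query points. The hypotheses of Proposition~\ref{prop:translation-equivariance} guarantee exactly this.
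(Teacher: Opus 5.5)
Your proposal is correct and matches the paper's proof: you apply Proposition~\ref{prop:translation-equivariance} to both arguments with the same block-diagonal $\rho_\Delta(\theta_{\mathcal{D}_C})$, note that it is orthogonal because each rotation block $R(\omega_{q,d}\Delta)$ is, and use $\rho_\Delta^\top\rho_\Delta=I$ to conclude. Your direct-expansion check, where the phase cancels to leave $\cos(\omega_{q,d}(x-x'))$, is the same computation the paper uses in its proof of Proposition~\ref{prop:stationary-kernel}.
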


\begin{proposition}[Stationarity of the conditional kernel]
\label{prop:stationary-kernel}
Conditioned on the inferred mixture parameters \(\theta_{\mathcal{D}_C}\), let
\(
k_{\theta_{\mathcal{D}_C}}(x,x')
:=
\mathbb E\!\left[
\phi_{\mathrm{S}}(x)^\top \phi_{\mathrm{S}}(x')
\mid \theta_{\mathcal{D}_C}
\right],
\)
where the expectation is taken over the sampled frequencies
\(\omega_{q,d}\sim\mathcal N(\mu_q,\sigma_q^2)\). Then for
\(
\tau=x-x'
\)
we have
\(
k_{\theta_{\mathcal{D}_C}}(\tau)
=
\sum_{q=1}^{Q}
w_q
\exp\!\left(-\frac{1}{2}\sigma_q^2\tau^2\right)
\cos(\mu_q\tau).
\)
Therefore, the conditional kernel induced by the spectral branch is
stationary and is indeed a spectral-mixture kernel \cite{Wilson2013GaussianPK}.
\end{proposition}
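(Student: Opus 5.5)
The plan is to expand the inner product block by block, reduce each block to a cosine of \(\omega_{q,d}\tau\), and then average using the Gaussian characteristic function. Start from the definition of \(\phi_{\mathrm S}\). For a fixed pair \((q,d)\), the contribution to \(\phi_{\mathrm S}(x)^\top\phi_{\mathrm S}(x')\) is
\[
\frac{w_q}{D_0}\Bigl[\cos(\omega_{q,d}x-\phi_q)\cos(\omega_{q,d}x'-\phi_q)+\sin(\omega_{q,d}x-\phi_q)\sin(\omega_{q,d}x'-\phi_q)\Bigr].
\]
By the cosine difference identity this equals \(\frac{w_q}{D_0}\cos(\omega_{q,d}(x-x'))=\frac{w_q}{D_0}\cos(\omega_{q,d}\tau)\). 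The phase \(\phi_q\) cancels, so the estimator depends on \(x,x'\) only through \(\tau\); this is the same mechanism as Corollary~\ref{cor:translation-invariance}. Summing over blocks gives
\[
\hat k_{\theta_{\mathcal D_C}}(x,x')=\sum_{q=1}^Q\frac{w_q}{D_0}\sum_{d=1}^{D_0}\cos(\omega_{q,d}\tau).
\]

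Next, take the conditional expectation given \(\theta_{\mathcal D_C}\). The key point is that, conditional on \(\theta_{\mathcal D_C}\), the weights \(w_q\) are constants and each \(\omega_{q,d}=\mu_q+\sigma_q\epsilon_{q,d}\) is distributed as \(\mathcal N(\mu_q,\sigma_q^2)\). By linearity, it then suffices to compute \(\mathbb E[\cos(\omega\tau)]\) for \(\omega\sim\mathcal N(\mu,\sigma^2)\). Write this as \(\operatorname{Re}\,\mathbb E[e^{i\omega\tau}]\) and use the Gaussian characteristic function \(\mathbb E[e^{i\omega\tau}]=e^{i\mu\tau-\sigma^2\tau^2/2}\). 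This gives \(\mathbb E[\cos(\omega\tau)]=e^{-\sigma^2\tau^2/2}\cos(\mu\tau)\). The inner sum over \(d\) then has \(D_0\) identical terms, which cancels the \(1/D_0\) and yields the stated formula.

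Finally, observe that the resulting function depends only on \(\tau\), so the kernel is stationary. It is exactly the Wilson--Adams spectral mixture kernel. Equivalently, it is the Fourier transform of the symmetrised mixture \(\tfrac12\sum_q w_q[\mathcal N(\mu_q,\sigma_q^2)+\mathcal N(-\mu_q,\sigma_q^2)]\), which also certifies positive definiteness via Bochner's theorem.

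None of the steps is a real obstacle. The only place needing care is justifying the conditioning. The expectation is only over \(\epsilon_{q,d}\), which are drawn independently of \(\mathcal D_C\). We also need \(\phi_q\) to be \(\theta_{\mathcal D_C}\)-measurable, which the statement grants by including \(\phi_q\) in \(\theta_{\mathcal D_C}\). Since \(\phi_q\) cancels anyway, its value does not affect the result.
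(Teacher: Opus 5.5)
Your proof is correct and follows the paper's argument essentially step for step: you reduce each $(q,d)$ block to $\frac{w_q}{D_0}\cos(\omega_{q,d}\tau)$ with the phase cancelling, then average using the Gaussian characteristic function and take real parts. Your remarks on justifying the conditioning and on positive definiteness via the symmetrised mixture are correct additions that the paper does not spell out.
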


\begin{proposition}[STNPs define valid conditional stochastic processes]
\label{prop:process-consistency}
The finite-dimensional predictive distributions of STNPs satisfy the
conditions of the Kolmogorov extension theorem stated in
Appendix~\ref{app:kolmogorov-extension} and therefore define valid
conditional stochastic processes.
\end{proposition}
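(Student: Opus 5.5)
The plan is to check the two Kolmogorov consistency conditions, exchangeability and marginalisability, for the family of finite-dimensional predictive distributions indexed by target input sets, with the context set $\mathcal{D}_C$ held fixed. The key structural fact is that, in the TNP-Diagonal architecture, each target token attends only to context tokens and never to other targets. Its final representation $e_i^{(L)}$ therefore depends only on $x_i$ and $\mathcal{D}_C$.

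\textbf{Step 1: the spectral branch depends only on the context and on $x_i$.} The Spectral Aggregator $\mathcal{A}(\mathcal{D}_C)=\theta_{\mathcal{D}_C}$ is computed from the context alone. Concretely, $a_k,b_k,E_k,p_k$, the CNN responsibilities $r_{qk}$, the moments $(w_q,\mu_q,\sigma_q^2)$ and the phases $\phi_q$ are all functions of the context. So is the draw $\epsilon_{q,d}$, which is made once per context set and shared by all tokens. Hence $\phi_S(x_i;\theta_{\mathcal{D}_C})$ is a deterministic function of $x_i$ once we condition on $\mathcal{D}_C$ and $\epsilon$. The embedding $e(x_i,0)=\mathrm{Proj}([\phi_S(x_i)\,\|\,\phi_{\mathrm{MLP}}(x_i,0)])$ is then a pointwise map. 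Context tokens are unaffected by targets because of the mask, so by induction over the $L$ layers each $e_i^{(L)}$ is a function $F(x_i;\mathcal{D}_C,\epsilon)$ that does not depend on the other targets.

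\textbf{Step 2: factorisation and the two conditions.} From Step 1, the conditional predictive factorises as
$p(\mathbf{y}_T\mid \mathbf{x}_T,\mathcal{D}_C,\epsilon)=\prod_{i}\mathcal{N}(\mathbf{y}_i;\bm{\mu}(x_i),\mathrm{diag}(\bm{\sigma}^2(x_i)))$.
Exchangeability follows because a product of identically defined factors is invariant under jointly permuting $(x_i,\mathbf{y}_i)$. Marginalisability follows by integrating out $\mathbf{y}_{N}$: that factor is a normalised Gaussian density, so it integrates to $1$ and leaves the predictive on the remaining targets, which is computed by the same formula. If the spectral draw $\epsilon$ is treated as random rather than fixed, the predictive is the mixture $\int \prod_i p(\mathbf{y}_i\mid x_i,\mathcal{D}_C,\epsilon)\,d\mathbb{P}(\epsilon)$. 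Both properties pass through this integral by Fubini, since $\mathbb{P}(\epsilon)$ does not depend on the targets. Invoking the Kolmogorov extension theorem of Appendix~\ref{app:kolmogorov-extension} then gives a conditional process for each fixed $\mathcal{D}_C$.

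\textbf{Main obstacle.} The delicate point is Step 1's claim that nothing in the spectral pipeline couples targets. Three things must be checked. First, $\bar{y}$ and the projections are computed from context responses only. Second, the CNN $h_\phi$ acts along the frequency axis rather than across tokens. Third, the reparameterised frequencies are sampled per context set and not per target query. If the implementation resampled $\epsilon$ per target batch, consistency would still hold, but only for the $\epsilon$-averaged mixture family. I would state this convention explicitly in the proof.
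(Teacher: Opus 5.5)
Your proposal is correct and follows essentially the same route as the paper: factorise the predictive over targets given $\mathcal{D}_C$, get target exchangeability by reordering the product, get marginalisability by integrating out normalised Gaussian factors, and then invoke Kolmogorov. You are more careful than the paper in two places: it simply asserts the target-wise factorisation (and also checks context-permutation invariance, which the stated conditions do not require), whereas you derive the factorisation from the TNP-Diagonal mask, and you note that without conditioning on the shared frequency draw $\epsilon$ the predictive is a mixture rather than a product, for which both properties still hold by Fubini.
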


\section{Experiments}

We evaluate STNPs on four synthetic function families, one image-completion benchmark, and seven real-world
time-series datasets. We compare against a
broad set of neural-process baselines, including NP, CNP, and Bootstrapping NP~\cite{Lee2020BootstrappingNP}
(BNP), together with their attentive variants ANP \cite{Kim2019AttentiveNP}, CANP, and BANP, as well as
TNP, TETNP, ConvCNP, and SConvCNP. The central
question is whether STNPs can better model periodic and quasi-periodic
structure.

\begin{figure*}[t]
\centering
\includegraphics[width=\textwidth]{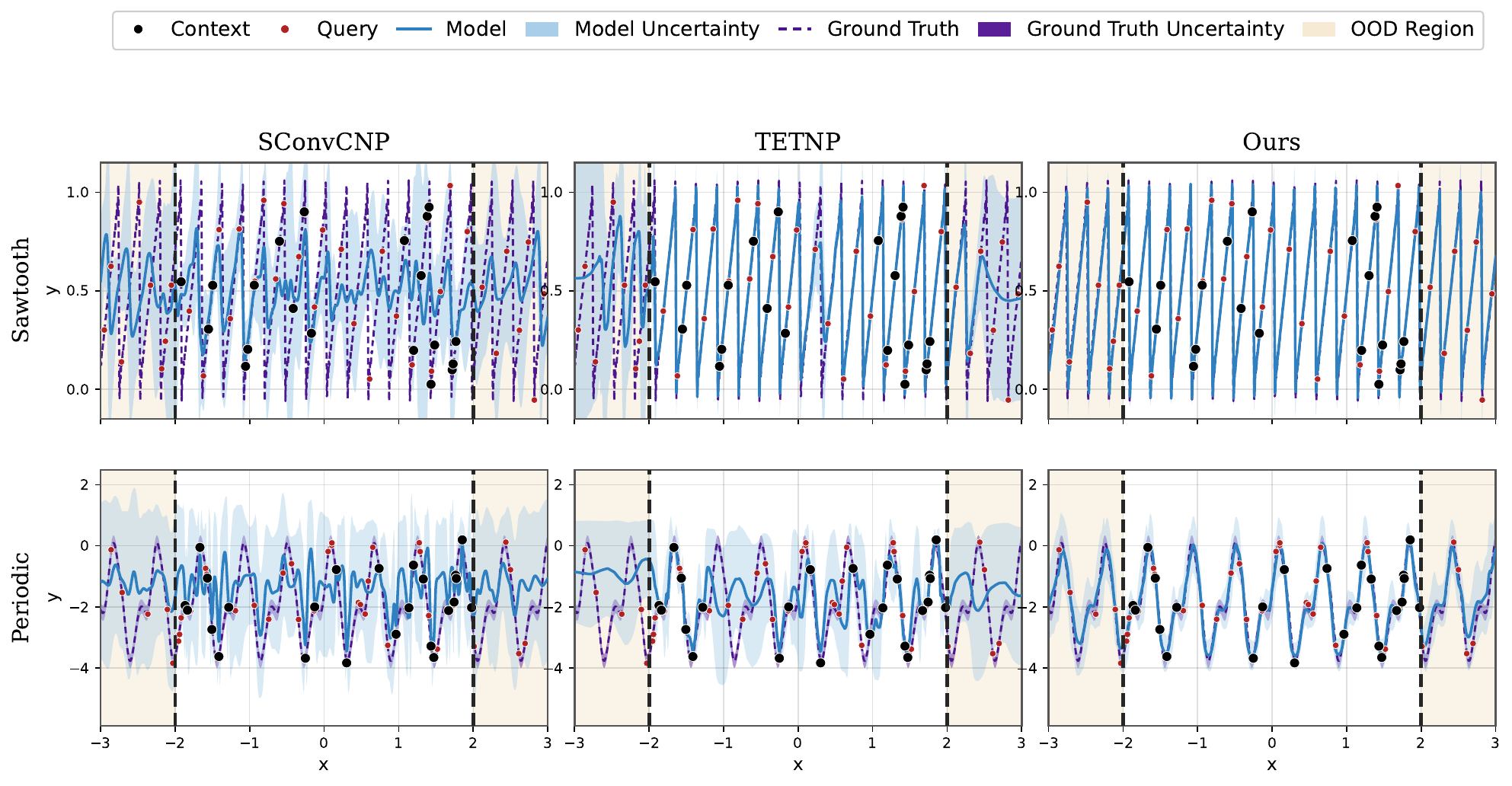}
{\vspace{-0.9em}
\setlength{\abovecaptionskip}{0pt}
\setlength{\belowcaptionskip}{0pt}
\caption{Example predictions on synthetic sawtooth (top) and periodic (bottom) tasks.
  Columns compare SConvCNP, TETNP, and STNP (ours). Black and red dots denote context and query points respectively. The ground truth is shown in purple, and the blue curve denotes the predictive mean; the bands represent $\pm 2$ standard deviations. The shaded regions indicate out-of-distribution ranges.}
\label{fig:synthetic-regression-qualitative}}

\vspace{0.7em}

\begingroup
\footnotesize
\setlength{\tabcolsep}{2.3pt}
\newcommand{\meanstd}[2]{#1$_{\pm #2}$}%
\captionof{table}{Comparison of STNP with the baselines on log-likelihood of the target points on various synthetic regression tasks. We train each method with 5 different seeds and report the mean and standard deviation. For the Periodic task, the reported results use \(m_{\min}=20\).}
\label{tab:regression_results}
\vspace{0.25em}
\resizebox{\textwidth}{!}{%
\begin{tabular}{@{}lccccccccccc@{}}
    \toprule
    \textbf{Task} & \textbf{NP} & \textbf{CNP} & \textbf{ANP} & \textbf{CANP} & \textbf{BNP} & \textbf{BANP} & \textbf{ConvCNP} & \textbf{SConvCNP} & \textbf{TNP} & \textbf{TETNP} & \textbf{Ours} \\
    \midrule
    RBF & \meanstd{0.27}{0.01} & \meanstd{0.26}{0.02} & \meanstd{0.81}{0.00} & \meanstd{0.79}{0.00} & \meanstd{0.38}{0.02} & \meanstd{0.82}{0.01} & \meanstd{1.31}{0.01} & \meanstd{1.37}{0.01} & \meanstd{1.39}{0.00} & \meanstd{1.39}{0.01} & \textbf{\meanstd{1.41}{0.01}} \\
    Mat\'ern & \meanstd{0.11}{0.02} & \meanstd{0.09}{0.01} & \meanstd{0.64}{0.00} & \meanstd{0.62}{0.01} & \meanstd{0.21}{0.03} & \meanstd{0.53}{0.00} & \meanstd{0.97}{0.03} & \meanstd{1.02}{0.02} & \meanstd{1.03}{0.00} & \textbf{\meanstd{1.04}{0.01}} & \textbf{\meanstd{1.04}{0.01}} \\
    Sawtooth & \meanstd{-0.16}{0.01} & \meanstd{-0.16}{0.01} & \meanstd{1.04}{0.01} & \meanstd{1.04}{0.01} & \meanstd{-0.12}{0.02} & \meanstd{1.09}{0.02} & \meanstd{1.94}{0.01} & \meanstd{2.50}{0.02} & \meanstd{-0.16}{0.01} & \meanstd{2.70}{0.03} & \textbf{\meanstd{2.92}{0.01}} \\
    Periodic & \meanstd{-0.36}{0.03} & \meanstd{-1.10}{0.01} & \meanstd{-0.19}{0.01} & \meanstd{-0.19}{0.01} & \meanstd{-0.25}{0.02} & \meanstd{-0.40}{0.02} & \meanstd{-0.01}{0.05} & \meanstd{0.44}{0.02} & \meanstd{-0.35}{0.01} & \meanstd{0.20}{0.02} & \textbf{\meanstd{1.12}{0.02}} \\
    \bottomrule
\end{tabular}}
\endgroup
\end{figure*}

\subsection{Synthetic Regression}
\label{sec:synthetic-regression}
Following the standard benchmark \cite{Nguyen2022TransformerNP,Gordon2019ConvolutionalCN,Wang2024RnyiNP,Lee2020BootstrappingNP}, we evaluate STNPs on 1D regression tasks where functions are drawn from Gaussian process priors with RBF, 
Mat\'ern, and periodic kernels, together with a sawtooth function family. To ensure the model learns from a diverse set of functions, the GP hyperparameters are randomised 
during the generation process. Additional experimental details are provided in Appendix~\ref{app:synthetic-regression-details}.
Table~\ref{tab:regression_results} 
reports a performance comparison of our proposed method against various baseline models.
Figure~\ref{fig:synthetic-regression-qualitative} compares the predictive performance of SConvCNP, TETNP, and STNP. 
Compared with baselines, STNP fits the observed data more accurately,
extrapolates the underlying function more reliably in OOD regions, and remains
stable in context-sparse areas such as around \(x \approx -1\) (bottom), suggesting
stronger interpolation and extrapolation robustness.
This likely reflects the Spectral Aggregator's ability to extract
task-adaptive frequency information from sparse context points and inject it
into the TNP representation, enabling correlations between distant but
phase-aligned inputs beyond local time-domain proximity.

\subsection{Image Completion}
\begin{wrapfigure}{r}{0.48\textwidth}
\vspace{-1.0em}
\centering
\includegraphics[width=\linewidth]{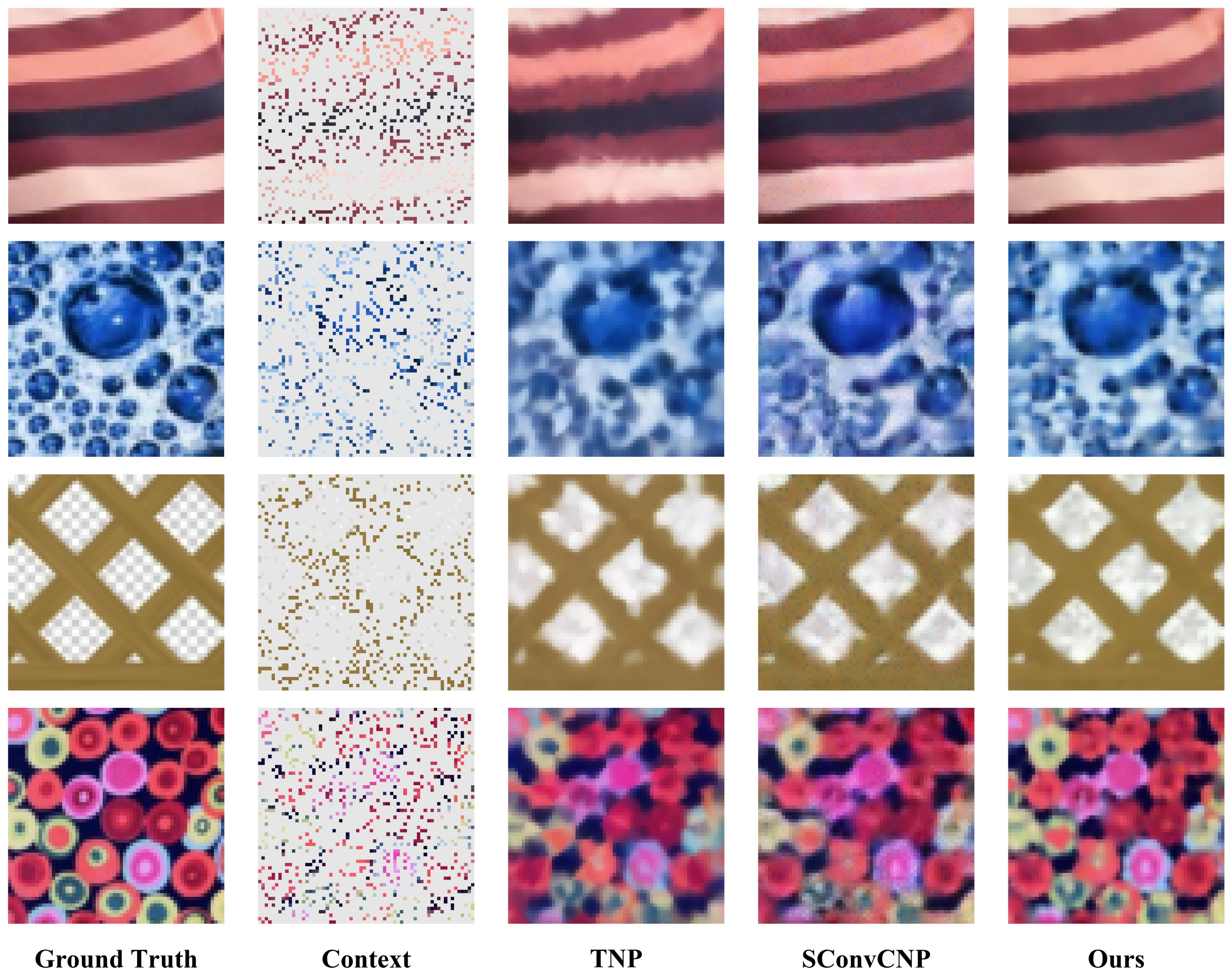}
\vspace{-0.6em}
\caption{Illustrative model outputs for image completion on the DTD dataset. Grey pixels indicate query regions, while the remaining pixels serve as context observations.}
\label{fig:dtd-image-completion-qualitative}
\vspace{-1.0em}
\end{wrapfigure}

Following SConvCNP, we formulate image completion as spatial regression from
2D pixel coordinates to intensity values. We use images from the Describable Textures Dataset (DTD)
\cite{Cimpoi2013DescribingTI}, and construct each task from a processed
\(64\times64\) subsampled crop of an original image. For each batch, the number of context pixels is sampled uniformly as
\(M \sim \mathcal{U}[5,1024)\) and is shared across all tasks in
the batch, while all remaining pixels are treated as query points. Additional experimental
details are provided in Appendix~\ref{app:dtd_configs}. Following
\cite{Mohseni2024SpectralCC}, TETNP cannot be trained on this benchmark due to
out-of-memory failures even with 8 A100 GPUs. Table~\ref{tab:image-completion-results}
summarises the quantitative results (PSNR and SSIM are
computed over the full \(64\times64\) image), and Figure~\ref{fig:dtd-image-completion-qualitative}
shows representative qualitative completions. STNP yields
sharper and more structurally faithful completions than TNP and SConvCNP,
particularly for repeated textures such as stripes, bubbles, lattices, and
circular motifs. Whereas TNP over-smooths local structure and SConvCNP mainly
recovers coarse spatial continuity, STNP better preserves global periodic layouts.

\begin{table}[H]
\centering
\caption{Comparison of predictive performance on image-completion tasks
constructed from the DTD dataset. Higher log-likelihood, PSNR, and SSIM,
and lower RMSE indicate better performance.}
\label{tab:image-completion-results}
\vspace{0.25em}
{\scriptsize
\setlength{\tabcolsep}{3pt}
\newcommand{\pmerr}[1]{{\scriptsize$\pm #1$}}%
\newcommand{\bpmerr}[1]{{\scriptsize\boldmath$\pm #1$}}%
\resizebox{\textwidth}{!}{%
\begin{tabular}{l*{6}{r@{\;}l}}
\toprule
Metric & \multicolumn{2}{c}{CNP} & \multicolumn{2}{c}{AttCNP} & \multicolumn{2}{c}{TNP} & \multicolumn{2}{c}{ConvCNP} & \multicolumn{2}{c}{SConvCNP} & \multicolumn{2}{c}{\textbf{Ours}} \\
\midrule
Log-likelihood $\uparrow$ & 0.67 & \pmerr{0.01} & 1.39 & \pmerr{0.02} & 1.39 & \pmerr{0.05} & 1.48 & \pmerr{0.00} & \textbf{1.50} & \bpmerr{0.02} & \textbf{1.51} & \bpmerr{0.01} \\
RMSE $\downarrow$ & 0.14 & \pmerr{0.00} & \textbf{0.08} & \bpmerr{0.00} & 0.09 & \pmerr{0.00} & \textbf{0.08} & \bpmerr{0.00} & \textbf{0.08} & \bpmerr{0.00} & \textbf{0.08} & \bpmerr{0.00} \\
PSNR $\uparrow$ & 18.7 & \pmerr{0.31} & 22.8 & \pmerr{0.18} & 22.9 & \pmerr{0.11} & 23.4 & \pmerr{0.14} & 23.6 & \pmerr{0.16} & \textbf{24.0} & \bpmerr{0.15} \\
SSIM $\uparrow$ & 0.35 & \pmerr{0.05} & 0.56 & \pmerr{0.02} & 0.57 & \pmerr{0.01} & 0.61 & \pmerr{0.01} & 0.62 & \pmerr{0.01} & \textbf{0.64} & \bpmerr{0.01} \\
\bottomrule
\end{tabular}}}
\end{table}

\subsection{Real-World Time Series and Multivariate Regression}
We consider block forecasting on a six-dataset general benchmark and Chichester
Harbour, and imputation on California Traffic Flow. The latter two tasks use a meta-learning setting. For block forecasting,
we train all compared methods with an \(L^2\) objective and mainly evaluate
point-forecast accuracy; for imputation, we optimise target-point NLL to assess
both accuracy and uncertainties.

\textbf{General Benchmark.} We evaluate long-horizon block forecasting on the
six-dataset benchmark of \cite{Wu2021AutoformerDT}: \(\texttt{Electricity}\),
\(\texttt{ETTh1}\) \cite{Zhou2020InformerBE}, \(\texttt{Exchange~Rate}\)
\cite{Lai2017ModelingLA}, \(\texttt{National~Illness}\), \(\texttt{Traffic}\),
and \(\texttt{Weather}\). Together, these datasets test periodic and
quasi-periodic recurrence as well as performance under non-stationary dynamics.
Following standard forecasting protocols
\cite{Zhang2023MultiresolutionTT, Zhou2022FEDformerFE}, models use
a look-back window of length \(L\) to predict the next \(T\) timestamps, with
\(L=32,T=24\) for National Illness and \(L=T=96\) otherwise. All datasets are
chronologically split following the benchmark settings: \(6:2:2\) for ETTh1
and \(7:1:2\) for the remaining datasets. Table~\ref{tab:mtsf-results} shows that STNP outperforms the other
NP-family models across the benchmark. TNP remains weak,
echoing its failures on periodic-kernel and sawtooth tasks. Figure~\ref{fig:forecast-qualitative} shows that STNP tracks
recurring patterns more accurately than baselines. Details are in
Appendix~\ref{app:realworld-timeseries-details}.

\begin{figure}[!b]
\centering
\captionof{table}{General benchmark performance. Each dataset reports MAE/MSE;
the last column gives relative error reduction from TNP to STNP. All results
are averaged over five seeds.}
\label{tab:mtsf-results}
\vspace{0.05em}
\begingroup
\tiny
\renewcommand{\arraystretch}{1.00}
\resizebox{\textwidth}{!}{%
\begin{tabular}{llcccccccccccc}
\toprule
\multirow{2}{*}{Dataset} & \multirow{2}{*}{Metric} & \multicolumn{11}{c}{Models} & \multirow{2}{*}{Imp. (\%)} \\
\cmidrule(lr){3-13}
 &  & ANP & CANP & BNP & BANP & ConvCNP & SConvCNP & TNP & TETNP & Autoformer & Informer & \textbf{Ours} & \\
\midrule
\multirow{2}{*}{\textbf{Electricity}} & MAE & 0.364 & 0.366 & 0.399 & 0.367 & 0.360 & 0.364 & 0.830 & 0.367 & 0.317 & 0.368 & \textbf{0.281} & \textbf{66.1\%} \\
 & MSE & 0.286 & 0.285 & 0.332 & 0.289 & 0.287 & 0.284 & 1.012 & 0.283 & 0.201 & 0.274 & \textbf{0.179} & \textbf{82.3\%} \\
\midrule
\multirow{2}{*}{\textbf{ETTh1}} & MAE & 0.820 & 0.798 & 0.794 & 0.757 & 0.837 & 0.827 & 1.106 & 0.639 & \textbf{0.459} & 0.713 & 0.487 & \textbf{56.0\%} \\
 & MSE & 1.182 & 1.106 & 1.067 & 1.002 & 1.139 & 1.105 & 0.800 & 0.723 & \textbf{0.449} & 0.865 & 0.539 & \textbf{32.6\%} \\
\midrule
\multirow{2}{*}{\textbf{Exchange Rate}} & MAE & 0.840 & 0.853 & 0.845 & 0.890 & 0.835 & 0.846 & 1.443 & 0.783 & 0.323 & 0.752 & \textbf{0.208} & \textbf{85.6\%} \\
 & MSE & 0.991 & 1.019 & 1.103 & 1.110 & 1.083 & 1.048 & 3.036 & 0.898 & 0.197 & 0.847 & \textbf{0.087} & \textbf{97.1\%} \\
\midrule
\multirow{2}{*}{\textbf{National Illness}} & MAE & 1.395 & 1.396 & 1.337 & 1.351 & 1.272 & 1.281 & 1.827 & 1.528 & 1.287 & 1.677 & \textbf{1.040} & \textbf{43.1\%} \\
 & MSE & 3.885 & 4.314 & 4.021 & 4.020 & 3.955 & 3.733 & 6.593 & 5.308 & 3.483 & 5.764 & \textbf{3.080} & \textbf{53.3\%} \\
\midrule
\multirow{2}{*}{\textbf{Traffic}} & MAE & 0.372 & 0.366 & 0.393 & 0.365 & 0.370 & 0.332 & 0.800 & 0.328 & 0.388 & 0.391 & \textbf{0.318} & \textbf{60.3\%} \\
 & MSE & 0.670 & 0.661 & 0.702 & 0.662 & 0.690 & 0.618 & 1.462 & 0.627 & 0.613 & 0.719 & \textbf{0.582} & \textbf{60.2\%} \\
\midrule
\multirow{2}{*}{\textbf{Weather}} & MAE & 0.516 & 0.418 & 0.599 & 0.464 & 0.510 & 0.525 & 0.610 & 0.257 & 0.336 & 0.384 & \textbf{0.227} & \textbf{62.8\%} \\
 & MSE & 0.518 & 0.344 & 0.686 & 0.406 & 0.509 & 0.593 & 0.641 & 0.202 & 0.266 & 0.300 & \textbf{0.185} & \textbf{71.1\%} \\
\bottomrule
\end{tabular}
}
\endgroup

\vspace{0.15em}

\includegraphics[width=\textwidth]{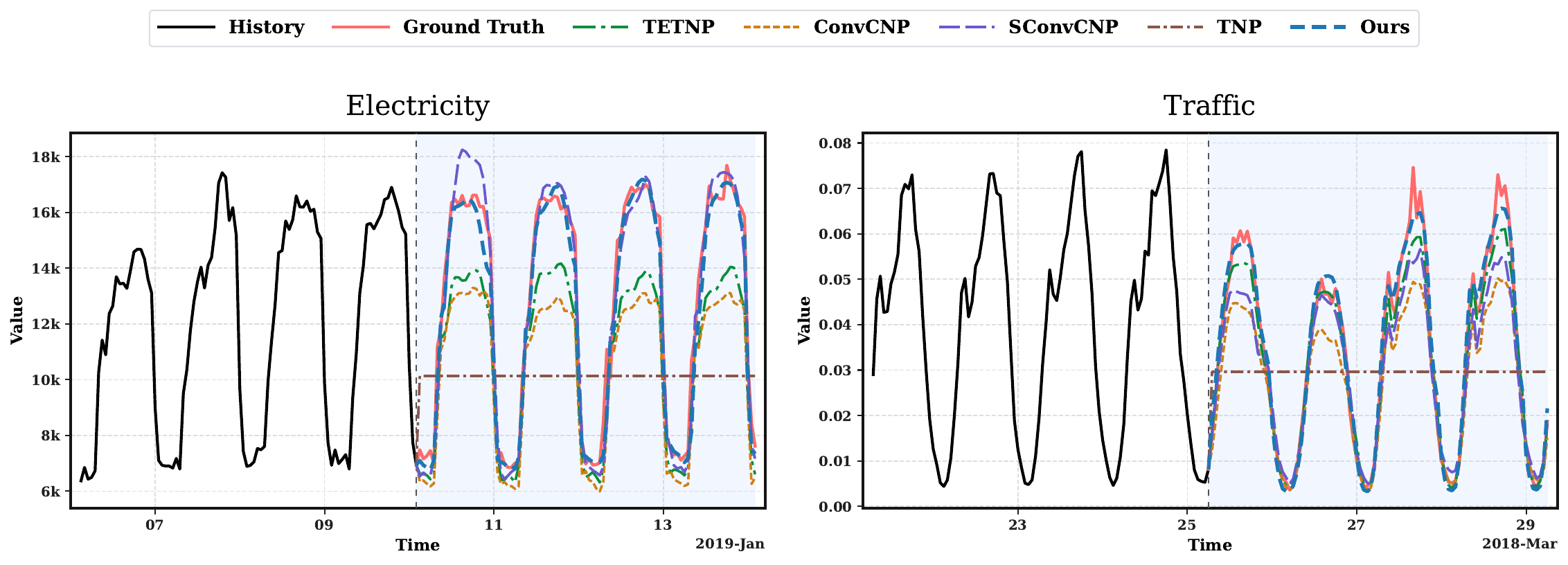}
\vspace{-0.75em}
\captionof{figure}{Representative forecasting windows on the Electricity and
Traffic datasets. The red curve denotes the ground truth and the coloured curve
denotes the predictive mean.}
\label{fig:forecast-qualitative}
\end{figure}

\textbf{California Traffic Flow.} This imputation task uses the 2020 portion of
California Traffic Flow \cite{Liu2023LargeSTAB}, where COVID-19 disruptions
induce stronger variability. We partition the data into non-overlapping 14-day
windows, sample \(M \sim \mathcal{U}[5,25)\) context points from \(N=50\)
observations in each window, and use the remaining \(N-M\) observations as
targets. 
Table~\ref{tab:california-traffic-results} shows that STNP attains the best
log-likelihood and RMSE, and Figure~\ref{fig:traffic-hard-gap-cases} shows that
it provides better uncertainty estimates and follows sharp traffic-flow peaks
more closely than competing baselines.

\begin{figure}[!t]
\centering
\captionof{table}{Comparison of predictive performance across methods on tasks
constructed from California traffic flow measurements. Lower RMSE and higher
log-likelihood indicate better performance.}
\label{tab:california-traffic-results}
\vspace{0.25em}
{\small
\setlength{\tabcolsep}{3pt}
\newcommand{\pmerr}[1]{{\scriptsize$\pm #1$}}%
\newcommand{\bpmerr}[1]{{\scriptsize\boldmath$\pm #1$}}%
\resizebox{\columnwidth}{!}{%
\begin{tabular}{l*{7}{r@{\;}l}}
\toprule
Metric & \multicolumn{2}{c}{CNP} & \multicolumn{2}{c}{AttCNP} & \multicolumn{2}{c}{TNP} & \multicolumn{2}{c}{TETNP} & \multicolumn{2}{c}{ConvCNP} & \multicolumn{2}{c}{SConvCNP} & \multicolumn{2}{c}{\textbf{Ours}} \\
\midrule
Log-likelihood $\uparrow$ & 1.73 & \pmerr{0.10} & 1.80 & \pmerr{0.01} & 1.93 & \pmerr{0.06} & 1.72 & \pmerr{0.02} & 1.98 & \pmerr{0.02} & \textbf{2.05} & \bpmerr{0.02} & \textbf{2.11} & \bpmerr{0.01} \\
RMSE $\downarrow$ & 0.05 & \pmerr{0.00} & 0.05 & \pmerr{0.00} & \textbf{0.04} & \bpmerr{0.00} & 0.05 & \pmerr{0.00} & \textbf{0.04} & \bpmerr{0.00} & \textbf{0.04} & \bpmerr{0.00} & \textbf{0.03} & \bpmerr{0.01} \\
\bottomrule
\end{tabular}}}
\vspace{0.45em}
\includegraphics[width=\textwidth, trim=0 6 0 0, clip]{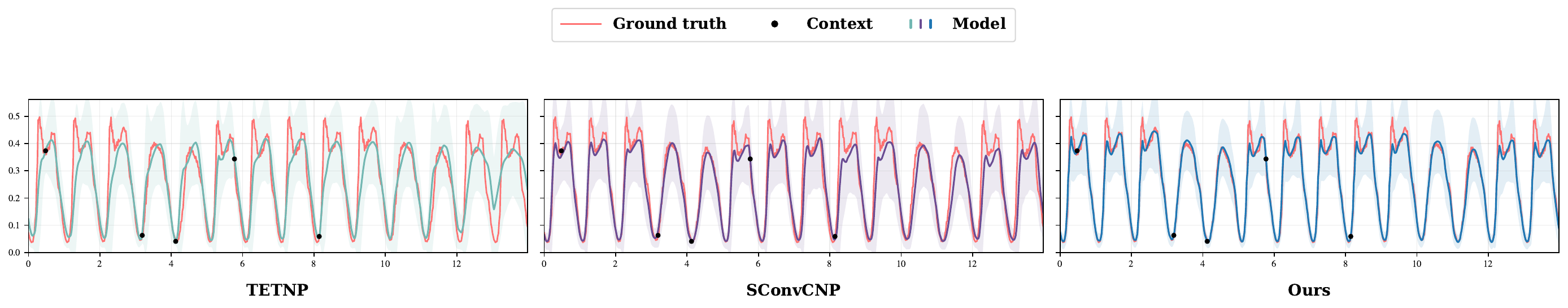}
\vspace{-1.0em}
\begingroup
\setlength{\abovecaptionskip}{0pt}
\captionof{figure}{Representative California traffic flow imputation episodes. The red
curve denotes the ground truth, the coloured curve denotes the predictive mean,
and the band denotes uncertainty.}
\label{fig:traffic-hard-gap-cases}
\endgroup
\end{figure}

\textbf{Chichester Harbour.}\label{sec:climate-data} We evaluate short-horizon
forecasting \cite{mussati2024neural} on Chimet station observations from
Chichester Harbour, UK. The data cover every June from 2009 to 2025, contain
144,474 observations, and record five weather attributes: wind speed, air
temperature, tidal depth, average wave height, and barometric pressure. We treat
each June as an independent seasonal realisation and split the data
chronologically into training (2009--2020), validation (2021--2022), and testing
(2023--2025) sets. Given a 48-hour context, models forecast all five variables
over the next 12 hours and are trained with an \(L^2\) loss. We retain the
Gaussian output parameterisation and visualise the output-head scale in
Figure~\ref{fig:chimet-depth-avwht-forecast} as predictive bands; because the
training objective is \(L^2\) rather than a proper probabilistic scoring rule,
these bands are interpreted qualitatively rather than as calibrated
uncertainties. Table~\ref{tab:chimet-forecast-results} and
Figure~\ref{fig:chimet-depth-avwht-forecast} show that STNP transfers recurring
patterns while adapting to local context and produces more informative
predictive bands than TETNP and SConvCNP.

\vspace*{\fill}
\begin{figure}[H]
\centering
\captionof{table}{Chimet climate forecasting performance on the test set. Lower is
better.}
\label{tab:chimet-forecast-results}
\vspace{-0.15em}
{\scriptsize
\setlength{\tabcolsep}{2.8pt}
\renewcommand{\arraystretch}{0.92}
\resizebox{\textwidth}{!}{%
\begin{tabular}{lccccccccccc}
\toprule
Metric & NP & CNP & ANP & CANP & BNP & BANP & ConvCNP & SConvCNP & TNP & TETNP & \textbf{Ours} \\
\midrule
RMSE & 3.217 & 3.165 & 1.887 & 1.900 & 3.154 & 1.990 & 1.809 & 1.713 & 2.204 & 1.725 & \textbf{1.469} \\
MAE & 2.029 & 2.002 & 1.441 & 1.485 & 2.001 & 1.530 & 1.367 & 1.295 & 1.729 & 1.299 & \textbf{1.098} \\
\bottomrule
\end{tabular}}}

\vspace{0.25em}

\includegraphics[width=\textwidth]{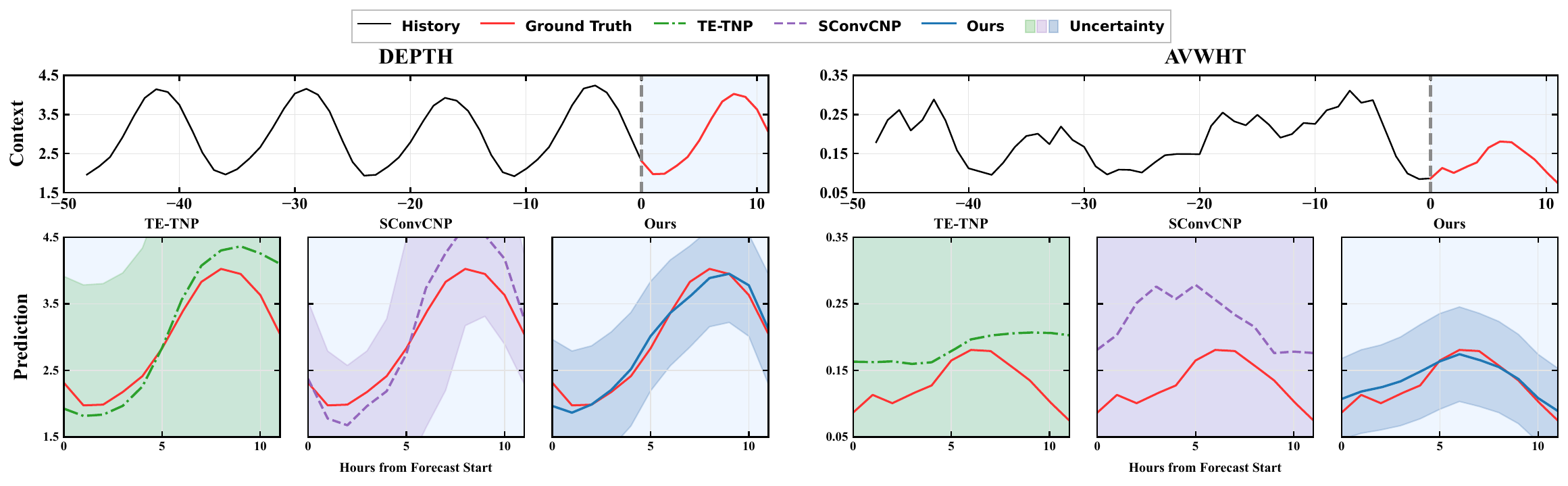}
\vspace{-0.5em}
\captionof{figure}{Representative short-horizon forecasts on the Chimet climate
dataset for tidal depth and average wave height, comparing TETNP, SConvCNP, and
STNP. Red curves denote ground truth, black curves denote context observations,
coloured curves denote model predictions, and shaded regions show qualitative
predictive bands derived from Gaussian output heads.}
\label{fig:chimet-depth-avwht-forecast}
\end{figure}

\clearpage

\subsection{Ablation Study}
\label{sec:ablation-study}

\begin{table}[!t]
\centering
\caption{Targeted ablation studies. The periodic-kernel GP
columns report target-point log-likelihoods. The
ETTh1 columns report MAE and MSE.}
\label{tab:ablation-main}
\resizebox{\textwidth}{!}{%
\begin{tabular}{llS[table-format=-1.2]S[table-format=-1.2]S[table-format=-1.2]@{\hspace{1.2em}}S[table-format=1.3]S[table-format=1.3]}
\toprule
& & \multicolumn{3}{c}{Periodic-kernel Gaussian process} & \multicolumn{2}{c}{ETTh1} \\
\cmidrule(lr){3-5}\cmidrule(lr){6-7}
Category & Variant & \multicolumn{1}{c}{\(m_{\min}=10\)} & \multicolumn{1}{c}{\(m_{\min}=15\)} & \multicolumn{1}{c}{\(m_{\min}=20\)} & \multicolumn{1}{c}{MAE} & \multicolumn{1}{c}{MSE} \\
\midrule
Full model & STNP (ours) & 0.63 & 1.02 & 1.14 & 0.487 & 0.539 \\
\midrule
\multirow{3}{*}{Embedding parameterisation}
& \([\cos(W_p x)\mid\sin(W_p x)\mid\phi_{\mathrm{mlp}}(x,y)]\) & -0.36 & -0.35 & -0.35 & 0.512 & 0.592 \\
& \([\phi_{\mathrm{disc}}(x)\mid\phi_{\mathrm{mlp}}(x,y)]\) & 0.10 & 0.21 & 0.29 & 0.513 & 0.595 \\
& \([\phi_{\mathrm{rff}}(x)\mid\phi_{\mathrm{mlp}}(x,y)]\) & 0.46 & 0.60 & 0.71 & 0.513 & 0.593 \\
\midrule
\multirow{2}{*}{Responsibility network}
& CNN (ours) & 0.63 & 1.02 & 1.14 & 0.487 & 0.539 \\
& FFN & 0.05 & 0.67 & 0.81 & 0.495 & 0.541 \\
\bottomrule
\end{tabular}}
\end{table}
We conduct targeted ablations to isolate the contribution of the spectral branch. 
Table~\ref{tab:ablation-main} compares the full STNP with three embedding
variants and two responsibility-network choices on the periodic-kernel GP
benchmark and ETTh1. The FAN-style variant performs poorly on periodic regression,
showing that the gains do not come from adding sinusoidal features alone.
Directly using the empirical discrete spectrum, \(\phi_{\rm disc}\), improves
over fixed Fourier features but remains far below the full model, indicating
that raw spectral estimation is insufficient. Sampling frequencies from this
empirical spectrum, \(\phi_{\rm rff}\), further improves over
\(\phi_{\rm disc}\). However, it still underperforms STNP, which
shows that the learned responsibility-based decomposition and continuous mixture
compression are essential. The responsibility-network ablation further supports this conclusion. Replacing
the CNN with an FFN degrades performance, especially on sparse periodic
regression, suggesting that local interactions along the frequency axis are
important for robust spectral decomposition. On ETTh1, the differences are
smaller but the full STNP remains the best overall, indicating that the proposed
spectral branch is most beneficial when frequency estimation is difficult and
context observations are sparse. Additional ablation studies are provided in
Appendix~\ref{app:ablation}.

\section{Related Work}

CNPs and NPs
\cite{Garnelo2018ConditionalNP,Garnelo2018NeuralP} were the first neural
process models. ConvCNPs \cite{Gordon2019ConvolutionalCN} were the first
variants to introduce translation equivariance into NPs, and SConvCNP
\cite{Mohseni2024SpectralCC} built on this idea using Fourier neural
operators. ANPs \cite{Kim2019AttentiveNP} and TNPs
\cite{Nguyen2022TransformerNP} introduced attention and transformer-based
architectures, respectively. TETNPs were proposed to introduce translation
equivariance into TNPs. Other variants include Bootstrapping NPs
\cite{Lee2020BootstrappingNP} and Gaussian NPs \cite{Bruinsma2021TheGN}.
Sequential NPs \cite{Singh2019SequentialNP} were proposed for sequential data,
and Gridded TNPs \cite{Ashman2025GriddedTN} for structured grid data. DINPs
\cite{Venkataramanan2025DistanceinformedNP} and RNPs \cite{Wang2024RnyiNP}
modified the training objectives to introduce a locality-oriented inductive bias and dampen the effects of a
misspecified prior, respectively. Biased Scan Attention TNPs \cite{Jenson2025ScalableSI}
introduced Gaussian-kernel priors directly into the attention mechanism.
Appendix~\ref{app:extended-related-work} provides a discussion of
related NPs.

\section{Conclusion}
We introduced Spectral Transformer Neural Processes (STNPs) for modelling periodicity and quasi-periodicity. STNPs use
a Spectral Aggregator to extract frequency-domain information from the context
set and a compressive feature sampling mechanism to convert this information
into compact spectral features for the downstream TNP backbone. From a
Gaussian-process perspective, these features implicitly equip TNPs with a
context-adaptive spectral mixture kernel, combining the inductive bias of
spectral GPs with the flexibility of TNPs. The experiments
show that STNPs extend the scope of Neural Processes beyond translation
equivariance towards effective modelling of periodic and quasi-periodic tasks.
An open question is how to select the frequency grid size and the
component--sample allocation in a task-adaptive way.

\clearpage
\nocite{*}
\bibliographystyle{plainnat}
\bibliography{example_paper}

\clearpage


\appendix

\section{Technical appendices and supplementary material}
%

\subsection{Symbols}
\label{app:symbols}

Table~\ref{tab:key-symbols} summarises the main symbols used throughout the
paper and appendix. Table~\ref{tab:training-step-complexity} summarises the
asymptotic training-step complexity of the compared methods.

\begin{table*}[t]
\centering
\small
\renewcommand{\arraystretch}{1.12}
\caption{Key symbols used in the main text and appendix.}
\label{tab:key-symbols}
\begin{tabular}{>{\centering\arraybackslash}m{0.22\textwidth}>{\raggedright\arraybackslash}m{0.72\textwidth}}
\toprule
\textbf{Symbol} & \textbf{Meaning} \\
\midrule
\(\mathcal{X}, \mathcal{Y}\) & Input and output spaces. \\
\(f \sim p(f)\) & Task/function sampled from the meta-distribution. \\
\(\mathcal{D}, \mathcal{D}_C, \mathcal{D}_T\) & Full episode, context set, and target set. \\
\(N, M\) & Total number of points in an episode and number of context points. \\
\(\tau\) & Token sequence constructed from context and target pairs for the TNP encoder. \\
\(K\) & Number of fixed grid frequencies used to estimate the empirical spectrum. \\
\(\tilde{\omega}_k\) or \(\tilde{\boldsymbol\omega}_k\) & \(k\)-th scalar or vector frequency on the fixed spectral grid. \\
\(E_k, p_k\) & Empirical spectral energy at frequency \(\tilde{\omega}_k\) and the corresponding normalised discrete spectral mass. \\
\(\mathcal P, d_p\) & Selected input-coordinate subset used by the spectral branch and its dimension \(d_p=|\mathcal P|\). \\
\(\mathcal A\) & Selected output-channel subset used for spectralisation in multi-output settings. \\
\(E_{k,c}, E_k^{\mathcal A}\) & Channel-wise stabilised spectral energy and aggregated spectral energy over selected output channels. \\
\(Q\) & Number of latent spectral mixture components. \\
\(r_{qk}\) & Responsibility of component \(q\) for grid frequency \(\tilde{\omega}_k\) or \(\tilde{\boldsymbol\omega}_k\), i.e. \(\mathbb P_\phi(Z=q\mid \Omega_{\mathcal{D}_C}=\tilde{\omega}_k)\) in the scalar case. \\
\((w_q,\mu_q,\sigma_q^2)\) or \((w_q,\boldsymbol\mu_q,\mathbf\Sigma_q)\) & Weight, mean frequency, and variance or covariance of the \(q\)-th Gaussian mixture component. \\
\(D_0\) & Number of sampled frequencies drawn from each mixture component. \\
\(\omega_{q,d}\) or \(\boldsymbol\omega_{q,d}\) & \(d\)-th sampled scalar or vector frequency from mixture component \(q\). \\
\(\phi_q\) & Optional phase associated with the \(q\)-th spectral component. \\
\(\phi_{\mathrm{S}}(x)\) or \(\phi_{\mathrm{S}}(\mathbf x)\) & Context-conditioned spectral embedding evaluated at an input. \\
\(\phi_{\mathrm{S}}^{(c)}(\mathbf x)\) & Channel-wise spectral feature inferred from selected output channel \(c\). \\
\(e(x,y)\) & Final token embedding obtained by concatenating spectral and MLP features and projecting them to the model dimension. \\
\(\bm{\mu}_i,\bm{\sigma}_i,\diag(\bm{\sigma}_i^2)\) & Per-target predictive mean, standard deviation, and diagonal covariance in the Gaussian output distribution. \\
\bottomrule
\end{tabular}

\vspace{0.9em}
\caption{Training-step complexity per episode for the compared neural-process
methods. Here \(N\) denotes the total number of context and target points, \(B\)
the number of bootstrap or latent samples, \(G\) the number of grid points, and
\(S(G)\) the spectral convolution cost on the grid. For STNP, the spectral-grid
quantities \(K,Q,D_0,d_p\), and \(|\mathcal A|\) are treated as fixed
hyperparameters.}
\label{tab:training-step-complexity}
\begin{tabular}{lc}
\toprule
\textbf{Method} & \textbf{Training-step complexity per episode} \\
\midrule
NP & \(\mathcal O(N)\) \\
CNP & \(\mathcal O(N)\) \\
ANP & \(\mathcal O(N^2)\) \\
CANP & \(\mathcal O(N^2)\) \\
BNP & \(\mathcal O(BN)\) \\
BANP & \(\mathcal O(BN^2)\) \\
ConvCNP & \(\mathcal O(GN)\) \\
SConvCNP & \(\mathcal O(GN + S(G))\) \\
TNP & \(\mathcal O(N^2)\) \\
TETNP & \(\mathcal O(N^2)\) \\
STNP (ours) & \(\mathcal O(N^2)\) \\
\bottomrule
\end{tabular}
\end{table*}

\subsection{Optional phase estimation}
\label{app:optional-phase-estimation}

When phase shifts are enabled, we estimate a component-wise phase from the
normalised complex spectral direction of a scalar spectral branch. The definition
is unchanged for multi-dimensional inputs because each frequency-location
projection remains scalar:
\(\tilde{\boldsymbol\omega}_k^\top \mathbf x_i^{\mathcal P}\).
For a branch using the selected input coordinates \(\mathcal P\), let
\(\{\tilde{\boldsymbol\omega}_k\}_{k=1}^K\subset\mathbb R^{d_p}\) be its
frequency grid and let \(a_k,b_k,E_k\) be the stabilised cosine-sine statistics
defined from that branch. We set
\[
u_k=\frac{a_k + i b_k}{\sqrt{E_k}},
\]
and define the component-wise weighted circular mean
\[
z_q
=
\sum_{k=1}^{K}
\mathbb{P}\!\left(
\Omega_{\mathcal{D}_C}=\tilde{\boldsymbol\omega}_k \mid Z=q
\right)u_k,
\qquad q=1,\dots,Q.
\]
The phase is then
\[
\phi_q = \arg(z_q).
\]
The sampled vector frequencies are subsequently evaluated as
\[
\cos\!\bigl(\boldsymbol\omega_{q,d}^{\top}\mathbf x^{\mathcal P}-\phi_q\bigr),
\qquad
\sin\!\bigl(\boldsymbol\omega_{q,d}^{\top}\mathbf x^{\mathcal P}-\phi_q\bigr).
\]

For channel-wise selected spectra in multi-output settings, the same formula is
applied independently to each selected output channel \(c\in\mathcal A\), using
\(a_{k,c}\), \(b_{k,c}\), and \(E_{k,c}\), and producing phases
\(\phi_q^{(c)}\) for the corresponding channel-wise feature
\(\phi_S^{(c)}(\mathbf x)\). A shared selected-channel energy
\(E_k^{\mathcal A}\) does not by itself determine a unique complex phase
direction; if no scalar or projected response direction is specified for such a
shared branch, we disable phase shifts and set \(\phi_q=0\).

\subsection{Multi-dimensional extension}
\label{app:multidim}
\paragraph{Experimental instantiations.}
The experiments instantiate the cases developed below as follows.
\begin{enumerate}[leftmargin=*,itemsep=0.15em,topsep=0.2em]
\item \textbf{Synthetic regression and California traffic flow.}
These are scalar-input, scalar-output settings with \(d_x=d_y=1\) and
\(\mathcal P=\{1\}\).

\item \textbf{DTD image completion.}
The spectral branch uses the full two-dimensional pixel coordinate,
\(\mathcal P=\{1,2\}\), and the shared selected-channel spectrum with all RGB
channels, \(\mathcal A=\{1,2,3\}\).

\item \textbf{Six-dataset forecasting benchmark.}
The spectral branch is applied to the temporal coordinate, so
\(\mathcal P=\{1\}\). For multivariate datasets, the spectral summary uses the
shared all-channel aggregation, \(\mathcal A=\{1,\dots,d_y\}\).

\item \textbf{Chimet.}
The input side also uses the temporal coordinate, so \(\mathcal P=\{1\}\). The
prediction head outputs all five variables
\(\{\mathrm{WSPD},\mathrm{ATMP},\mathrm{DEPTH},\mathrm{AVWHT},\mathrm{BARO}\}\),
but the spectral branch uses channel-wise selected spectra with
\(\mathcal A=\{\mathrm{ATMP},\mathrm{DEPTH}\}\). Specifically, one scalar
spectral branch is built from ATMP and one from DEPTH, using the period ranges
reported in Table~\ref{tab:chimet-ours-hyperparameters}; their spectral
features are concatenated with the time-domain embeddings before the prediction
head outputs all five variables. Thus, the Chimet setting uses selected output
channels for spectralisation while retaining a five-dimensional forecasting
target.
\end{enumerate}

\paragraph{Core construction for multi-dimensional inputs.}
The one-dimensional construction extends to multi-dimensional inputs by replacing
scalar frequencies with frequency vectors and scalar Gaussian components with
multivariate Gaussian components. We first present the scalar-output
construction. Let \(\mathcal P\subseteq\{1,\dots,d_x\}\) denote the input
coordinates used by the spectral branch, let \(d_p:=|\mathcal P|\), and write
\(\mathbf x_i^{\mathcal P}\in\mathbb R^{d_p}\) for the corresponding subvector.
The full-input and input-subset choices for \(\mathcal P\) are discussed after
the core construction.

\paragraph{Step 1: Empirical discrete spectral distribution in \(\mathbb R^{d_p}\).}
Let \(\{\tilde{\boldsymbol\omega}_k\}_{k=1}^K \subset \mathbb R^{d_p}\) be a fixed
frequency grid, and let
\[
\mathcal{D}_C = \{(\mathbf x_i, y_i)\}_{i=1}^M
\]
denote the context set. For scalar outputs, we first centre the responses:
\[
y_i^{(s)} = y_i - \bar y,
\qquad
\bar y = \frac{1}{M}\sum_{i=1}^M y_i.
\]
We then estimate spectral energy on the fixed frequency grid by projecting the
centred context responses onto the cosine--sine basis associated with each
frequency vector \(\tilde{\boldsymbol\omega}_k\):
\[
a_k
=
\frac{1}{M}\sum_{i=1}^M y_i^{(s)}
\cos\!\bigl(\tilde{\boldsymbol\omega}_k^\top \mathbf x_i^{\mathcal P}\bigr),
\qquad
b_k
=
\frac{1}{M}\sum_{i=1}^M y_i^{(s)}
\sin\!\bigl(\tilde{\boldsymbol\omega}_k^\top \mathbf x_i^{\mathcal P}\bigr),
\]
\[
E_k = a_k^2 + b_k^2 + \varepsilon.
\]
The small constant \(\varepsilon>0\) stabilises the logarithms, square-root
normalisations, and discrete probabilities below; we use the same stabilised
energies in the experiments. We normalise these energies into a discrete
distribution over the frequency grid:
\[
p_k = \frac{E_k}{\sum_{j=1}^K E_j},
\qquad
\sum_{k=1}^K p_k = 1.
\]
Equivalently, this induces a discrete random variable
\[
\Omega_{\mathcal{D}_C} \in
\{\tilde{\boldsymbol\omega}_1,\dots,\tilde{\boldsymbol\omega}_K\}
\]
with
\[
\mathbb P(\Omega_{\mathcal{D}_C} = \tilde{\boldsymbol\omega}_k) = p_k,
\]
and the associated empirical discrete spectral measure
\[
\hat{\nu}_{\mathcal{D}_C}
=
\sum_{k=1}^K p_k\,\delta_{\tilde{\boldsymbol\omega}_k}.
\]

\paragraph{Step 2: Latent spectral decomposition with a responsibility network.}
As in the one-dimensional case, we introduce a latent mixture variable
\[
Z \in \{1,\dots,Q\},
\]
which indicates the spectral component to which a grid frequency belongs. For
each grid point \(k\), we form the spectral summary
\[
\mathbf S_k
=
\left[
\log E_k,\;
\frac{a_k}{\sqrt{E_k}},\;
\frac{b_k}{\sqrt{E_k}},\;
\frac{\tilde{\boldsymbol\omega}_k}{\omega_{\max}}
\right]
\in \mathbb R^{3+d_p},
\qquad
\omega_{\max} := \max_{1\le j\le K}\|\tilde{\boldsymbol\omega}_j\|_2 .
\]
Collecting all grid points yields a sequence or tensor
\[
\{\mathbf S_k\}_{k=1}^K.
\]
These summaries are processed by a lightweight responsibility network
\(g_\phi\), which outputs component logits \(\ell_{qk}\) and the corresponding
responsibilities
\[
r_{qk}
=
\mathbb P_\phi\!\left(Z=q \mid \Omega_{\mathcal{D}_C}=\tilde{\boldsymbol\omega}_k\right)
=
\frac{\exp(\ell_{qk})}{\sum_{q'=1}^Q \exp(\ell_{q'k})},
\qquad
\sum_{q=1}^Q r_{qk} = 1.
\]
When the frequency grid admits a tensor-product structure, \(g_\phi\) may be
implemented as a \(d_p\)-dimensional convolutional network over the grid. In
practice, one may also flatten the grid using a fixed ordering and apply a
one-dimensional convolutional or sequence model. In either case, the purpose of
\(g_\phi\) is to encourage local continuity and neighbourhood-aware decomposition
across nearby frequencies, rather than treating grid points independently.

\paragraph{Step 3: Probabilistic spectral compression.}
Given the discrete spectral distribution \(p_k\) and the responsibilities
\(r_{qk}\), the induced joint distribution and mixture weights are
\[
\mathbb P\!\left(Z=q,\Omega_{\mathcal{D}_C}=\tilde{\boldsymbol\omega}_k\right)
=
r_{qk}p_k,
\qquad
w_q = \mathbb P(Z=q) = \sum_{k=1}^K r_{qk}p_k.
\]
This gives the component-wise conditional distribution over the frequency grid:
\[
\mathbb P\!\left(
\Omega_{\mathcal{D}_C}=\tilde{\boldsymbol\omega}_k \mid Z=q
\right)
=
\frac{r_{qk}p_k}{w_q}.
\]
We then define the component moments in \(\mathbb R^{d_p}\):
\[
\boldsymbol\mu_q
=
\mathbb E[\Omega_{\mathcal{D}_C} \mid Z=q]
=
\frac{\sum_{k=1}^K r_{qk}p_k\,\tilde{\boldsymbol\omega}_k}{w_q}
\in \mathbb R^{d_p},
\]
\[
\mathbf\Sigma_q
=
\mathrm{Cov}(\Omega_{\mathcal{D}_C} \mid Z=q)
=
\frac{
\sum_{k=1}^K r_{qk}p_k
(\tilde{\boldsymbol\omega}_k-\boldsymbol\mu_q)
(\tilde{\boldsymbol\omega}_k-\boldsymbol\mu_q)^\top
}{w_q}
\in \mathbb R^{d_p\times d_p}.
\]
This yields a compact Gaussian-mixture approximation to the empirical discrete
spectrum:
\[
g_{\theta_{\mathcal{D}_C}}(\boldsymbol\omega)
=
\sum_{q=1}^Q
w_q\,\mathcal N(\boldsymbol\omega;\boldsymbol\mu_q,\mathbf\Sigma_q),
\qquad
\theta_{\mathcal{D}_C}
=
\{(w_q,\boldsymbol\mu_q,\mathbf\Sigma_q)\}_{q=1}^Q.
\]
Equivalently, this defines a continuous Gaussian-mixture measure
\[
\nu_{\theta_{\mathcal{D}_C}}(d\boldsymbol\omega)
=
g_{\theta_{\mathcal{D}_C}}(\boldsymbol\omega)\,d\boldsymbol\omega,
\]
which serves as a low-dimensional continuous approximation to the discrete
spectral measure \(\hat{\nu}_{\mathcal{D}_C}\).

For numerical stability, one may optionally use an isotropic approximation
\[
\mathbf\Sigma_q \approx \sigma_q^2 \mathbf I_{d_p},
\qquad
\sigma_q^2 := \frac{1}{d_p}\mathrm{tr}(\mathbf\Sigma_q),
\]
or a diagonal approximation
\[
\mathbf\Sigma_q \approx \mathrm{diag}(\boldsymbol\sigma_q^2).
\]

\paragraph{Optional phase estimation.}
Phase estimation extends directly to the multi-dimensional setting because
\(\tilde{\boldsymbol\omega}_k^\top \mathbf x^{\mathcal P}\) remains scalar. Define the
normalised complex spectral coefficient
\[
u_k := \frac{a_k + i b_k}{\sqrt{E_k}}.
\]
Then, for each component \(q\), define the weighted circular mean
\[
z_q
=
\sum_{k=1}^K
\mathbb P\!\left(
\Omega_{\mathcal{D}_C}=\tilde{\boldsymbol\omega}_k \mid Z=q
\right) u_k.
\]
The corresponding component-wise phase is
\[
\phi_q = \arg(z_q),
\qquad
q=1,\dots,Q.
\]
This allows the learned spectral basis to capture task-level phase offsets in
addition to dominant frequencies and bandwidths. For multi-output settings, the
same definition applies to each scalar or channel-wise spectral branch, as
described in Appendix~\ref{app:optional-phase-estimation}.

\paragraph{Step 4: Sampling frequency vectors and constructing the spectral embedding.}
For each component \(q\), we draw \(D_0\) frequency vectors via
reparameterisation:
\[
\boldsymbol\omega_{q,d}
=
\boldsymbol\mu_q + \mathbf L_q \boldsymbol\epsilon_{q,d},
\qquad
\boldsymbol\epsilon_{q,d}\sim \mathcal N(\mathbf 0,\mathbf I_{d_p}),
\qquad
\mathbf L_q \mathbf L_q^\top = \mathbf\Sigma_q,
\qquad
d=1,\dots,D_0.
\]
Under the isotropic approximation, this reduces to
\[
\boldsymbol\omega_{q,d}
=
\boldsymbol\mu_q + \sigma_q \boldsymbol\epsilon_{q,d},
\qquad
\boldsymbol\epsilon_{q,d}\sim \mathcal N(\mathbf 0,\mathbf I_{d_p}).
\]
For an input \(\mathbf x\), using the coordinates \(\mathbf x^{\mathcal P}\), we
construct the spectral embedding
\[
\phi_S(\mathbf x)
=
\bigoplus_{q=1}^Q \bigoplus_{d=1}^{D_0}
\sqrt{\frac{w_q}{D_0}}
\begin{bmatrix}
\cos\!\bigl(\boldsymbol\omega_{q,d}^\top \mathbf x^{\mathcal P}-\phi_q\bigr)\\[0.2em]
\sin\!\bigl(\boldsymbol\omega_{q,d}^\top \mathbf x^{\mathcal P}-\phi_q\bigr)
\end{bmatrix}
\in \mathbb R^{2QD_0}.
\]
If phases are disabled, we simply set \(\phi_q=0\). As in the one-dimensional
setting, the spectral embedding may be concatenated with a non-spectral branch
that processes the remaining input coordinates or other non-periodic structure.
When \(\mathcal P=\{1,\dots,d_x\}\), this embedding is defined on the full input.

\paragraph{Input-coordinate cases.}
The input side has two cases. In the full-input case,
\(\mathcal P=\{1,\dots,d_x\}\), \(d_p=d_x\), and
\(\mathbf x_i^{\mathcal P}=\mathbf x_i\), so the spectral branch operates on all
input coordinates. In the input-subset case,
\(\mathcal P\subsetneq\{1,\dots,d_x\}\), and the spectral branch is restricted
to coordinates known to carry the dominant oscillatory structure. This choice
affects only the argument \(\mathbf x^{\mathcal P}\) used in the spectral
features; the remaining input coordinates can still be processed by the
time-domain embedding.

\paragraph{Output-channel cases.}
When \(y_i\in\mathbb R^{d_y}\), the output dimension of the prediction head need
not match the channels used to estimate the spectrum. We distinguish the
following cases.

\textbf{Scalar output.}
If \(d_y=1\), the scalar-output construction above applies directly.

\textbf{Projected output.}
One can project the centred responses to a scalar channel,
\[
\tilde y_i^{(s)} = \mathbf u^\top (y_i-\bar{\mathbf y}),
\qquad
\|\mathbf u\|_2=1,
\]
and then reuse the scalar-output construction above.

\textbf{Shared selected-channel spectrum.}
More generally, let
\(\mathcal A\subseteq\{1,\dots,d_y\}\) denote the subset of output channels used
by the spectral branch. This is distinct from the input-coordinate subset
\(\mathcal P\). For each selected channel \(c\in\mathcal A\), compute
\[
a_{k,c}
=
\frac{1}{M}\sum_{i=1}^M (y_{i,c}-\bar y_c)
\cos\!\bigl(\tilde{\boldsymbol\omega}_k^\top \mathbf x_i^{\mathcal P}\bigr),
\qquad
b_{k,c}
=
\frac{1}{M}\sum_{i=1}^M (y_{i,c}-\bar y_c)
\sin\!\bigl(\tilde{\boldsymbol\omega}_k^\top \mathbf x_i^{\mathcal P}\bigr),
\]
and the stabilised channel energy
\[
E_{k,c}=a_{k,c}^2+b_{k,c}^2+\varepsilon.
\]
If the selected channels are used to form one shared spectrum, we aggregate
their energy as
\[
E_k^{\mathcal A}=\sum_{c\in\mathcal A} E_{k,c},
\qquad
p_k^{\mathcal A}
=
\frac{E_k^{\mathcal A}}{\sum_{j=1}^{K}E_j^{\mathcal A}}.
\]
The corresponding responsibility-network input can be written as
\[
\mathbf S_k^{\mathcal A}
=
\left[
\log E_k^{\mathcal A},\;
\left\{\frac{a_{k,c}}{\sqrt{E_k^{\mathcal A}}},
\frac{b_{k,c}}{\sqrt{E_k^{\mathcal A}}}\right\}_{c\in\mathcal A},\;
\frac{\tilde{\boldsymbol\omega}_k}{\omega_{\max}}
\right]
\in\mathbb R^{1+2|\mathcal A|+d_p}.
\]
Taking \(\mathcal A=\{1,\dots,d_y\}\) recovers all-channel aggregation.

\textbf{Channel-wise selected spectra.}
When different selected output channels have distinct period ranges, one may
instead run a separate scalar spectral branch for each
\(c\in\mathcal A\), producing channel-wise parameters
\(\theta_{\mathcal{D}_C}^{(c)}\) and features
\(\phi_S^{(c)}(\mathbf x)\). Each branch may use its own frequency grid and
hyperparameters, such as \(K_c\), \(Q_c\), \(D_{0,c}\), and period range. The
final spectral feature is the concatenation
\[
\phi_S^{\mathcal A}(\mathbf x)
=
\bigoplus_{c\in\mathcal A}\phi_S^{(c)}(\mathbf x).
\]
In both cases, \(\mathcal A\) only controls how the spectral summary is inferred;
the prediction head can still output all \(d_y\) channels.

\paragraph{Interpretation.}
The multi-dimensional extension therefore does not require the periodic
coordinates to be known a priori. In its most general form, the spectral branch
operates on the full input and should be viewed as a task-adaptive harmonic
feature extractor. Restricting the construction to a subset
\(\mathcal P\) is optional and serves primarily as a structured, lower-dimensional,
and more interpretable special case when prior knowledge indicates that only part
of the input carries dominant oscillatory structure.

\subsection{Complexity Analysis}
\label{app:complexity}

We analyse the evaluation-time complexity of our model in terms of the total
number of points \(N\) and the number of context points \(M\), so that the number
of target points is \(N-M\). Our method does not modify the TNP backbone itself;
instead, it replaces the original input embedding with the proposed
task-adaptive spectral embedding. Therefore, the total forward complexity can be
decomposed into two stages. The first is the construction of the SMK embedding and the second
is forward propagation through the standard TNP backbone, i.e., 
\(
T_{\mathrm{ours}}(N,M)
\in
\mathcal O\!\Bigl(
T_{\mathrm{embed}}^{\mathrm{SMK}}(N,M)
+
T_{\mathrm{backbone}}^{\mathrm{TNP}}(N,M)
\Bigr).
\)

The resulting asymptotic training-step complexities are summarised in
Table~\ref{tab:training-step-complexity}.

\paragraph{TNP backbone complexity.}
The proposed embedder first constructs per-token features by concatenating a
spectral branch and a non-spectral branch. For a single shared spectral branch,
the spectral width is \(2QD_0\). For channel-wise selected spectra, it is
\(2\sum_{c\in\mathcal A}Q_cD_{0,c}\), where each selected output channel may use
its own frequency grid and mixture allocation. After concatenation with the
non-spectral branch of width \(h\), the features are projected back to the fixed
model width \(d_{\mathrm{model}}\). Hence the TNP encoder still receives the
same type of input as in the original model, namely a length-\(N\) sequence of
\(d_{\mathrm{model}}\)-dimensional token embeddings. Its dominant cost is the
self-attention computation over all context and target tokens, which scales
quadratically with the sequence length
\(
T_{\mathrm{backbone}}^{\mathrm{TNP}}(N,M)
=
\mathcal O(N^2).
\)

\paragraph{Embedding complexity.}
We further decompose the SMK embedding cost into
\[
T_{\mathrm{embed}}^{\mathrm{SMK}}(N,M)
=
T_{\theta}(M)
+
T_{\phi}(N,M),
\]
where \(T_{\theta}(M)\) is the cost of estimating the global spectral mixture
parameters from the context set only and \(T_{\phi}(N,M)\) is the cost of
evaluating the resulting embedding for all \(N\) tokens.

\paragraph{1. Context-side spectral parameter estimation.}
This stage operates only on the context set and consists of three operations.

\smallskip
\noindent\textbf{(a) Irregular spectral energy estimation.}
Let \(K\) be the number of grid frequencies and let
\(\mathcal P\subseteq\{1,\dots,d_x\}\) be the input-coordinate subset used by
the spectral branch, with \(d_p=|\mathcal P|\). For each of the \(M\) context
points and each of the \(K\) vector frequency grid points, the model computes a
frequency-location projection of the form
\(
\tilde{\boldsymbol\omega}_k^\top \mathbf x_i^{\mathcal P},
\)
followed by cosine/sine evaluations and aggregation into spectral statistics.
Let \(\alpha\) be the number of scalar response channels used to estimate the
spectrum: \(\alpha=1\) for scalar or projected outputs, and
\(\alpha=|\mathcal A|\) for a shared selected-channel spectrum. The
frequency-location projection costs \(\mathcal O(d_p)\), and the response-side
aggregation costs \(\mathcal O(\alpha)\), so this step costs
\(
\mathcal O\!\left(MK(d_p+\alpha)\right).
\)

\smallskip
\noindent\textbf{(b) Frequency-axis convolutional responsibility network.}
The resulting spectral summary is arranged as a length-\(K\) sequence and passed
through a 1D convolutional stack. Let \(C\) be the hidden channel width,
\(\kappa\) the kernel size, \(Q\) the number of spectral mixture components, and
\(L_c\) the number of convolutional layers (counting the final output head). The
input channel dimension is
\[
C_{\mathrm{in}} = 1 + 2\alpha + d_p,
\]
which reduces to \(3+d_p\) for scalar or projected outputs and to
\(1+2|\mathcal A|+d_p\) for a shared selected-channel spectrum.

A standard 1D convolution with sequence length \(K\), kernel width \(\kappa\),
\(C_{\mathrm{in}}\) input channels, and \(C_{\mathrm{out}}\) output channels has
complexity
\(
\mathcal O(K \kappa C_{\mathrm{in}} C_{\mathrm{out}}),
\)
because each of the \(K\) output positions and \(C_{\mathrm{out}}\) output
channels requires a weighted sum over \(\kappa\) neighbouring positions and
\(C_{\mathrm{in}}\) input channels.

Applying this to our convolutional stack:
\begin{itemize}
    \item the first convolution maps the spectral summary, whose channel
    dimension is \(C_{\mathrm{in}}\), to \(C\) hidden channels, contributing
    \[
    \mathcal O(K\kappa C_{\mathrm{in}}C);
    \]
    \item the \(L_c-2\) hidden convolutions each map \(C\) channels to \(C\)
    channels, contributing
    \[
    \mathcal O((L_c-2)K\kappa C^2);
    \]
    \item the final \(1\times 1\) convolutional head maps the \(C\) hidden
    channels to the \(Q\) spectral mixture responsibilities, contributing
    \[
    \mathcal O(K C Q).
    \]
\end{itemize}
Hence the total convolutional cost is
\[
\mathcal O\!\left(
K\bigl(\kappa C_{\mathrm{in}}C + (L_c-2)\kappa C^2 + C Q\bigr)
\right).
\]
Here \(K\) denotes the total number of retained grid frequencies. If a
tensor-product frequency grid is processed by a \(d_p\)-dimensional convolution
rather than by a flattened one-dimensional stack, the same expression applies
after interpreting \(\kappa\) as the convolutional kernel volume.

\smallskip
\noindent\textbf{(c) Moment-based spectral compression.}
Given the discrete spectral weights and the learned responsibilities, the model
computes the mixture parameters
\((w_q,\boldsymbol\mu_q,\mathbf\Sigma_q)\) for \(q=1,\dots,Q\) by weighted
moment matching over the \(K\)-point grid in \(\mathbb R^{d_p}\). Each component
must aggregate over all \(K\) grid frequencies. Computing the mean costs
\(\mathcal O(QKd_p)\), while forming a full covariance costs
\(
\mathcal O(QKd_p^2).
\)
If the isotropic or diagonal covariance approximation from
Appendix~\ref{app:multidim} is used, the covariance term reduces to
\(\mathcal O(QKd_p)\).

\smallskip
\noindent Combining the three parts above,
\[
T_{\theta}(M)
=
\mathcal O\!\left(
M K(d_p+\alpha)
+
K\bigl(\kappa C_{\mathrm{in}}C + (L_c-2)\kappa C^2 + C Q\bigr)
+
Q K d_p^2
\right).
\]
The last term should be read as \(QKd_p\) under an isotropic or diagonal
covariance parameterisation.

\paragraph{2. Token-wise embedding construction.}
Once the spectral parameters are estimated from the context set, they are used to
construct token embeddings for all \(N\) tokens. This stage consists of two
parts.

\smallskip
\noindent\textbf{(a) Spectral features.}
For each of the \(Q\) mixture components, the model samples \(D_0\) vector
frequencies in \(\mathbb R^{d_p}\). Sampling with a full covariance requires
\(\mathcal O(Qd_p^3)\) operations to factorise the component covariances and
\(\mathcal O(QD_0d_p^2)\) operations for the matrix-vector products; this
reduces to \(\mathcal O(QD_0d_p)\) for diagonal or isotropic covariances. Each
token is then evaluated against \(QD_0\) sampled frequencies. Since the
frequency-location projection
\(\boldsymbol\omega_{q,d}^{\top}\mathbf x^{\mathcal P}\) for one
token-frequency pair costs \(\mathcal O(d_p)\), the spectral branch costs
\[
\mathcal O(Qd_p^3 + QD_0d_p^2 + N Q D_0 d_p)
\]
in the full-covariance case. This corresponds to evaluating the Fourier features
\(\cos(\boldsymbol\omega^\top \mathbf x^{\mathcal P})\) and
\(\sin(\boldsymbol\omega^\top \mathbf x^{\mathcal P})\) for every token after
the task-level spectral parameters have been inferred.

\smallskip
\noindent\textbf{(b) Non-spectral branch and final projection.}
The non-spectral MLP is applied independently to each token, and its output is
concatenated with the spectral branch before a final projection back to the fixed
model width \(d_{\mathrm{model}}\). Since the hidden widths and depths of this
branch are treated as fixed hyperparameters, this contributes only linearly in
the number of tokens
\(
\mathcal O(N).
\)
Therefore,
\(
T_{\phi}(N,M)
=
\mathcal O\!\left(
Q d_p^3
+
Q D_0 d_p^2
+
N Q D_0 d_p
+
N
\right).
\)

\paragraph{3. Total embedding complexity.}
By combining the context-side estimation and the token-wise embedding
construction, we obtain
\begin{align*}
T_{\mathrm{embed}}^{\mathrm{SMK}}(N,M)
=
\mathcal O\Bigl(
&
M K(d_p+\alpha)
+
K\bigl(\kappa C_{\mathrm{in}}C + (L_c-2)\kappa C^2 + C Q\bigr)
+
Q K d_p^2
\\
&+
Q d_p^3
+
Q D_0 d_p^2
+
N Q D_0 d_p
+
N
\Bigr).
\end{align*}
For channel-wise selected spectra, such as the Chimet configuration, the same
expression is applied independently to each selected channel
\(c\in\mathcal A\) using that branch's \((K_c,Q_c,D_{0,c})\), and the resulting
spectral costs are summed. The non-spectral branch and final projection are
still evaluated once for the full token sequence.

\paragraph{4. Overall complexity.}
Substituting this into the backbone-plus-embedding decomposition yields
\begin{align*}
T_{\mathrm{ours}}(N,M)
\in
\mathcal O\Bigl(
&
N^2
+
M K(d_p+\alpha)
+
K\bigl(\kappa C_{\mathrm{in}}C + (L_c-2)\kappa C^2 + C Q\bigr)
\\
&+
Q K d_p^2
+
Q d_p^3
+
Q D_0 d_p^2
+
N Q D_0 d_p
+
N
\Bigr).
\end{align*}

When \(K,Q,D_0,d_p,\alpha,\kappa,C\), and \(L_c\) (or the corresponding
channel-wise sets \(\{K_c,Q_c,D_{0,c}\}_{c\in\mathcal A}\)) are treated as
fixed hyperparameters, all terms except those depending on \(N\) and \(M\)
reduce to constants. Thus,
\(
T_{\mathrm{embed}}^{\mathrm{SMK}}(N,M)
=
\mathcal O(M+N).
\)
Since \(M \le N\), this simplifies to
\(
T_{\mathrm{embed}}^{\mathrm{SMK}}(N,M)=\mathcal O(N).
\)
Hence,
\[
T_{\mathrm{ours}}(N,M)
\in
\mathcal O(N^2+N)=\mathcal O(N^2).
\]

Therefore, the proposed SMK module increases the pre-encoder computation only by
a lower-order term and does not change the overall asymptotic complexity class
of the model. The dominant computational bottleneck remains the quadratic
self-attention in the TNP backbone.

\subsection{Proofs for Section~\ref{sec:smfe}}
\label{app:sec3-proofs}

\subsubsection{Definitions of translation equivariance and translation invariance}
\label{app:translation-definitions}

We state the definitions for the multi-dimensional spectral branch used in
Appendix~\ref{app:multidim}. Let
\(\mathcal P\subseteq\{1,\dots,d_x\}\) be the selected input-coordinate subset,
let \(d_p=|\mathcal P|\), and write
\(\mathbf x^{\mathcal P}\in\mathbb R^{d_p}\) for the corresponding coordinate
subvector. For \(\boldsymbol\Delta\in\mathbb R^{d_p}\), define the translation
operator \(T_{\boldsymbol\Delta}\) by
\[
(T_{\boldsymbol\Delta}\mathbf x)^{\mathcal P}
=
\mathbf x^{\mathcal P}+\boldsymbol\Delta,
\]
with coordinates outside \(\mathcal P\) left unchanged when
\(\mathcal P\ne\{1,\dots,d_x\}\). The full-input case is obtained by taking
\(\mathcal P=\{1,\dots,d_x\}\).

\begin{definition}[Translation equivariance]
\label{def:translation-equivariance}
Let \(\phi:\mathcal X\to\mathcal V\) be a feature map into a representation
space \(\mathcal V\). We say that \(\phi\) is translation equivariant with
respect to the selected coordinates \(\mathcal P\) if, for every
\(\boldsymbol\Delta\in\mathbb R^{d_p}\), there exists a linear operator
\(\rho_{\boldsymbol\Delta}:\mathcal V\to\mathcal V\) such that
\[
\phi(T_{\boldsymbol\Delta}\mathbf x)
=
\rho_{\boldsymbol\Delta}\phi(\mathbf x)
\qquad \text{for all }\mathbf x\in\mathcal X.
\]
\end{definition}

\begin{definition}[Translation invariance]
\label{def:translation-invariance}
Let \(k:\mathcal X\times\mathcal X\to\mathbb R\) be a scalar-valued function.
We say that \(k\) is translation invariant with respect to the selected
coordinates \(\mathcal P\) if, for every
\(\boldsymbol\Delta\in\mathbb R^{d_p}\),
\[
k(T_{\boldsymbol\Delta}\mathbf x,T_{\boldsymbol\Delta}\mathbf x')
=
k(\mathbf x,\mathbf x')
\qquad \text{for all }\mathbf x,\mathbf x'\in\mathcal X.
\]
\end{definition}

\subsubsection{Proof of Proposition~\ref{prop:translation-equivariance}}
\label{app:proof-translation-equivariance}
\begin{proof}
We prove the multi-dimensional statement; the one-dimensional proposition in
the main text is recovered by taking \(d_p=1\). Fix the inferred spectral
mixture parameters
\[
\theta_{\mathcal{D}_C}
=
\{(w_q,\boldsymbol\mu_q,\mathbf\Sigma_q,\phi_q)\}_{q=1}^Q
\]
and fix the sampled frequency vectors
\(\boldsymbol\omega_{q,d}\in\mathbb R^{d_p}\). For one \((q,d)\) block, define
\[
\psi_{q,d}(\mathbf x)
=
\sqrt{\frac{w_q}{D_0}}
\begin{bmatrix}
\cos(\boldsymbol\omega_{q,d}^{\top}\mathbf x^{\mathcal P}-\phi_q)\\
\sin(\boldsymbol\omega_{q,d}^{\top}\mathbf x^{\mathcal P}-\phi_q)
\end{bmatrix}.
\]
For a selected-coordinate translation \(\boldsymbol\Delta\), we have
\[
\boldsymbol\omega_{q,d}^{\top}
(T_{\boldsymbol\Delta}\mathbf x)^{\mathcal P}
-\phi_q
=
\boldsymbol\omega_{q,d}^{\top}\mathbf x^{\mathcal P}
-\phi_q
+\boldsymbol\omega_{q,d}^{\top}\boldsymbol\Delta .
\]
Using the angle-addition identities with
\(a=\boldsymbol\omega_{q,d}^{\top}\mathbf x^{\mathcal P}-\phi_q\) and
\(b=\boldsymbol\omega_{q,d}^{\top}\boldsymbol\Delta\), we obtain
\[
\psi_{q,d}(T_{\boldsymbol\Delta}\mathbf x)
=
R(\boldsymbol\omega_{q,d}^{\top}\boldsymbol\Delta)\psi_{q,d}(\mathbf x),
\]
where
\[
R(\alpha)
=
\begin{bmatrix}
\cos\alpha & -\sin\alpha\\
\sin\alpha & \cos\alpha
\end{bmatrix}.
\]
The full spectral feature is the direct sum over all \((q,d)\) blocks, hence
\[
\phi_S(T_{\boldsymbol\Delta}\mathbf x)
=
\left(
\bigoplus_{q=1}^{Q}\bigoplus_{d=1}^{D_0}
R(\boldsymbol\omega_{q,d}^{\top}\boldsymbol\Delta)
\right)
\phi_S(\mathbf x).
\]
Thus \(\phi_S\) is translation equivariant with
\[
\rho_{\boldsymbol\Delta}(\theta_{\mathcal{D}_C})
=
\bigoplus_{q=1}^{Q}\bigoplus_{d=1}^{D_0}
R(\boldsymbol\omega_{q,d}^{\top}\boldsymbol\Delta).
\]
If multi-output channel-wise spectral features
\(\phi_S^{\mathcal A}(\mathbf x)=\bigoplus_{c\in\mathcal A}\phi_S^{(c)}(\mathbf x)\)
are used, the same argument applies to each channel-wise branch and the
corresponding representation is the direct sum of the branch-wise rotation
operators.
\end{proof}

\subsubsection{Proof of Corollary~\ref{cor:translation-invariance}}
\label{app:proof-translation-invariance}
\begin{proof}
By Proposition~\ref{prop:translation-equivariance},
\[
\phi_S(T_{\boldsymbol\Delta}\mathbf x)
=
\rho_{\boldsymbol\Delta}(\theta_{\mathcal{D}_C})\phi_S(\mathbf x),
\qquad
\phi_S(T_{\boldsymbol\Delta}\mathbf x')
=
\rho_{\boldsymbol\Delta}(\theta_{\mathcal{D}_C})\phi_S(\mathbf x').
\]
Each block
\(R(\boldsymbol\omega_{q,d}^{\top}\boldsymbol\Delta)\) is orthogonal, so the
block-diagonal matrix \(\rho_{\boldsymbol\Delta}(\theta_{\mathcal{D}_C})\) is
orthogonal:
\[
\rho_{\boldsymbol\Delta}(\theta_{\mathcal{D}_C})^\top
\rho_{\boldsymbol\Delta}(\theta_{\mathcal{D}_C})
=I.
\]
Therefore the induced random-feature inner product satisfies
\[
\begin{aligned}
\hat k_{\theta_{\mathcal{D}_C}}
(T_{\boldsymbol\Delta}\mathbf x,T_{\boldsymbol\Delta}\mathbf x')
&=
\phi_S(\mathbf x)^\top
\rho_{\boldsymbol\Delta}(\theta_{\mathcal{D}_C})^\top
\rho_{\boldsymbol\Delta}(\theta_{\mathcal{D}_C})
\phi_S(\mathbf x')\\
&=
\phi_S(\mathbf x)^\top\phi_S(\mathbf x')
=
\hat k_{\theta_{\mathcal{D}_C}}(\mathbf x,\mathbf x').
\end{aligned}
\]
This proves translation invariance with respect to the selected coordinates.
\end{proof}

\subsubsection{Proof of Proposition~\ref{prop:stationary-kernel}}
\label{app:proof-stationary-kernel}
\begin{proof}
Let
\[
\boldsymbol\tau
=
\mathbf x^{\mathcal P}-(\mathbf x')^{\mathcal P}
\in\mathbb R^{d_p}.
\]
For a fixed \((q,d)\) block,
\[
\begin{aligned}
&
\begin{bmatrix}
\cos(\boldsymbol\omega_{q,d}^{\top}\mathbf x^{\mathcal P}-\phi_q)\\
\sin(\boldsymbol\omega_{q,d}^{\top}\mathbf x^{\mathcal P}-\phi_q)
\end{bmatrix}^{\!\top}
\begin{bmatrix}
\cos(\boldsymbol\omega_{q,d}^{\top}(\mathbf x')^{\mathcal P}-\phi_q)\\
\sin(\boldsymbol\omega_{q,d}^{\top}(\mathbf x')^{\mathcal P}-\phi_q)
\end{bmatrix} \\
&\qquad
=
\cos\!\left(
\boldsymbol\omega_{q,d}^{\top}
(\mathbf x^{\mathcal P}-(\mathbf x')^{\mathcal P})
\right)
=
\cos(\boldsymbol\omega_{q,d}^{\top}\boldsymbol\tau).
\end{aligned}
\]
Hence, conditional on the sampled frequencies,
\[
\phi_S(\mathbf x)^\top\phi_S(\mathbf x')
=
\sum_{q=1}^Q\frac{w_q}{D_0}
\sum_{d=1}^{D_0}
\cos(\boldsymbol\omega_{q,d}^{\top}\boldsymbol\tau).
\]
Taking expectation over the sampled frequencies conditional on
\(\theta_{\mathcal{D}_C}\), with
\(\boldsymbol\omega\sim\mathcal N(\boldsymbol\mu_q,\mathbf\Sigma_q)\), gives
\[
k_{\theta_{\mathcal{D}_C}}(\mathbf x,\mathbf x')
=
\sum_{q=1}^{Q}
w_q\,
\mathbb E_{\boldsymbol\omega\sim
\mathcal N(\boldsymbol\mu_q,\mathbf\Sigma_q)}
\left[\cos(\boldsymbol\omega^\top\boldsymbol\tau)\right].
\]
Using the characteristic function of a multivariate Gaussian,
\[
\mathbb E
\left[
\exp(i\boldsymbol\omega^\top\boldsymbol\tau)
\right]
=
\exp\!\left(
i\boldsymbol\mu_q^\top\boldsymbol\tau
-\frac12\boldsymbol\tau^\top\mathbf\Sigma_q\boldsymbol\tau
\right),
\]
and taking real parts yields
\[
\mathbb E
\left[\cos(\boldsymbol\omega^\top\boldsymbol\tau)\right]
=
\exp\!\left(
-\frac12\boldsymbol\tau^\top\mathbf\Sigma_q\boldsymbol\tau
\right)
\cos(\boldsymbol\mu_q^\top\boldsymbol\tau).
\]
Therefore
\[
k_{\theta_{\mathcal{D}_C}}(\boldsymbol\tau)
=
\sum_{q=1}^{Q}
w_q
\exp\!\left(
-\frac12\boldsymbol\tau^\top\mathbf\Sigma_q\boldsymbol\tau
\right)
\cos(\boldsymbol\mu_q^\top\boldsymbol\tau).
\]
This expression depends on \(\mathbf x\) and \(\mathbf x'\) only through
\(\boldsymbol\tau=\mathbf x^{\mathcal P}-(\mathbf x')^{\mathcal P}\), so the
conditional kernel is stationary on the selected input coordinates. Under the
isotropic approximation \(\mathbf\Sigma_q=\sigma_q^2\mathbf I_{d_p}\), this
reduces to
\[
\sum_{q=1}^{Q}
w_q
\exp\!\left(-\frac12\sigma_q^2\|\boldsymbol\tau\|_2^2\right)
\cos(\boldsymbol\mu_q^\top\boldsymbol\tau),
\]
which recovers the scalar formula in the main text when \(d_p=1\).
\end{proof}

\subsubsection{Kolmogorov Extension Theorem}
\label{app:kolmogorov-extension}

For each fixed context set \(\mathcal{D}_C\), consider the family of
finite-dimensional predictive distributions
\[
\left\{
P_\Theta(\mathbf y_T\mid \mathbf x_T,\mathcal{D}_C)
\right\}_{n\in\mathbb N,\;\mathbf x_T\in\mathcal X^n},
\]
where
\(\mathbf x_T=(\mathbf x_1,\dots,\mathbf x_n)\) is a finite tuple of target
inputs and
\(\mathbf y_T=(\mathbf y_1,\dots,\mathbf y_n)\) denotes the corresponding
outputs. We say that this family is \emph{target-exchangeable} if, for every
permutation \(\pi\) of \(\{1,\dots,n\}\),
\[
P_\Theta(\mathbf y_1,\dots,\mathbf y_n
\mid
\mathbf x_1,\dots,\mathbf x_n,\mathcal{D}_C)
=
P_\Theta(\mathbf y_{\pi(1)},\dots,\mathbf y_{\pi(n)}
\mid
\mathbf x_{\pi(1)},\dots,\mathbf x_{\pi(n)},\mathcal{D}_C).
\]
We say that this family is \emph{marginalisable} if, for every
\(A\subseteq\{1,\dots,n\}\),
\[
P_\Theta(\mathbf y_A\mid \mathbf x_A,\mathcal{D}_C)
=
\int
P_\Theta(\mathbf y_T\mid \mathbf x_T,\mathcal{D}_C)
d\mathbf y_{T\setminus A}.
\]
By the Kolmogorov extension theorem, any target-exchangeable and
marginalisable family defines a conditional stochastic process indexed by
\(\mathcal X\).

\subsubsection{Proof of Proposition~\ref{prop:process-consistency}}
\label{app:proof-process-consistency}
\begin{proof}
Fix a context set \(\mathcal{D}_C\) and a finite target tuple
\(\mathbf x_T=(\mathbf x_1,\dots,\mathbf x_n)\). Under the target-wise
predictive parameterisation used by STNP, the predictive distribution
factorises as
\[
p_\Theta(\mathbf y_T\mid \mathbf x_T,\mathcal{D}_C)
=
\prod_{j=1}^{n}
p_\Theta(\mathbf y_j\mid \mathbf x_j,\mathcal{D}_C).
\]
Each factor is obtained by evaluating the same conditional prediction rule at
the target input \(\mathbf x_j\), with the context-dependent spectral summary
and the non-spectral branch fixed by \(\mathcal{D}_C\).

\paragraph{Context exchangeability.}
The empirical spectral statistics \(a_k,b_k,E_k\), the discrete probabilities
\(p_k\), and the component moments
\((w_q,\boldsymbol\mu_q,\mathbf\Sigma_q)\) are all sums or normalised sums over
the context set. They are therefore invariant to permutations of the elements
of \(\mathcal{D}_C\). The same holds for shared selected-channel and
channel-wise selected spectra because they are formed by sums over selected
output channels and over context points. Consequently, every predictive factor
\(p_\Theta(\mathbf y_j\mid \mathbf x_j,\mathcal{D}_C)\) is invariant to
permutations of the context set, so the predictive family is context
exchangeable.

\paragraph{Target exchangeability.}
For any permutation \(\pi\) of \(\{1,\dots,n\}\),
\[
\prod_{j=1}^{n}
p_\Theta(\mathbf y_{\pi(j)}
\mid
\mathbf x_{\pi(j)},\mathcal{D}_C)
=
\prod_{j=1}^{n}
p_\Theta(\mathbf y_j\mid \mathbf x_j,\mathcal{D}_C),
\]
because the product factors are merely reordered. Thus the family is
exchangeable in the target tuple.

\paragraph{Marginalisability.}
Let \(A\subseteq\{1,\dots,n\}\) and \(B=\{1,\dots,n\}\setminus A\). Then
\[
p_\Theta(\mathbf y_T\mid \mathbf x_T,\mathcal{D}_C)
=
\left(
\prod_{j\in A}
p_\Theta(\mathbf y_j\mid \mathbf x_j,\mathcal{D}_C)
\right)
\left(
\prod_{j\in B}
p_\Theta(\mathbf y_j\mid \mathbf x_j,\mathcal{D}_C)
\right).
\]
Integrating out \(\mathbf y_B\) gives
\[
\int
p_\Theta(\mathbf y_T\mid \mathbf x_T,\mathcal{D}_C)
d\mathbf y_B
=
\prod_{j\in A}
p_\Theta(\mathbf y_j\mid \mathbf x_j,\mathcal{D}_C),
\]
because each factor over \(\mathbf y_j\), \(j\in B\), is a normalised
predictive density. The right-hand side is exactly
\(p_\Theta(\mathbf y_A\mid \mathbf x_A,\mathcal{D}_C)\). Hence the family is
marginalisable, equivalently projectively consistent.

\paragraph{Existence of a conditional process.}
For every fixed context set \(\mathcal{D}_C\), the finite-dimensional predictive
family is target-exchangeable and marginalisable. The Kolmogorov extension
theorem therefore guarantees a conditional stochastic process
\(\{F_{\mathcal{D}_C}(\mathbf x)\}_{\mathbf x\in\mathcal X}\) whose
finite-dimensional distributions coincide with those defined by STNP. Thus
STNP defines a valid conditional stochastic process.
\end{proof}

\subsection{Synthetic regression details}
\label{app:synthetic-regression-details}

\paragraph{Data generation.}
All datasets in Section~\ref{sec:synthetic-regression} are synthetic
one-dimensional regression task distributions. During training, we do not use a
fixed pre-generated training set; instead, a fresh batch of tasks is sampled
online at every optimisation step, and all models are trained for 100,000
optimisation steps. For evaluation, we generate a fixed cached
evaluation set with an independent random seed and reuse exactly the same
cached tasks for all compared methods. In the implementation, this cache
contains 3000 batches with batch size 16 and is reused across all models.

For the standard GP benchmarks, the input locations are sampled uniformly from
\([-2,2]\), and the outputs are drawn from a zero-mean Gaussian process with a
kernel-specific covariance function. For the RBF and Mat\'ern-$5/2$ tasks, we
sample the number of context points as \(m \sim \mathcal{U}[3,47]\) and then
sample the number of target points as \((N-m) \sim \mathcal{U}[3,50-m]\). For
the periodic task, severe aliasing under sparse observations makes the
underlying periodic structure more difficult to recover, so we instead choose a
minimum context budget \(m_{\min}\in\{10,15,20,25\}\), draw
\(m \sim \mathcal{U}[m_{\min},47]\), and again sample
\((N-m) \sim \mathcal{U}[3,50-m]\). In all three GP benchmarks, we add a
diagonal observation-noise term with \(\sigma_{\varepsilon}=0.02\).

\begin{figure}[t]
\centering
\includegraphics[width=0.82\textwidth]{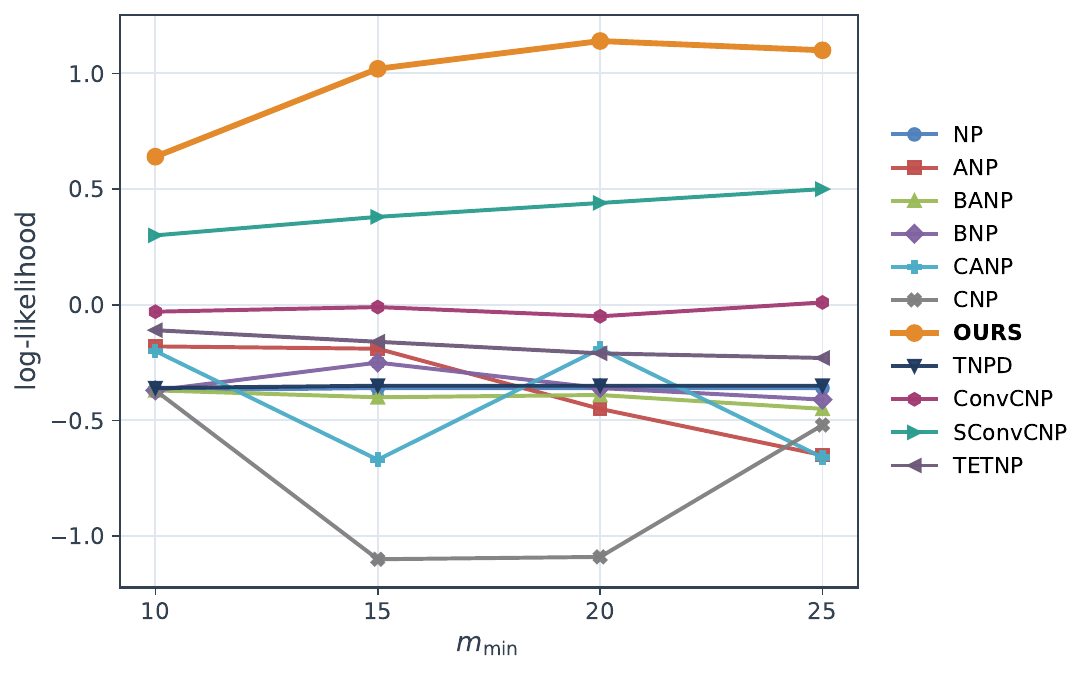}
\caption{Effect of the minimum context budget on predictive log-likelihood for
the periodic-kernel synthetic regression task. The evaluation varies
\(m_{\min}\), where context sizes are sampled from
\(m \sim \mathcal{U}[m_{\min},47]\); higher log-likelihood is better.}
\label{fig:synthetic-mmin-loglikelihood}
\end{figure}

For the RBF kernel, we use
\[
k_{\mathrm{RBF}}(x,x')
=
s^2 \exp\!\left(-\frac{|x-x'|^2}{2\ell^2}\right)
+
\sigma_{\varepsilon}^2 \mathbf{1}\{x=x'\},
\]
where \(s \sim \mathcal{U}(0.1,1.0)\) and \(\ell \sim \mathcal{U}(0.1,0.6)\).
For the Mat\'ern-$5/2$ kernel, we use
\[
k_{\mathrm{Mat}}(x,x')
=
s^2\left(1+\frac{\sqrt{5}\,r}{\ell}+\frac{5r^2}{3\ell^2}\right)
\exp\!\left(-\frac{\sqrt{5}\,r}{\ell}\right)
+
\sigma_{\varepsilon}^2 \mathbf{1}\{x=x'\},
\qquad r = |x-x'|.
\]
We use the same ranges for \(s\) and \(\ell\), together with the same diagonal
noise term. For the periodic kernel, we use
\[
k_{\mathrm{per}}(x,x')
=
s^2 \exp\!\left(-\frac{2\sin^2(\pi |x-x'|/p)}{\ell^2}\right)
+
\sigma_{\varepsilon}^2 \mathbf{1}\{x=x'\},
\]
where \(s \sim \mathcal{U}(0.1,1.0)\), \(\ell \sim \mathcal{U}(0.6,1.0)\), and
\(p \sim \mathcal{U}(0.1,0.5)\). These ranges randomise the amplitude,
smoothness, and period of each sampled function so that the model is trained on
a broad family of tasks rather than a single fixed kernel configuration.

Beyond the GP tasks, the synthetic generator also includes several structured
non-Gaussian benchmarks. For the sawtooth benchmark used in the qualitative
comparisons, we sample \(x\) uniformly from \([-2,2]\) and generate \(y\) from
the random sawtooth function
\[
y_{\mathrm{sawtooth}}(x)
:=
\frac{A}{2}
-
\frac{A}{\pi}
\sum_{k=1}^{\infty}
\frac{(-1)^k \sin\!\bigl(2\pi k f (x-\tau)\bigr)}{k},
\]
where \(A\) is the amplitude, \(f\) is the frequency, and \(\tau\) is the
horizontal shift. Throughout training, we fix \(A=1\), truncate the series at
an integer \(K\) sampled uniformly from \(\{10,\dots,20\}\), sample the
frequency \(f\) uniformly from \(\mathcal{U}(3,5)\), and sample the shift
\(\tau\) uniformly from \(\mathcal{U}(-5,5)\).

\paragraph{Attention recovery.}
Figure~\ref{fig:periodic-attention-recovery} visualises a representative
periodic episode with \(m=16\), where context and target points are
sorted by phase and the target-to-context attention of the final Transformer
layer is shown for TNP and STNP. In the baseline TNP, the attention maps are
dominated by nearly phase-independent vertical bands, meaning that many target
points repeatedly attend to the same few context points. By contrast, STNP
produces much sharper target-dependent patterns that align with the phase
structure of the signal. This suggests that the proposed spectral embedding
helps the inherited TNP attention recover the relevant context points from
which each target should gather information.

\begin{figure*}[t]
\centering
\includegraphics[width=\textwidth]{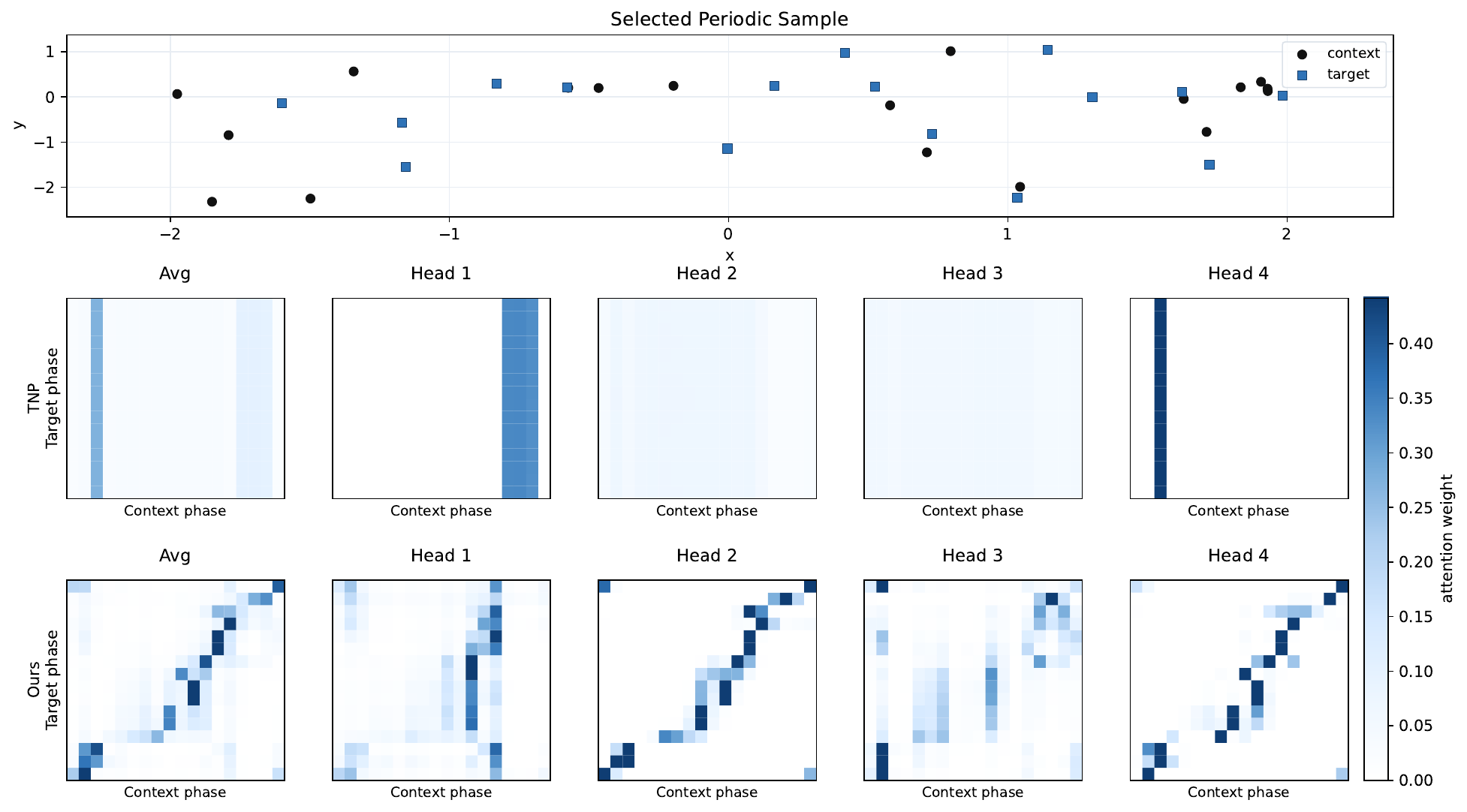}
\caption{Attention recovery on a representative periodic task with
\(m=16\). The top row shows the sampled context and target points. The
middle row shows the average and per-head target-to-context attention maps of
TNP after sorting both target and context points by phase, while the bottom row
shows the corresponding maps for STNP. TNP exhibits nearly phase-independent
vertical attention bands, whereas STNP displays phase-aligned, target-dependent
attention patterns, indicating that the spectral embedding helps the backbone
retrieve information from the appropriate context points.}
\label{fig:periodic-attention-recovery}
\end{figure*}

\paragraph{Hyperparameters.}
All baseline models use the default hyperparameters reported in their
respective prior work. For STNP, we use the TNP model configuration for the
underlying TNP backbone, together with the following settings relevant to the
spectral-feature implementation.
\begin{itemize}[leftmargin=1.4em,itemsep=1pt,topsep=2pt]
\item Learning rate: \(5\times 10^{-4}\) with cosine annealing scheduler.
\item Number of frequency-grid points: 128.
\item Number of spectral-mixture components: 6.
\item Number of sampled frequencies per component: 8.
\item Minimum period \(p_{\min}\): 0.1.
\item Maximum period \(p_{\max}\): 2.0.
\item Frequency-grid spacing: logarithmic.
\item Number of convolutional channels: 64.
\item Number of convolutional layers: 3.
\item Convolution kernel size: 5.
\item Convolution dilation growth: 1.
\item Convolutional dropout: 0.0.
\item Convolutional layer normalisation: disabled.
\end{itemize}

Table~\ref{tab:synthetic-parameter-counts} reports the trainable parameter
counts for the synthetic regression code configurations.

\begin{table}[t]
\centering
\small
\caption{Trainable parameter counts for the synthetic regression code
configurations.}
\label{tab:synthetic-parameter-counts}
\resizebox{\textwidth}{!}{%
\begin{tabular}{lcccccccccc}
\toprule
\textbf{Model} & NP & CNP & ANP & CANP & BNP & BANP & SConvCNP & TNP & TETNP & STNP (ours) \\
\midrule
\textbf{\# Parameters} & 232,194 & 215,682 & 348,418 & 331,906 & 248,450 & 364,674 & 225,770 & 222,146 & 249,754 & 232,706 \\
\bottomrule
\end{tabular}
}
\end{table}

\subsection{Ablation Study}
\label{app:ablation}

To isolate the contribution of each design choice in STNP, we include an
ablation study over the joint choice of the number of spectral components
\(Q\) and the number of sampled frequencies per component \(D_0\), the size of
the frequency grid \(K\), the architecture used for the responsibility
network, and the form of the Fourier feature map. We conduct all ablations on
the periodic-kernel Gaussian process regression task, using three minimum
context budgets \(m_{\min}\in\{10,15,20\}\). This is the most challenging
setting among our synthetic experiments, because strong periodic structure
combined with sparse observations induces severe aliasing, making the
underlying frequencies difficult to recover reliably. Unless otherwise stated,
all remaining training and evaluation settings are kept identical to the
default configuration used in the main experiments. We report the embedding,
joint \((Q,D_0)\), frequency-grid, and responsibility-network
ablations below.

\begin{table*}[t]
\centering
\caption{Ablation study on periodic-kernel Gaussian process regression.
Entries report target-point log-likelihoods. Columns correspond to the three
minimum-context settings \(m_{\min}\in\{10,15,20\}\). In the joint
\((Q,D_0)\) sweep, all settings keep \(Q D_0=48\), so the total spectral
feature budget is fixed while its allocation across latent components and
within-component samples changes.}
\label{tab:ablation}
\resizebox{\textwidth}{!}{
\begin{tabular}{llccc}
\toprule
Category & Variant & \(m_{\min}=10\) & \(m_{\min}=15\) & \(m_{\min}=20\) \\
\midrule
Full model & STNP (ours) & 0.63 & 1.02 & 1.14 \\
\midrule
\multirow{3}{*}{Embedding parameterisation}
& \([\cos(W_p x)\mid\sin(W_p x)\mid\phi_{\mathrm{mlp}}(x,y)]\) & -0.36 & -0.35 & -0.35 \\
& \([\phi_{\mathrm{disc}}(x)\mid\phi_{\mathrm{mlp}}(x,y)]\) & 0.10 & 0.21 & 0.29 \\
& \([\phi_{\mathrm{rff}}(x)\mid\phi_{\mathrm{mlp}}(x,y)]\) & 0.46 & 0.60 & 0.71 \\
\midrule
\multirow{10}{*}{Joint choice of \(Q\) and \(D_0\)}
& \((Q,D_0)=(1,48)\) & 0.14 & 0.13 & 0.22 \\
& \((Q,D_0)=(2,24)\) & 0.14 & 0.19 & 0.76 \\
& \((Q,D_0)=(3,16)\) & 0.58 & 0.84 & 0.86 \\
& \((Q,D_0)=(4,12)\) & 0.81 & 0.76 & 0.86 \\
& \((Q,D_0)=(6,8)\) (default) & 0.63 & 1.02 & 1.14 \\
& \((Q,D_0)=(8,6)\) & 0.62 & 0.76 & 1.02 \\
& \((Q,D_0)=(12,4)\) & 0.74 & 1.07 & 0.93 \\
& \((Q,D_0)=(16,3)\) & 0.59 & 0.99 & 1.06 \\
& \((Q,D_0)=(24,2)\) & 0.88 & 0.61 & 0.18 \\
& \((Q,D_0)=(48,1)\) & 0.58 & 1.04 & 1.18 \\
\midrule
\multirow{4}{*}{Frequency grid size \(K\)}
& \(K=64\) & 0.12 & 0.26 & 0.26 \\
& \(K=128\) (default) & 0.63 & 1.02 & 1.14 \\
& \(K=256\) & 0.18 & 0.81 & 0.89 \\
& \(K=512\) & 0.52 & 0.69 & 0.90 \\
\midrule
\multirow{2}{*}{Responsibility network}
& CNN (ours) & 0.63 & 1.02 & 1.14 \\
& FFN & 0.05 & 0.67 & 0.81 \\
\bottomrule
\end{tabular}}
\end{table*}

\paragraph{Replacing the proposed spectral embedding with direct FAN-style Fourier features.}
We first remove the context-conditioned spectral branch and instead concatenate
the base MLP embedding with a direct Fourier feature map of the form
\([\cos(W_p x)\mid\sin(W_p x)\mid\phi_{\mathrm{mlp}}(x,y)]\), following the
parameterisation used in Fourier Analysis Networks
\cite{Dong2024FANFA}. This ablation tests whether the gains come mainly from
using sinusoidal features in general, or from the task-adaptive spectral
mixture estimated from the context set. Performance drops sharply to
\((-0.36, -0.35, -0.35)\), far below both the full model
\((0.63, 1.02, 1.14)\) and the direct discrete-spectrum embedding
\((0.10, 0.21, 0.29)\). The gap to STNP grows from \(0.99\) at
\(m_{\min}=10\) to \(1.49\) at \(m_{\min}=20\), showing that the improvement
cannot be explained by the mere presence of sinusoidal features. Instead, the
key benefit comes from conditioning the spectral representation on the context
set and adapting it to the task at hand; without that mechanism, the Fourier
features provide a poor inductive bias for sparse periodic regression.

\paragraph{Replacing spectral compression and sampling with a direct discrete-spectrum embedding.}
We further consider an alternative that bypasses the latent
spectral decomposition, mixture compression, and frequency sampling steps. In
this variant, we use the empirical discrete spectrum from Step 1 directly to
construct the embedding while keeping the frequency grid size \(K\) identical
to the original model,
\[
\phi_{\mathrm{disc}}(x)
:=
\bigoplus_{k=1}^{K}
\sqrt{p_k}
\begin{bmatrix}
\cos(\tilde{\omega}_k x)\\[0.2em]
\sin(\tilde{\omega}_k x)
\end{bmatrix},
\]
and concatenate it with \(\phi_{\mathrm{mlp}}(x,y)\). This ablation isolates
the benefit of the learned compression-and-sampling mechanism from the benefit
of spectral estimation alone. This direct
discrete-spectrum variant reaches \((0.10, 0.21, 0.29)\), substantially
improving over the fixed FAN-style embedding \((-0.36, -0.35, -0.35)\) in
every setting. However, it remains well below the full model
\((0.63, 1.02, 1.14)\), with gaps of \(0.53\), \(0.81\), and \(0.85\) as
\(m_{\min}\) increases. This indicates that spectral
estimation alone is not enough.

\paragraph{Random Fourier features over the discrete spectrum.}
The direct discrete embedding \(\phi_{\mathrm{disc}}\) puts every grid frequency
\(\tilde{\omega}_k\) into the feature vector, weighted by \(\sqrt{p_k}\). A
natural alternative is to treat \((\tilde{\omega}_k, p_k)\) as a discrete
spectral measure and approximate the same kernel by a Random Fourier Feature
(RFF) Monte-Carlo estimator \cite{Rahimi2007RandomFF}. Concretely, we draw
\(M\) frequencies independently per task from the empirical spectrum,
\(\omega^{(m)}\sim\mathrm{Categorical}(p_1,\dots,p_K)\) over
\(\{\tilde{\omega}_1,\dots,\tilde{\omega}_K\}\), and form
\[
\phi_{\mathrm{rff}}(x)
:=
\frac{1}{\sqrt{M}}
\bigoplus_{m=1}^{M}
\begin{bmatrix}
\cos(\omega^{(m)} x)\\[0.2em]
\sin(\omega^{(m)} x)
\end{bmatrix},
\]
which satisfies
\(\mathbb{E}[\langle\phi_{\mathrm{rff}}(x),\phi_{\mathrm{rff}}(x')\rangle]
=\sum_{k=1}^{K} p_k \cos(\tilde{\omega}_k(x-x'))
=\langle\phi_{\mathrm{disc}}(x),\phi_{\mathrm{disc}}(x')\rangle\), so
\(\phi_{\mathrm{rff}}\) is an unbiased estimator of the same kernel as
\(\phi_{\mathrm{disc}}\) but with a controllable feature budget \(M\) instead
of the fixed grid size \(K\). For comparability with the full STNP, we set
\(M=Q D_0 = 48\) so that the spectral feature width matches the default
configuration. This ablation isolates two effects that are entangled in
\(\phi_{\mathrm{disc}}\): (i) using all \(K\) grid frequencies versus a
\(p\)-weighted sample of \(M\), and (ii) the resulting feature dimensionality
\(2K\) versus \(2M\). Each forward pass redraws \(\omega^{(m)}\) per task, so
the encoder also sees stochastic Fourier features at training time. Replacing
\(\phi_{\mathrm{disc}}\) with \(\phi_{\mathrm{rff}}\) yields target log-likelihoods
of \((0.46, 0.60, 0.71)\) at \(m_{\min}\in\{10,15,20\}\). This
shows that, although the two feature maps share the same kernel in expectation,
\(\phi_{\mathrm{rff}}\) and \(\phi_{\mathrm{disc}}\) provide measurably different
inductive biases for the downstream Transformer, and neither closes the gap to
the full STNP pipeline.

\paragraph{Effect of the joint choice of \(Q\) and \(D_0\).}
We jointly vary the number of mixture components \(Q\) and the number of
sampled frequencies per component \(D_0\), while keeping the remaining
architecture fixed. This ablation is intended to study the trade-off between
the granularity of the latent spectral decomposition and the expressivity of
the resulting spectral features, and to identify whether performance is more
sensitive to increasing the number of components or the number of samples per
component. To isolate this trade-off from a trivial capacity increase, we keep
\(Q D_0=48\) fixed throughout the sweep, so every setting uses the same
\(2Q D_0=96\)-dimensional spectral features. The results show that performance is
highly sensitive to how this fixed budget is allocated, and that the preferred
allocation depends strongly on the context-budget setting. Under the sparsest setting
\(m_{\min}=10\), very small \(Q\) performs poorly: \((1,48)\) and \((2,24)\)
both reach only \(0.14\), indicating that too few mixture components cannot
separate the aliased modes induced by sparse periodic observations. In this
setting, a more refined decomposition is beneficial, and the best result is
obtained by \((24,2)\) with \(0.88\), followed by \((4,12)\) with \(0.81\).

As more context becomes available, the preferred balance shifts towards larger
\(Q\) and smaller \(D_0\). For \(m_{\min}=15\), the strongest setting is
\((12,4)\) with \(1.07\), closely followed by \((48,1)\) with \(1.04\) and the
default \((6,8)\) with \(1.02\). For \(m_{\min}=20\), \((48,1)\) achieves the
highest score \(1.18\), slightly above the default \(1.14\), while
\((16,3)\) remains competitive at \(1.06\). This pattern suggests that once
the empirical spectrum is estimated more reliably, the model benefits more from
a finer latent decomposition into many narrow bands and relies less on drawing
many samples within each band.

The dependence is also clearly non-monotonic, which shows that \(Q\) and
\(D_0\) are not interchangeable despite having the same product. In particular,
\((24,2)\) is the best configuration at \(m_{\min}=10\) but collapses to
\(0.18\) at \(m_{\min}=20\), whereas \((48,1)\) is only moderate in the
sparsest setting but becomes the strongest once additional context is
available. The default choice \((6,8)\) is therefore not always the single best
point, but it remains consistently competitive across all three budgets and
avoids the large failures seen in more extreme allocations. Overall, the sweep
shows a genuine trade-off: larger \(Q\) improves frequency selectivity, whereas
larger \(D_0\) stabilises the representation within each latent band. Moderate
allocations such as \((6,8)\), \((12,4)\), and \((16,3)\) provide the most
reliable performance when the available context varies across tasks. Even so,
how to select or adapt the optimal pair \((Q,D_0)\) for a given task remains an
open and worthwhile direction for future study.

\paragraph{Effect of the frequency grid size \(K\).}
We vary the size of the fixed frequency grid \(K\) used to discretise the
empirical spectrum in Step~1, with \(K=128\) as the default setting used by
the full model. The dependence on grid resolution is clearly non-monotonic. A
low-resolution grid with \(K=64\) performs poorly in all three settings
\((0.12,0.26,0.26)\), indicating that it under-resolves the periodic spectrum
and leaves the downstream mixture with insufficient spectral detail. Increasing
the grid to \(K=128\) yields the best target log-likelihood in every setting
\((0.63,1.02,1.14)\). Pushing the grid further to \(K=256\) or \(K=512\)
recovers part of the loss relative to \(K=64\), but does not improve over the
default. This suggests that, under sparse periodic observations, an overly
fine discrete spectrum becomes harder to estimate and compress reliably, so
the gain in resolution is offset by increased estimation noise. The penalty of
using a larger-than-default grid is strongest at \(m_{\min}=10\), while for
\(m_{\min}=20\) the gap between \(K=256\) and \(K=512\) narrows to \(0.01\),
indicating that finer grids become more viable as more context points are
available. Overall, \(K=128\) provides the best trade-off between spectral
resolution and robustness in the low-context periodic setting.

\paragraph{Replacing the CNN responsibility network with an FFN.}
To test whether the CNN responsibility network is
important, we replace it with a
feed-forward network that outputs the same \(Q\) logits. This comparison
isolates the effect of local spectral interaction modelling in Step 2 from the
rest of the STNP pipeline. The FFN variant obtains \((0.05, 0.67, 0.81)\),
consistently underperforming the CNN-based model \((0.63, 1.02, 1.14)\). The
deficit is especially pronounced at \(m_{\min}=10\), where the gap reaches
\(0.58\), and narrows to \(0.35\) and \(0.33\) at \(m_{\min}=15\) and
\(m_{\min}=20\). This pattern suggests that convolution across neighbouring
grid frequencies is most valuable in the severely undersampled setting, where
aliasing makes local spectral aggregation critical. Relative to the direct
discrete embedding, the FFN is already much stronger for \(m_{\min}\ge 15\),
which indicates that preserving the compression-and-sampling pipeline matters
more than the exact responsibility architecture once enough context is
available.

\subsection{DTD Image Completion Configurations}
\label{app:dtd_configs}
\label{app:image-completion-details}
\paragraph{Datasets and training.}
We report the configurations used for the DTD image-completion experiments \cite{Mohseni2024SpectralCC}.
Unless otherwise stated, all models share the same data preprocessing and
training protocol. Following the DTD setup in SConvCNP, we use the standard DTD
split with 1880 images for each of training, validation, and test. During
training, validation, and testing, we sample a random \(192\times192\) crop
from each image and downsample it to \(64\times64\). Pixel coordinates are
linearly mapped to \([-1,1]\) along each axis, and RGB intensities are
normalised to \([0,1]\) independently across channels. All models are trained
for 500 epochs using AdamW with learning rate \(10^{-4}\), gradient clipping
\(0.5\), and batch size 16. For each batch, the number of context pixels is
sampled as \(M\sim\mathcal U[5,1024)\), and all remaining pixels are treated as
query points, i.e., \(|\mathcal{I}_Q|=64\times64-M\). The learnable parameter
counts for all evaluated models are reported in
Table~\ref{tab:dtd-parameter-counts}.

\begin{table}[t]
\centering
\small
\caption{Learnable parameter counts for all models evaluated in the image
completion experiments.}
\label{tab:dtd-parameter-counts}
\begin{tabular}{lcccccc}
\toprule
 & CNP & AttCNP & TNP & ConvCNP & SConvCNP & STNP \\
\midrule
Number of parameters (million) & 5.0 & 8.4 & 13.7 & 12.2 & 14.2 & 14.6 \\
\bottomrule
\end{tabular}
\end{table}

\paragraph{Hyperparameters.}
Unless otherwise stated, all non-STNP models use the same hyperparameter
settings as in the SConvCNP benchmark \cite{Mohseni2024SpectralCC}. We
therefore detail only the STNP configuration here. Our image variant uses the same architecture as TNP, except that the vanilla
input embedding is replaced with a task-adaptive spectral embedding.
For the spectral embedder, we use:
\begin{itemize}[leftmargin=1.4em,itemsep=1pt,topsep=2pt]
\item Embedding depth: 3.
\item Number of spectral mixture components: 8.
\item Number of sampled frequencies per component: 8.
\item Number of frequency-grid points per dimension: 8.
\item Minimum period \(p_{\min}\): 0.08.
\item Maximum period \(p_{\max}\): 2.0.
\item Frequency-grid spacing: logarithmic.
\item Number of convolutional channels: 64.
\item Number of convolutional layers: 3.
\item Convolution kernel size: 3.
\item Convolutional dropout: 0.0.
\item Convolutional layer normalisation: enabled.
\end{itemize}
The output likelihood matches the Gaussian output parameterisation used in the
transformer baseline.

\subsection{Real-world time-series dataset details}
\label{app:forecast-datasets}
\label{app:realworld-timeseries-details}

\subsubsection{California traffic flow experiment details}

\paragraph{Data and training.}
For the California traffic-flow imputation benchmark in Section~4.3, we use
the 2020 portion of the LargeST dataset \cite{Liu2023LargeSTAB}, which contains
5-minute traffic-flow measurements from 8,600 loop-detector sensors deployed
across California highways between 2017 and 2021. We discard sensors with more
than 50\% missing entries. For the remaining sensors, missing segments between
observed values are filled by linear interpolation, while leading and trailing
gaps are completed by forward and backward filling, respectively. The sensors
are then randomly partitioned into training, validation, and test splits in a
6:1:3 ratio.

Each sensor time series is partitioned into 26 non-overlapping windows of 14
days, corresponding to 4,032 raw time steps per window. Within each window,
time is reset to start at zero with 5-minute increments. We then downsample by
a factor of 6 to obtain 30-minute resolution and rescale time to units of
days. Traffic-flow values are min--max normalised using statistics computed on
the training split. This preprocessing yields 133,094 training windows from
5,119 sensors, 22,178 validation windows from 853 sensors, and 66,586 test
windows from 2,561 sensors.

Models are trained for 100 epochs using AdamW with learning rate \(10^{-4}\)
and gradient clipping with maximum norm \(0.5\). Each epoch contains roughly
4,160 iterations with batch size 32 tasks. For each batch, we fix \(N=50\) and
sample the context size as \(M\sim\mathcal U[5,25)\), sharing \(M\) across all
tasks in the batch. A task is then formed by sampling \(N\) timestamps uniformly
without replacement from a window, using \(M\) of them as context points and the
remaining \(N-M\) as target points. Training tasks are generated on the fly from
the training windows. For validation and testing, we precompute one task per
window with the same fixed \(N=50\) and context-size sampling protocol. The
learnable parameter counts for the evaluated models are reported in
Table~\ref{tab:california-parameter-counts}.

\begin{table}[t]
\centering
\small
\caption{Learnable parameter counts for the California traffic-flow imputation
experiment. Values are reported in millions.}
\label{tab:california-parameter-counts}
\begin{tabular}{lccccccc}
\toprule
 & CNP & AttCNP & TNP & TETNP & ConvCNP & SConvCNP & STNP \\
\midrule
Parameters (M) & 1.3 & 2.1 & 2.6 & 3.1 & 3.8 & 3.7 & 2.8 \\
\bottomrule
\end{tabular}
\end{table}

\paragraph{Hyperparameters.}
Unless otherwise stated, we follow the hyperparameter settings used in the
SConvCNP benchmark for the California traffic-flow imputation task, including
the baseline model configurations and the shared non-spectral backbone used by
STNP. The only additional design choices are those of our spectral branch, for
which we use:
\begin{itemize}[leftmargin=1.4em,itemsep=1pt,topsep=2pt]
\item Embedding depth: 3.
\item Number of spectral mixture components: 6.
\item Number of sampled frequencies per component: 8.
\item Number of frequency-grid points per dimension: 128.
\item Minimum period \(p_{\min}\): 0.5.
\item Maximum period \(p_{\max}\): 14.0.
\item Frequency-grid spacing: logarithmic.
\item Number of convolutional channels: 64.
\item Number of convolutional layers: 3.
\item Convolution kernel size: 5.
\item Convolutional dropout: 0.0.
\item Convolutional layer normalisation: disabled.
\end{itemize}

\subsubsection{General benchmark experiment details}

\begin{table}[t]
\centering
\caption{Summarised feature details of the six forecasting datasets.}
\label{tab:forecast-dataset-summary}
\small
\begin{tabular}{l|ccc}
\toprule
\textsc{Dataset} & \textsc{Len} & \textsc{Dim} & \textsc{Freq} \\
\midrule
ETTh1 & 17420 & 8 & 1H \\
Electricity & 26304 & 322 & 1H \\
Exchange & 7588 & 9 & 1 day \\
Traffic & 17544 & 863 & 1H \\
Weather & 52696 & 22 & 10 min \\
ILI & 966 & 8 & 7 days \\
\bottomrule
\end{tabular}
\end{table}

\paragraph{Data and training.}
Section~4.3 evaluates long-horizon forecasting on six standard real-world
benchmarks. (1) \(\texttt{ETTh1}\) is the hourly subset of the ETT benchmark
\cite{Zhou2020InformerBE}, which records electricity transformer measurements,
including load and oil temperature, over the period from July 2016 to July
2018. (2) \(\texttt{Electricity}\)\footnote{\url{https://archive.ics.uci.edu/ml/datasets/ElectricityLoadDiagrams20112014}} contains the hourly electricity consumption
of 321 clients from 2012 to 2014. (3) \(\texttt{Exchange~Rate}\)
\cite{Lai2017ModelingLA} records the daily exchange rates of eight countries
ranging from 1990 to 2016. (4) \(\texttt{Traffic}\)\footnote{\url{http://pems.dot.ca.gov/}} is a collection of hourly
road occupancy measurements from the California Department of Transportation,
capturing sensor readings on San Francisco Bay Area freeways. (5)
\(\texttt{Weather}\)\footnote{\url{https://www.bgc-jena.mpg.de/wetter/}} is recorded every 10 minutes over the year 2020 and
contains 21 meteorological variables, including air temperature, humidity, and
related atmospheric indicators. (6) \(\texttt{National~Illness}\)\footnote{\url{https://gis.cdc.gov/grasp/fluview/fluportaldashboard.html}} (ILI)
contains weekly influenza-like illness statistics reported by the U.S. Centers
for Disease Control and Prevention between 2002 and 2021, including both the
ratio of patients with ILI and the total number of patients.

We follow the standard preprocessing and benchmark protocol used in prior
long-horizon forecasting work
\cite{Wu2021AutoformerDT,Zhang2023MultiresolutionTT,Zhou2022FEDformerFE}.
Table~\ref{tab:forecast-dataset-summary} summarises the sequence lengths, raw
data dimensions, and sampling frequencies of these datasets.
For reference, Table~\ref{tab:forecast-np-family-results} reports the full
NP-family results, where STNP is strongest among the NP-family methods, while
Table~\ref{tab:forecast-reference-results} lists representative standard
time-series forecasting models as reference points.
Entries are reported as MAE/MSE, and lower values indicate better performance.

\begin{table}[t]
\centering
\caption{NP-family results on the general benchmark. Entries are
MAE/MSE, and lower is better.}
\label{tab:forecast-np-family-results}
\scriptsize
\setlength{\tabcolsep}{3pt}
\renewcommand{\arraystretch}{1.05}
\resizebox{\textwidth}{!}{%
\begin{tabular}{lcccccc}
\toprule
Method & Electricity & ETTh1 & Exchange Rate & National Illness (ILI) & Traffic & Weather \\
\midrule
NP & 0.405 / 0.340 & 0.807 / 1.110 & 0.813 / 0.924 & 1.276 / 3.794 & 0.402 / 0.706 & 0.551 / 0.625 \\
CNP & 0.401 / 0.330 & 0.779 / 1.061 & 0.801 / 0.895 & 1.270 / 3.898 & 0.397 / 0.722 & 0.568 / 0.619 \\
ANP & 0.364 / 0.286 & 0.820 / 1.182 & 0.840 / 0.991 & 1.395 / 3.885 & 0.372 / 0.670 & 0.516 / 0.518 \\
CANP & 0.366 / 0.285 & 0.798 / 1.106 & 0.853 / 1.019 & 1.396 / 4.314 & 0.366 / 0.661 & 0.418 / 0.344 \\
BNP & 0.399 / 0.332 & 0.794 / 1.067 & 0.845 / 1.103 & 1.337 / 4.021 & 0.393 / 0.702 & 0.599 / 0.686 \\
BANP & 0.367 / 0.289 & 0.757 / 1.002 & 0.890 / 1.110 & 1.351 / 4.020 & 0.365 / 0.662 & 0.464 / 0.406 \\
ConvCNP & 0.360 / 0.287 & 0.837 / 1.139 & 0.835 / 1.083 & 1.272 / 3.955 & 0.370 / 0.690 & 0.510 / 0.509 \\
SConvCNP & 0.364 / 0.284 & 0.827 / 1.105 & 0.846 / 1.048 & 1.281 / 3.733 & 0.332 / 0.618 & 0.525 / 0.593 \\
TNP & 0.830 / 1.012 & 1.106 / 0.800 & 1.443 / 3.036 & 1.827 / 6.593 & 0.800 / 1.462 & 0.610 / 0.641 \\
TETNP & 0.367 / 0.283 & 0.639 / 0.723 & 0.783 / 0.898 & 1.528 / 5.308 & 0.328 / 0.627 & 0.257 / 0.202 \\
\textbf{STNP (ours)} & \textbf{0.281 / 0.179} & \textbf{0.487 / 0.539} & \textbf{0.208 / 0.087} & \textbf{1.040 / 3.080} & \textbf{0.318 / 0.582} & \textbf{0.227 / 0.185} \\
\bottomrule
\end{tabular}}
\end{table}

\begin{table}[t]
\centering
\caption{Reference results for standard time-series forecasting models on the
general benchmark. Entries are MAE/MSE, and lower is better.}
\label{tab:forecast-reference-results}
\scriptsize
\setlength{\tabcolsep}{3pt}
\renewcommand{\arraystretch}{1.05}
\resizebox{\textwidth}{!}{%
\begin{tabular}{lcccccc}
\toprule
Method & Electricity & ETTh1 & Exchange Rate & National Illness (ILI) & Traffic & Weather \\
\midrule
FEDformer & 0.308 / 0.193 & \textbf{0.419 / 0.376} & 0.278 / 0.148 & 1.260 / 3.228 & 0.366 / 0.587 & 0.296 / 0.217 \\
Autoformer & 0.317 / 0.201 & 0.459 / 0.449 & 0.323 / 0.197 & 1.287 / 3.483 & 0.388 / 0.613 & 0.336 / 0.266 \\
Informer & 0.368 / 0.274 & 0.713 / 0.865 & 0.752 / 0.847 & 1.677 / 5.764 & 0.391 / 0.719 & 0.384 / 0.300 \\
LogTrans~\cite{LI2019EnhancingTL} & 0.357 / 0.258 & 0.740 / 0.878 & 0.812 / 0.968 & 1.444 / 4.480 & 0.384 / 0.684 & 0.490 / 0.458 \\
Reformer~\cite{Kitaev2020ReformerTE} & 0.402 / 0.312 & 0.728 / 0.837 & 0.829 / 1.065 & 1.382 / 4.400 & 0.423 / 0.732 & 0.596 / 0.689 \\
\textbf{STNP (ours)} & \textbf{0.281 / 0.179} & 0.487 / 0.539 & \textbf{0.208 / 0.087} & \textbf{1.040 / 3.080} & \textbf{0.318 / 0.582} & \textbf{0.227 / 0.185} \\
\bottomrule
\end{tabular}}
\end{table}

\paragraph{Hyperparameters.}
Because this benchmark involves multiple datasets and a large number of
compared models, enumerating all dataset-specific hyperparameter settings here
would be unnecessarily long. We therefore refer readers to the source code for
the complete implementation details and hyperparameter configurations.

\paragraph{Relation to the canonical NP setting.}
Neural Processes are typically formulated as meta-learners over a distribution
of stochastic processes, where each task corresponds to a different function
realisation and prediction is performed via random context-target splits. In
contrast, single-series time-series forecasting provides only one observed
realisation of an underlying temporal process, and therefore does not fully
match the canonical Neural Process setting. Under a sliding-window
formulation, each look-back-prediction pair is more appropriately viewed as an
episode induced from the same underlying data-generating mechanism, rather
than as an independent sample from a different stochastic process. Moreover,
standard Neural Process evaluation is often closer to interpolation or
imputation, whereas look-back-then-predict forecasting requires predicting a
contiguous future block from past observations only. This makes the task
fundamentally more challenging, as it emphasises extrapolation rather than
set-based imputation. At the same time, it is particularly informative for
evaluating whether the models can adapt to temporally varying and potentially
non-stationary dynamics across forecasting episodes. A task that more naturally
matches the meta-learning interpretation is the climate data experiment in
Section~\ref{sec:climate-data}, where each June trajectory provides a separate
realisation and forecasting windows define context-target tasks within that
realisation.

\subsubsection{Climate data experiment details}
\label{app:chimet-details}

\paragraph{Data and training.}
We construct the Chimet\footnote{\url{https://www.chimet.co.uk/search.aspx}}
June benchmark from observations collected in June
between 2009 and 2025. We use five variables: wind speed (\textbf{WSPD}), air
temperature (\textbf{ATMP}), tidal depth (\textbf{DEPTH}), average wave height
(\textbf{AVWHT}), and barometric pressure (\textbf{BARO}). Each year's June
trajectory is treated as an independent realisation of the same seasonal
stochastic process,
\[
f_y : \mathcal{X}_{\mathrm{June}} \rightarrow \mathbb{R}^{5},
\qquad y \in \{2009,\ldots,2025\},
\]
where \(\mathcal{X}_{\mathrm{June}}\) denotes the shared within-June time
domain. Different years are therefore not concatenated into a single
continuous time series.

The raw measurements are recorded at approximately 5-minute resolution. We
first remove duplicated timestamps by averaging duplicated records. For a
timestamp \(t\) with duplicate observations \(\{z_t^{(m)}\}_{m=1}^{M_t}\), we
define
\[
\bar z_t = \frac{1}{M_t}\sum_{m=1}^{M_t} z_t^{(m)}.
\]
We then apply simple physical sanity checks to remove implausible values.
Invalid readings are treated as missing and are not used when constructing
hourly observations.

The cleaned 5-minute data are aggregated to hourly resolution. Let \(B_h\)
denote the set of 5-minute timestamps falling into hour \(h\). For variable
\(v\), the hourly value is computed as
\[
y_{h,v} =
\frac{1}{|B_{h,v}^{\mathrm{valid}}|}
\sum_{t \in B_{h,v}^{\mathrm{valid}}}
\bar z_{t,v},
\]
where \(B_{h,v}^{\mathrm{valid}}\subseteq B_h\) contains valid observations of
variable \(v\). An hourly observation is retained only if it contains
sufficiently many valid raw samples and all selected variables are observed:
\[
n_h = \min_{v} |B_{h,v}^{\mathrm{valid}}| \geq 10,
\qquad
y_{h,v} \neq \mathrm{NaN}
\quad \forall v \in \mathcal{V},
\]
where
\[
\mathcal{V} =
\{\mathrm{WSPD}, \mathrm{ATMP}, \mathrm{DEPTH}, \mathrm{AVWHT}, \mathrm{BARO}\}.
\]

We do not interpolate across missing periods. Instead, after hourly
aggregation, we identify maximal continuous valid hourly segments within each
June trajectory. Forecasting windows are generated only inside these
continuous segments. This prevents the model from being trained or evaluated
on artificial trajectories created by interpolating across sensor outages or
large observation gaps.

For model input, the time coordinate is defined as the hour within June:
\[
x_h = h,
\qquad
h = 0,1,\ldots,719,
\]
where \(h=0\) corresponds to June 1st 00:00. Thus, the same within-June hour
in different years shares the same input coordinate, while the corresponding
function values differ across years. This preserves the Neural Process
interpretation in which different years provide different function
realisations over a common input domain.

Each variable is standardised using statistics computed from the training
split only. For variable \(v\), we compute
\[
\mu_v =
\frac{1}{N_{\mathrm{train}}}
\sum_{(h,y)\in \mathcal{D}_{\mathrm{train}}}
y_{h,v},
\qquad
\sigma_v^2 =
\frac{1}{N_{\mathrm{train}}}
\sum_{(h,y)\in \mathcal{D}_{\mathrm{train}}}
(y_{h,v}-\mu_v)^2,
\]
and transform observations as
\[
\tilde y_{h,v} =
\frac{y_{h,v}-\mu_v}{\sigma_v}.
\]
All losses during training are computed in the standardised space, while
raw-scale metrics are obtained by transforming predictions back to physical
units:
\[
\hat y_{h,v} =
\sigma_v \hat{\tilde y}_{h,v} + \mu_v.
\]

The data are split chronologically by year. We use June trajectories from
2009--2020 for training, 2021--2022 for validation, and 2023--2025 for
testing. Years with severe missingness or missing key variables are excluded.
Forecasting windows are generated independently within each year and are not
allowed to cross missing gaps or year boundaries. Each task is defined by a
historical context window and a future target window:
\[
D_C =
\{(x_h,\tilde y_h): h \in [s, s+L)\},
\]
\[
D_T =
\{(x_h,\tilde y_h): h \in [s+L, s+L+T)\},
\]
where \(L=48\) hours is the look-back length and \(T=12\) hours is the forecast
horizon. Training, validation, and testing windows are generated with strides
of 6, 24, and 24 hours, respectively.

Forecasting models are trained with an MSE loss in the standardised space for
12 epochs, using 80 optimisation steps per epoch, with batch size 16 and
evaluation batch size 32. We use learning rate \(5\times 10^{-4}\), weight decay
\(10^{-4}\), and random seed 0. Model selection is based on validation
raw-scale RMSE. Table~\ref{tab:chimet-dataset-summary} summarises the resulting
benchmark protocol, and
Figures~\ref{fig:chimet-june-2025-five-variables-5min}
and~\ref{fig:chimet-june-2025-five-variables} show the June 2025 trajectory at
the raw 5-minute and resampled 1-hour resolutions, respectively.

\paragraph{Hyperparameters.}
For the other compared models, we follow the corresponding hyperparameter
settings used in the preceding real-world time-series experiments, while using
the shared Chimet data and training protocol described above.
Table~\ref{tab:chimet-parameter-counts} reports the learnable parameter counts
for the compared models, and Table~\ref{tab:chimet-ours-hyperparameters} lists
the Chimet-specific hyperparameters of our model.

\begin{table}[!htbp]
\centering
\caption{Summary of the Chimet climate forecasting dataset and training protocol.}
\label{tab:chimet-dataset-summary}
\scriptsize
\renewcommand{\arraystretch}{0.94}
\begin{tabular}{p{0.28\textwidth}p{0.62\textwidth}}
\toprule
Property & Value \\
\midrule
Location & Chichester Harbour, UK \\
Temporal coverage & June of each year from 2009 to 2025, excluding 2014 and
2019 \\
Raw resolution & Approximately 5 minutes \\
Raw observations & 144,474 \\
Resampled resolution & 1 hour \\
Variables & Wind speed (WSPD), air temperature (ATMP), tidal depth (DEPTH),
average wave height (AVWHT), barometric pressure (BARO) \\
Forecasting setup & \(L=48\) hours of context and \(T=12\) hours of prediction \\
Splits & Train: 2009--2020, excluding 2014 and 2019; validation: 2021--2022;
test: 2023--2025 \\
Window strides & Train: 6 hours; validation: 24 hours; test: 24 hours \\
Loss & MSE \\
Training epochs & 12 \\
Steps per epoch & 80 \\
Batch size & 16 \\
Evaluation batch size & 32 \\
Learning rate & \(5\times 10^{-4}\) \\
Weight decay & \(10^{-4}\) \\
Selection metric & Validation raw-scale RMSE \\
\bottomrule
\end{tabular}

\vspace{0.65em}

\caption{Learnable parameter counts for the Chimet climate forecasting
experiment. Values are reported in millions.}
\label{tab:chimet-parameter-counts}
\resizebox{\textwidth}{!}{%
\begin{tabular}{lccccccccccc}
\toprule
 & NP & CNP & ANP & CANP & BNP & BANP & ConvCNP & SConvCNP & TNP & TETNP & STNP \\
\midrule
Parameters (M) & 0.2 & 0.2 & 1.1 & 1.0 & 0.2 & 1.1 & 1.1 & 6.6 & 1.1 & 1.1 & 0.9 \\
\bottomrule
\end{tabular}}

\vspace{0.65em}

\caption{Chimet-specific hyperparameters of our model.}
\label{tab:chimet-ours-hyperparameters}
\scriptsize
\renewcommand{\arraystretch}{0.92}
\begin{tabular}{p{0.43\textwidth}p{0.43\textwidth}}
\toprule
Hyperparameter & Value \\
\midrule
\multicolumn{2}{l}{\textit{Spectral branch}} \\
Branch features & ATMP, DEPTH \\
Frequency-grid points & \([128,128]\) \\
Mixture components & \([2,2]\) \\
Samples per component & \([4,4]\) \\
Period range for ATMP & 18--30 hours \\
Period range for DEPTH & 10--16 hours \\
Frequency-grid spacing & Logarithmic \\
Phase shift & Disabled \\
Minimum standard deviation \(\sigma_{\min}\) & \(10^{-3}\) \\
\midrule
\multicolumn{2}{l}{\textit{Convolutional branch}} \\
Convolutional channels & 64 \\
Convolutional layers & 3 \\
Kernel size & 5 \\
Dilation growth & 1 \\
Convolutional dropout & 0.1 \\
Convolutional layer normalisation & Disabled \\
\midrule
\multicolumn{2}{l}{\textit{Transformer backbone}} \\
Input dimension \(d_x\) & 1 \\
Output dimension \(d_y\) & 5 \\
Model dimension \(d_{\mathrm{model}}\) & 128 \\
Embedding depth & 3 \\
Feed-forward dimension & 256 \\
Attention heads & 4 \\
Transformer layers & 6 \\
Dropout & 0.1 \\
Bounded output standard deviation & Enabled \\
Layer normalisation & Enabled \\
\bottomrule
\end{tabular}
\end{table}

\begin{figure}[!p]
\centering
\includegraphics[width=0.88\textwidth]{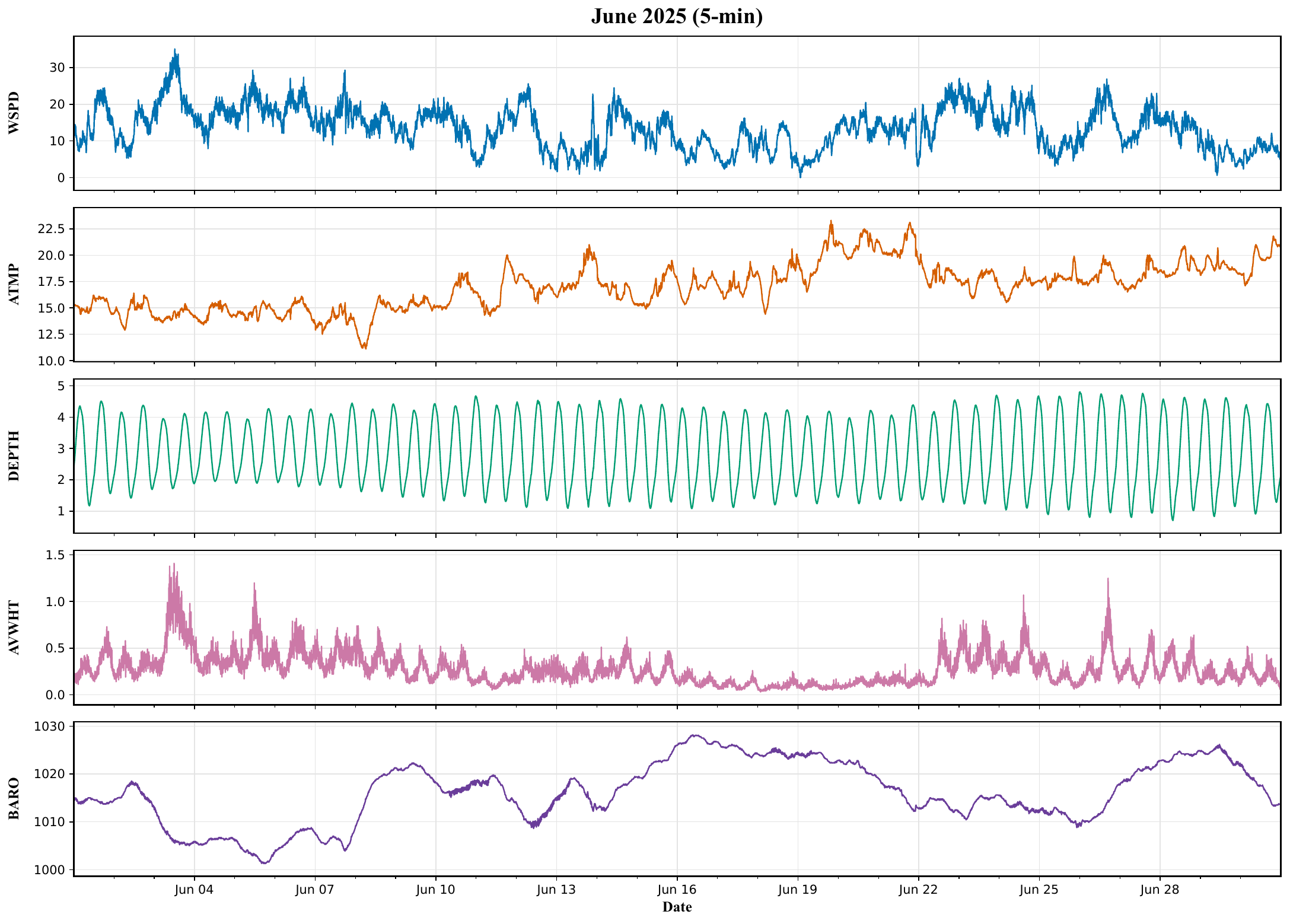}
\vspace{-0.4em}
\caption{Chimet climate observations for June 2025 at the raw 5-minute
resolution. The five panels show air temperature, wind speed, tidal depth,
average wave height, and barometric pressure.}
\label{fig:chimet-june-2025-five-variables-5min}
\vspace{0.4em}
\includegraphics[width=0.88\textwidth]{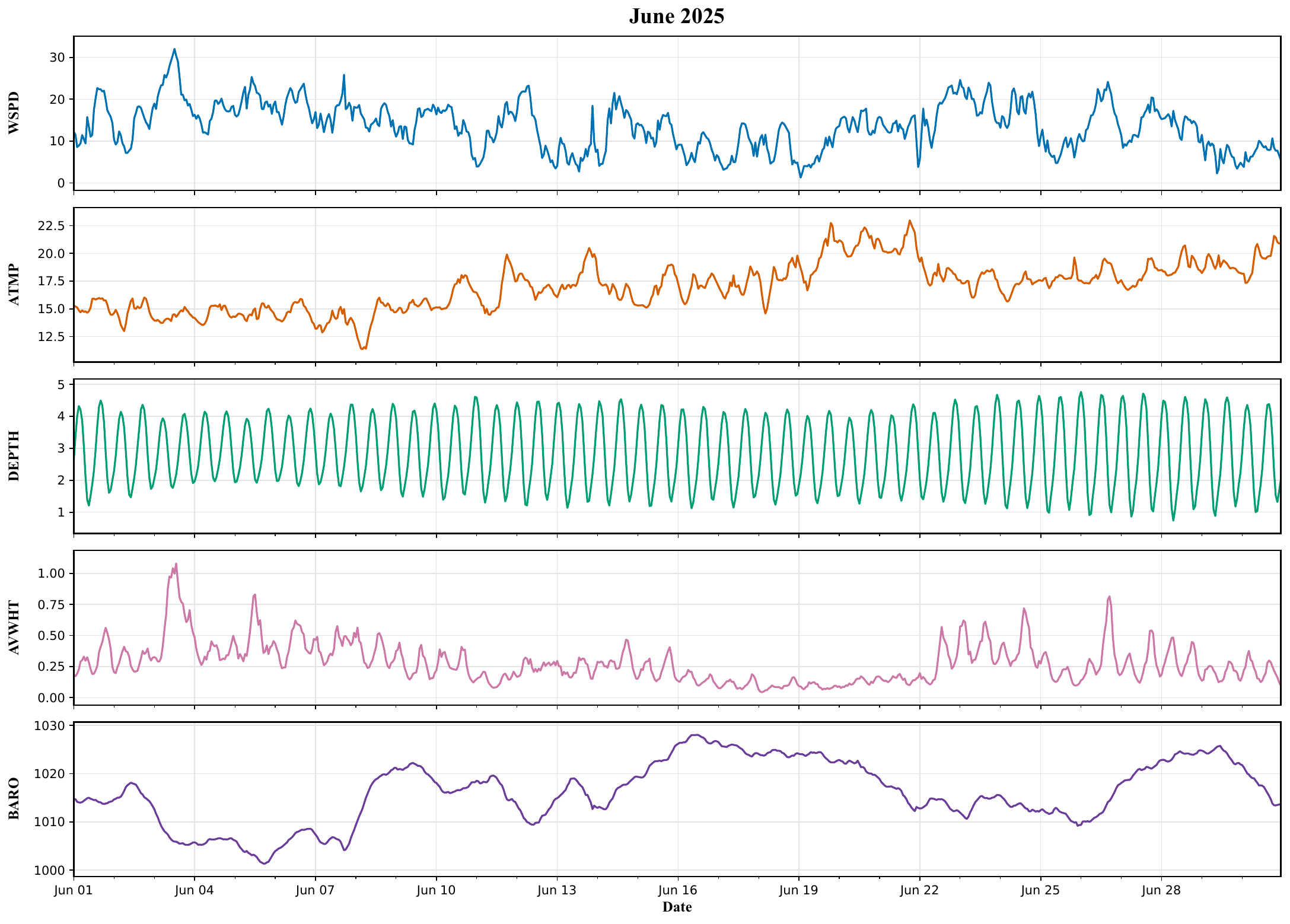}
\vspace{-0.4em}
\caption{Chimet climate observations for June 2025 after resampling to a
1-hour resolution. The five panels show air temperature, wind speed, tidal
depth, average wave height, and barometric pressure.}
\label{fig:chimet-june-2025-five-variables}
\end{figure}

\clearpage
\subsection{Pseudocode}

In Algorithms~1 and~2, the normalised frequency coordinate is used only as an
input feature for the responsibility network; the sampled frequencies used in
the final sinusoidal features remain in the original frequency scale. For
multi-output settings with selected channels, Algorithm~2 is applied either to
the shared selected-channel summary or independently to each selected channel.

\begin{algorithm}[H]
\caption{One-dimensional spectral embedding}
\footnotesize
\begin{algorithmic}[1]
\Require Context \(\mathcal{D}_C=\{(x_i,y_i)\}_{i=1}^{M}\), token inputs \(\{x_n\}_{n=1}^{N}\), frequency grid \(\{\tilde{\omega}_k\}_{k=1}^{K}\), components \(Q\), samples \(D_0\)
\Ensure Mixture parameters \(\theta_{\mathcal{D}_C}=\{(w_q,\mu_q,\sigma_q^2,\phi_q)\}_{q=1}^{Q}\) and features \(\{\phi_S(x_n)\}_{n=1}^{N}\)
\State Set \(\omega_{\max}\gets \max_{1\le j\le K}|\tilde{\omega}_j|\)
\For{$k=1$ to $K$}
    \State Compute \(a_k,b_k\) and \(E_k=a_k^2+b_k^2+\varepsilon\)
    \State Set \(\mathbf S_k=[\log E_k,a_k/\sqrt{E_k},b_k/\sqrt{E_k},\tilde{\omega}_k/\omega_{\max}]\) and \(p_k \propto E_k\)
\EndFor
\State Normalise \(p_k=\mathrm{softmax}_k(\log E_k)\)
\State Compute responsibility logits \(\ell_{qk}\) from \(\{\mathbf S_k\}_{k=1}^{K}\)
\State Set \(r_{qk}=\mathrm{softmax}_q(\ell_{qk})\)
\For{$q=1$ to $Q$}
    \State Compute \(w_q,\mu_q,\sigma_q^2\) from \(\{r_{qk},p_k,\tilde{\omega}_k\}_{k=1}^{K}\)
    \State Compute phase \(\phi_q\) from \(z_q=\sum_k \frac{r_{qk}p_k}{w_q}u_k\), where \(u_k=(a_k+\mathrm{i}b_k)/\sqrt{E_k}\)
    \For{$d=1$ to $D_0$}
        \State Sample \(\omega_{q,d} \sim \mathcal{N}(\mu_q,\sigma_q^2)\)
    \EndFor
\EndFor
\For{$n=1$ to $N$}
    \State Evaluate \(\phi_S(x_n)\) using \(\{w_q,\omega_{q,d},\phi_q\}_{q,d}\)
\EndFor
\State \Return \(\theta_{\mathcal{D}_C}\) and \(\{\phi_S(x_n)\}_{n=1}^{N}\)
\end{algorithmic}
\end{algorithm}

\begin{algorithm}[H]
\caption{Multi-dimensional spectral embedding}
\footnotesize
\begin{algorithmic}[1]
\Require Context \(\mathcal{D}_C=\{(\mathbf x_i,y_i)\}_{i=1}^{M}\), token inputs \(\{\mathbf x_n\}_{n=1}^{N}\), selected input coordinates \(\mathbf x_i^{\mathcal P}\in\mathbb R^{d_p}\), frequency grid \(\{\tilde{\boldsymbol\omega}_k\}_{k=1}^{K}\subset\mathbb R^{d_p}\), components \(Q\), samples \(D_0\)
\Ensure Mixture parameters \(\theta_{\mathcal{D}_C}=\{(w_q,\boldsymbol\mu_q,\mathbf\Sigma_q,\phi_q)\}_{q=1}^{Q}\) and features \(\{\phi_S(\mathbf x_n)\}_{n=1}^{N}\)
\State Set \(\omega_{\max}\gets \max_{1\le j\le K}\|\tilde{\boldsymbol\omega}_j\|_2\)
\State Centre the responses: \(y_i^{(s)} \gets y_i-\bar y\)
\For{$k=1$ to $K$}
    \State Compute \(a_k,b_k\) from projections onto \(\cos(\tilde{\boldsymbol\omega}_k^\top \mathbf x_i^{\mathcal P})\) and \(\sin(\tilde{\boldsymbol\omega}_k^\top \mathbf x_i^{\mathcal P})\)
    \State Set \(E_k \gets a_k^2+b_k^2+\varepsilon\), \(p_k \propto E_k\), and \(\mathbf S_k=[\log E_k,a_k/\sqrt{E_k},b_k/\sqrt{E_k},\tilde{\boldsymbol\omega}_k/\omega_{\max}]\)
\EndFor
\State Normalise \(p_k=\mathrm{softmax}_k(\log E_k)\)
\State Compute responsibility logits \(\ell_{qk}\) from \(\{\mathbf S_k\}_{k=1}^{K}\)
\State Set \(r_{qk}=\mathrm{softmax}_q(\ell_{qk})\)
\For{$q=1$ to $Q$}
    \State Compute
    \[
    w_q \gets \sum_{k=1}^{K} r_{qk}p_k,
    \qquad
    \boldsymbol\mu_q \gets \frac{\sum_{k=1}^{K} r_{qk}p_k\,\tilde{\boldsymbol\omega}_k}{w_q}
    \]
    \State Compute covariance \(\mathbf\Sigma_q\) from \(\{r_{qk},p_k,\tilde{\boldsymbol\omega}_k\}_{k=1}^{K}\)
    \State Compute phase \(\phi_q\) from \(z_q=\sum_k \frac{r_{qk}p_k}{w_q}u_k\), where \(u_k=(a_k+\mathrm{i}b_k)/\sqrt{E_k}\)
    \For{$d=1$ to $D_0$}
        \State Sample \(\boldsymbol\omega_{q,d} \sim \mathcal N(\boldsymbol\mu_q,\mathbf\Sigma_q)\)
    \EndFor
\EndFor
\For{$n=1$ to $N$}
    \State Evaluate \(\phi_S(\mathbf x_n)\) using \(\{w_q,\boldsymbol\omega_{q,d},\phi_q\}_{q,d}\) and \(\mathbf x_n^{\mathcal P}\)
\EndFor
\State \Return \(\theta_{\mathcal{D}_C}\) and \(\{\phi_S(\mathbf x_n)\}_{n=1}^{N}\)
\end{algorithmic}
\end{algorithm}

\subsection{Extended Related Work}
\label{app:extended-related-work}

\subsubsection{Neural Processes}
\label{app:extended-related-work-neural-processes}

\paragraph{Translation-equivariant Transformer Neural Processes.}
Translation-equivariant Transformer Neural Processes (TETNPs)
\cite{Ashman2024TranslationET} extend the TNP family by replacing standard
multi-head self-attention and cross-attention blocks with translation-equivariant
attention blocks. A standard TNP embeds context and target tokens and processes
them with a masked Transformer, where target tokens attend to context tokens but
not to other targets. TETNP keeps this encoder--decoder structure, but makes
attention scores depend on relative input offsets rather than absolute input
locations. For tokens associated with locations \(x_i\) and \(x_j\), TETNP
augments attention with pairwise differences
\[
\Delta x_{ij}=x_i-x_j .
\]
Thus, attention weights are functions of token features and relative offsets.
Shifting all inputs by a common \(\Delta\) leaves all \(\Delta x_{ij}\)
unchanged and shifts the predictive process consistently. For each target
\(i>M\), the final target representation is decoded as in TNP,
\[
(\mu_i,\sigma_i)=g_{\mathrm{pred}}(e_i^{(L)}),\qquad
p_\theta(y_i\mid x_i,\mathcal{D}_C)=
\mathcal N(\mu_i,\operatorname{diag}(\sigma_i^2)).
\]
This design gives a strong inductive bias for stationary or translationally
structured data, but it does not explicitly model frequency-domain structure or
task-specific periodic spectra.

\paragraph{Convolutional Conditional Neural Processes.}
Convolutional Conditional Neural Processes (ConvCNPs)
\cite{Gordon2019ConvolutionalCN} introduce translation equivariance into CNPs by
replacing finite-dimensional set aggregation~\cite{Zaheer2017DeepS} with a functional representation of
the context set. Given
\(\mathcal{D}_C=\{(x_i,y_i)\}_{i=1}^{M}\), a typical scalar-output construction
embeds the context into a function \(h\) over the input domain:
\[
h^{(0)}(z)=\sum_{i=1}^{M}\psi(z-x_i),
\qquad
h^{(1)}(z)=
\frac{\sum_{i=1}^{M}y_i\psi(z-x_i)}
{\sum_{i=1}^{M}\psi(z-x_i)+\epsilon},
\]
where \(\psi\) is usually a positive kernel and \(h^{(0)}\) is a density channel
indicating where context observations are available. The representation
\(h(z)=[h^{(0)}(z),h^{(1)}(z)]\) is evaluated on a regular grid, processed by a
CNN or U-Net, interpolated back to target locations, and decoded into
\((\mu_i,\sigma_i)\). Because both context embedding and interpolation depend on
relative offsets, and the intermediate representation is processed by
convolutional layers, ConvCNP is translation equivariant. Its main limitation is
that the convolutional backbone relies on local spatial kernels, so capturing
long-range or highly oscillatory structure may require large receptive fields or
deep CNNs.

\paragraph{Spectral Convolutional Conditional Neural Processes.}
Spectral Convolutional Conditional Neural Processes (SConvCNPs)
\cite{Mohseni2024SpectralCC} keep the ConvCNP pipeline of functional context
embedding, grid evaluation, and target interpolation, but replace the spatial
CNN/U-Net backbone with a Fourier-neural-operator-style spectral convolution
module. Let \(u^{(\ell)}\) denote the grid representation at layer \(\ell\). A
spectral convolution layer applies a Fourier transform on the grid, multiplies
the retained modes by learned complex-valued weights, and maps the result back
to the spatial domain:
\[
u^{(\ell+1)}
=
\eta\!\left(
W_\ell u^{(\ell)}
+
\mathcal F^{-1}\!\left(R_\ell \odot \mathcal F u^{(\ell)}\right)
\right),
\]
where \(R_\ell\) parameterises a finite number of retained Fourier modes and
\(\eta\) is a pointwise nonlinearity. This gives SConvCNP a global
convolutional receptive field while preserving the ConvCNP-style functional
representation and target interpolation. Compared with ConvCNP, SConvCNP is
better suited to global or long-range spatial interactions, but its spectral
filters are learned model parameters rather than task-adaptive spectral mixtures
inferred from \(\mathcal{D}_C\). Thus, unlike STNP, it does not explicitly
amortise a context-specific spectral mixture kernel.

\paragraph{Biased Scan Attention Transformer Neural Processes.}
Biased Scan Attention Transformer Neural Processes (BSA-TNPs)
\cite{Jenson2025ScalableSI} are TNP-style architectures designed for scalable
spatiotemporal neural processes. Given an episode
\((\mathcal{D}_C,\mathcal D_T)\), BSA-TNP constructs tokens using an observation
indicator \(o_i=\mathbf 1[i\leq M]\), the input \(x_i\), and the value
\(\tilde y_i=y_i\) for context points and \(\tilde y_i=0\) for target queries.
For spatiotemporal data, the input may be decomposed as
\(x_i=(u_i,s_i,t_i)\), where \(u_i\) denotes fixed covariates, \(s_i\) a spatial
location, and \(t_i\) a temporal index. The main computational unit is a kernel
regression block, where context tokens attend to context tokens and target
tokens attend only to context tokens. In attention head \(h\), the attention
logit can be written as
\[
\ell_{ij}^{(h)}
=
\frac{(q_i^{(h)})^\top k_j^{(h)}}{\sqrt{d_h}}
+
B_{ij}^{(h)},
\]
where \(B_{ij}^{(h)}\) is a group-invariant bias. For spatial or temporal
translation invariance, BSA-TNP uses RBF-style biases such as
\[
b(s_i,s_j)
=
\sum_{m=1}^{M_b}
\alpha_m \exp\!\left(-\beta_m \|s_i-s_j\|^2\right),
\]
with analogous terms for time. Since these biases depend only on relative
distances, shifting all spatial or temporal inputs leaves the bias unchanged.
This gives BSA-TNP a strong translation-invariant Gaussian-kernel attention
prior, but the bias is fixed by learned RBF components rather than inferred as a
task-adaptive spectral mixture from \(\mathcal{D}_C\).

\paragraph{Distance-informed Neural Processes.}
Distance-informed Neural Processes (DINPs) \cite{Venkataramanan2025DistanceinformedNP}
augment latent NPs with both a global latent path and a distance-aware local
latent path. The global path follows the usual latent NP formulation: the
context set is encoded into a mean-pooled summary \(s_C\), which parameterises a
Gaussian prior
\[
p_{\theta_G}(z_G\mid x_C,y_C)
=
\mathcal N\!\left(\mu_{\theta_G}(s_C),\Sigma_{\theta_G}(s_C)\right),
\]
where \(z_G\) captures task-level uncertainty. The local path introduces a
target-specific latent variable \(z_i\) for each target input \(x_i\). It first
maps inputs into a latent metric space using \(u_i=h(x_i)\), and regularises
\(h\) to be approximately bi-Lipschitz:
\[
L_1 d_X(x_i,x_j)
\leq
d_U(h(x_i),h(x_j))
\leq
L_2 d_X(x_i,x_j).
\]
The resulting distance-preserving embeddings define target-to-context attention
weights, for \(i>M\) and \(j\leq M\),
\[
\alpha_{ij}
=
\frac{\exp(-\|u_i-u_j\|/\sqrt{d_u})}
{\sum_{j'\leq M}\exp(-\|u_i-u_{j'}\|/\sqrt{d_u})},
\]
which parameterise a local latent prior through attention-weighted context
statistics. The decoder predicts from both latent variables,
\[
p_\theta(y_i\mid x_i,\mathcal{D}_C)
=
\iint
p_\theta(y_i\mid x_i,z_i,z_G)
p_{\theta_L}(z_i\mid x_i,\mathcal{D}_C)
p_{\theta_G}(z_G\mid \mathcal{D}_C)
\,dz_i\,dz_G .
\]
DINP mainly improves local similarity modelling and uncertainty calibration by
preserving input-space distances in the latent space. In contrast to STNP, it
does not explicitly extract frequency-domain structure from \(\mathcal{D}_C\) or
induce a task-adaptive spectral-mixture kernel.

\subsubsection{Transformers Implement In-Context Learning}
\label{app:extended-related-work-icl}

Large language models first popularised the phenomenon of in-context learning,
where a model adapts to a new task from demonstrations in the input sequence
without updating its parameters \cite{Brown2020LanguageMA}. In the
function-learning view, an in-context prompt can be written analogously to the
episodic notation used in Neural Processes: a context set
\(\mathcal{D}_C=\{(x_j,y_j)\}_{j=1}^{M}\) is followed by a query input \(x_i\),
and the Transformer produces a prediction for \(y_i\), corresponding to a
conditional predictor \(p_\theta(y_i\mid x_i,\mathcal{D}_C)\). This setting has
been studied in controlled function classes \cite{Garg2022WhatCT}, showing that
Transformers trained from scratch can in-context learn unseen linear functions,
sparse linear functions, two-layer neural networks, and decision trees. Linear
regression studies further show that Transformers can implement standard
learning algorithms, including gradient descent and closed-form ridge regression,
while trained in-context learners can resemble least-squares, ridge, or Bayesian
estimators depending on model and data regimes \cite{Akyrek2022WhatLA}.

A complementary line of work interprets self-attention as an implicit
optimisation procedure. A linear self-attention layer can implement a
transformation equivalent to one step of gradient descent on a regression loss,
linking autoregressive Transformer training to gradient-based meta-learning
\cite{Oswald2022TransformersLI}. GPT-style in-context learning has also been
interpreted as implicit fine-tuning or meta-optimisation, where attention
produces meta-gradients from demonstration examples \cite{Dai2022WhyCG}. Beyond
expressivity, subsequent analyses ask whether Transformers can learn such
algorithms from random problem instances, proving that the global minimum of a
one-layer linear Transformer implements one step of preconditioned gradient
descent and that multi-layer critical points can implement multiple such steps
\cite{Ahn2023TransformersLT}. This view extends to nonlinear function learning,
where nonlinear Transformers can implement functional gradient descent in context
\cite{Cheng2023TransformersIF}.

The algorithmic perspective has also been broadened to show that Transformers
can implement a range of statistical learning procedures in context, including
least squares, ridge regression, Lasso, generalised linear model learning, and
gradient descent for neural networks \cite{Bai2023TransformersAS}. More
importantly, a single Transformer can perform in-context algorithm selection,
adaptively choosing different procedures for different input sequences. These
results are closely related to Transformer Neural Processes, which also treat
prediction as sequence-conditioned inference from \(\mathcal{D}_C\) to
\(\mathcal D_T\). However, most theoretical work on in-context learning focuses
on generic regression, algorithm simulation, or statistical estimation, rather
than on designing structural inductive biases for specific data regimes.

\end{document}